\newcommand{\argmin}{\mathop{\mathrm{argmin}}}
\newcommand{\Eqref}[1]{Eq. \eqref{#1}}
\newcommand{\boldone}{{\boldsymbol{1}}}
\newcommand{\calA}{{\mathcal A}}
\newcommand{\calB}{{\mathcal B}}
\newcommand{\calF}{{\mathcal F}}
\newcommand{\calG}{{\mathcal G}}
\newcommand{\calM}{{\mathcal M}}
\newcommand{\calN}{{\mathcal N}}
\newcommand{\calH}{{\mathcal H}}
\newcommand{\fhat}{\widehat{f}}
\newcommand{\fcheck}{\check{f}}
\newcommand{\fbar}{\bar{f}}
\newcommand{\LPi}{L^2} %(P_{\calX})}
\newcommand{\LPiPx}{L^2(P_X)}
\newcommand{\Real}{\mathbb{R}}
\newcommand{\Natural}{\mathbb{N}}
\newcommand{\EE}{\mathrm{E}}
\newcommand{\dd}{\mathrm{d}}
\newcommand{\fstar}{f^{\ast}}
\newcommand{\ftrue}{f^{\mathrm{o}}}
\def\I<#1>{\left\langle #1 \right\rangle}
\def\i<#1>{\left\langle #1 \right\rangle}
\newtheorem{Theorem}{Theorem}
\newtheorem{Lemma}{Lemma}
\newtheorem{Definition}{Definition}
\newtheorem{Proposition}{Proposition}
\newtheorem{Remark}{Remark}
\newcommand{\Well}[1]{W^{(#1)}}
\newcommand{\bell}[1]{b^{(#1)}}
\newcommand{\Holder}{H\"{o}lder }
\newcommand{\Otilde}{\tilde{O}}
\newcommand{\calC}{\mathcal{C}}
\newcommand{\poly}{\mathrm{poly}}
\newcommand{\MB}{M\!B}
\title{Adaptivity of deep ReLU network
for learning in Besov and mixed smooth Besov spaces: \\
optimal rate and curse of dimensionality}
\author{
Taiji Suzuki \\
The University of Tokyo, Tokyo, Japan \\
Center for Advanced Intelligence Project, RIKEN \\
Japan Digital Design \\
\texttt{taiji@mist.i.u-tokyo.ac.jp} \\
}
\begin{document}

\maketitle

\begin{abstract}
Deep learning has shown high performances in various types of tasks from visual recognition to natural language processing,
which indicates superior flexibility and adaptivity of deep learning.
To understand this phenomenon theoretically,
we develop a new approximation and estimation error analysis of 
deep learning with the ReLU activation for functions 
in a Besov space and its variant with mixed smoothness.
The Besov space is a considerably general function space including the \Holder space and Sobolev space,
and especially can capture spatial inhomogeneity of smoothness.
Through the analysis in the Besov space,  
it is shown that deep learning can achieve the minimax optimal rate and 
outperform 
any non-adaptive (linear) estimator such as kernel ridge regression,
which shows that deep learning has higher adaptivity to 
the spatial inhomogeneity of the target function than 
other estimators such as linear ones.
In addition to this, 
it is shown that deep learning can avoid 
the curse of dimensionality
if the target function is in a {\it mixed smooth} Besov space.
We also show that the dependency of the convergence rate on the dimensionality is tight
due to its minimax optimality. 
These results support high adaptivity of deep learning and its superior ability as a feature extractor.
%from a theoretical view point.
%Our theoretical results support importance of deep neural network's ability as a feature extractor. 
\end{abstract}

\section{Introduction}

Deep learning has shown great success in several applications such as 
computer vision and natural language processing.
As its application range is getting wider, 
theoretical analysis to reveal the reason why deep learning works so well 
is also gathering much attention.
To understand deep learning theoretically, 
several studies have been developed from several aspects such as approximation theory and statistical learning theory.
A remarkable property of neural network is that 
it has universal approximation capability 
even if there is only one hidden layer \citep{cybenko1989approximation,hornik1991approximation,sonoda2015neural}.
Thanks to this property, deep and shallow neural networks can approximate 
any function with any precision (of course, the meaning of the terminology ``any'' must be rigorously defined 
like ``any function in $L^1(\Real)$'').
A natural question coming next to the universal approximation capability is its expressive power.
It is shown that 
the expressive power of deep neural network grows exponentially against the number of layers 
\citep{NIPS2014_5422,bianchini2014complexity,cohen2016expressive,ICML:Cohen+Shashua:2016,NIPS:Poole+etal:2016}
where the ``expressive power'' is defined by several ways.

The expressive power of neural network can be analyzed more precisely 
by specifying the target function's property such as smoothness.
%and evaluating how well the neural network can approximate those functions.
\citet{barron1993universal,barron1994approximation} developed 
an approximation theory for functions having limited ``capacity''
that is measured by integrability of their Fourier transform.
An interesting point of the analysis is that the approximation error is not affected by the dimensionality of the input.
%, that is, a target function can be approximated with precision $O(1/\sqrt{m})$ by $m$ nodes with respect to $L^2$-norm.
%: $\|f - \fhat\|_{L^2} = O(1/\sqrt{m})$
%where $f$ is the target function and $\fhat$ is a function realized by a neural network.
This observation matches the experimental observations that deep learning is quite effective also in high dimensional situations.
Another typical approach is to analyze 
function spaces with smoothness conditions such as the \Holder space. % and Sobolev space.
%However, these analysis basically shows $O(m^{-\frac{s}{d}})$ convergence rate which is strongly affected by 
%the dimensionality $d$ of the input, indicating the curse of dimensionality (here, $s$ indicates smoothness).
In particular, deep neural network with the ReLU activation \citep{nair2010rectified,glorot2011deep} has been extensively studied recently
from the view point of its expressive power and its generalization error.
For example, \cite{DBLP:journals/corr/Yarotsky16} derived
the approximation error of the deep network with the ReLU activation 
for functions in the \Holder space.
%achieves 
%$\tilde{O}(N^{-\frac{s}{d}})$ (where $\tilde{O}(\cdot)$ indicates order upto logarithmic factor
%and $N$ is roughly the number of parameters)
\cite{2017arXiv170806633S} evaluated the estimation error of regularized least squared estimator performed by
deep ReLU network based on this approximation error analysis in a nonparametric regression setting.
\cite{petersen2017optimal} generalized the analysis by \cite{DBLP:journals/corr/Yarotsky16} to 
the class of {\it piece-wise} smooth functions.
\cite{arXiv:Imaizumi+Fukumizu:2018} utilized this analysis to derive the estimation error to estimate
the piece-wise smooth function and concluded that deep leaning can outperform linear estimators in that setting;
here, the {\it linear method} indicates an estimator which is linearly dependent on the output observations $(y_1,\dots,y_n)$
(it could be nonlinearly dependent on the input $(x_1, \dots,x_n)$; for example, 
the kernel ridge regression depends on the output observations linearly, but it is nonlinearly dependent on the inputs).
% based on the approximation error analysis of \cite{petersen2017optimal}.
Although these error analyses are standard from a nonparametric statistics view point
and the derived rates are known to be (near) minimax optimal, 
the analysis is rather limited because the analyses are given mainly based on the \Holder space.
However, there are several other function spaces such as the Sobolev space and 
the space of finite total variations.
A comprehensive analysis to deal with such function classes from a unified view point is required.

In this paper, we give generalization error bounds of %unifying bounds of approximation ability and estimation accuracy of 
deep ReLU networks for a {\it Besov space} and its variant with {\it mixed smoothness},
which includes the \Holder space, the Sobolev space, and the function class with total variation as special cases. 
By doing so, (i) we show that deep learning achieves the minimax optimal rate on
the Besov space
and notably it outperforms {\it any linear estimator} such as the kernel ridge regression,
and (ii)  we show that deep learning can {\it avoid the curse of dimensionality} 
on the mixed smooth Besov space and achieves the minimax optimal rate.
%Here, a natural question arises; how the analysis can be generalized and what kind of assumptions is essential to avoid the curse of dimensionality.
%adaptivity and dimensionality...
%In this paper, (i) we first develop considerably general error analysis for approximating and estimating 
%functions in {\it Besov spaces} which includes the existing results for the \Holder space as a special case
%and show that that deep learning can achieve the minimax optimal rate and can 
%outperform non-adaptive/linear methods such as kernel ridge regression when the function has 
%spatially inhomogeneous property such as variable smoothness and bump/jumps.
As related work, \cite{mhaskar1992approximation,mhaskar1993approximation,chui1994neural,mhaskar1996neural,pinkus1999approximation}
also developed an approximation error analysis which essentially leads to analyses for Besov spaces.
However, the ReLU activation is basically excluded and 
comprehensive analyses for the Besov space have not been given. %the explicit forms for Besov spaces are not given.
Consequently, it has not been clear whether ReLU neural networks can outperform another 
representative methods such as kernel methods.
%Recently, \cite{bolcskei2017optimal} also proposed to use wavelet analysis for deep neural network analysis
%but again ReLU activation and explicit error bounds are not investigated in depth.
%
%
%Next, (ii) we develop a new approximation and estimation error bounds for deep neural networks with ReLU activation
%for a function class with {\it mixed smoothness}, namely, the Besov space with dominating mixed smoothness.
%For this function class, the approximation accuracy (and the estimation accuracy) by deep neural network
%is quite mildly affected by the dimensionality of the input. 
%More precisely, it only affects in poly-log order and does not appear in the exponent of the convergence rate.
%Consequently, it is shown that we can avoid the curse of dimensionality under the mixed smoothness condition.
%%Our theoretical analysis is developed in a considerably general setting,
%%where the (mixed smooth) Besov space is employed as the class of target function
%%while existing work mainly focused on \Holder or Sobolev space.
%The theory is developed on the basis of the {\it sparse grid} technique \cite{smolyak1963quadrature}
%which is closely related to wavelet analysis \cite{Mallat99a}.
%%In addition to the approximation error, 
%%we also developed a generalization error bound in a nonparametric regression setting.
%We see that the derived estimation error bound is minimax optimal up to logarithmic term.
%The relation between our work and existing work is summarized in Table \ref{tab:CompDeepLinearApprox}.
As a summary, the contribution of this paper is listed as follows:
\begin{itemize}
\item[(i)] To investigate adaptivity of deep learning, 
we give an explicit form of approximation and estimation error bounds 
for deep learning with the ReLU activation where the target functions are in the Besov spaces ($B_{p,q}^s$) for
$s > 0$ and $0 < p,q \leq \infty$ with $s > d(1/p - 1/r)_+$ where $L^r$-norm is used for error evaluation.
%It is shown that the derived error bound is better than non-adaptive methods and linear ones in terms 
%of both approximation and estimation errors. 
In particular, deep learning outperforms any linear estimator such as kernel ridge regression if 
the target function has highly spatial inhomogeneity of its smoothness. 
See Table \ref{tab:CompDeepLinearApprox} for the overview.

\item[(ii)] To investigate the effect of dimensionality, 
we analyze approximation and estimation problems in so-called the mixed smooth Besov space
by ReLU neural network.
It is shown that deep learning with the ReLU activation can avoid the curse of dimensionality and achieve 
the near minimax optimal rate.
The theory is developed on the basis of the {\it sparse grid} technique \citep{smolyak1963quadrature}.
%which is closely related to wavelet analysis \cite{Mallat99a}.
See Table \ref{tab:Summary} for the overview.
\end{itemize}

%Here, a natural question arises; what kind of assumptions is essential to avoid the curse of dimensionality.
%In this paper, we develop a new approximation error bound for deep neural networks with ReLU activation
%for a function class with {\it mixed smoothness}.
%For this function class, the approximation accuracy (and the estimation accuracy) by deep neural network
%is quite mildly affected by the dimensionality of the input. 
%More precisely, it only affects in poly-log order and does not appear in the exponent of the convergence rate.
%Consequently, it is shown that we can avoid the curse of dimensionality under the mixed smoothness condition.
%Our theoretical analysis is developed in a considerably general setting,
%where the (mixed smooth) Besov space is employed as the class of target function
%while existing work mainly focused on \Holder or Sobolev space.
%The theory is developed on the basis of the {\it sparse grid} technique \cite{smolyak1963quadrature}
%which is closely related to wavelet analysis \cite{Mallat99a}.
%In addition to the approximation error, 
%we also developed a generalization error bound in a nonparametric regression setting.
%We see that the derived bound is minimax optimal up to logarithmic term.
%The relation between our work and existing work is summarized in Table \ref{tab:Summary}.

%\pagebreak

\begin{table}[t]
\begin{center}

\caption{
Comparison between the performances achieved by deep learning and linear methods.
Here, $N$ is the number of parameters to approximate a function in a Besov space ($B^s_{p,q}([0,1]^d)$), and $n$ is the sample size.
%,
%$s$ is the smoothness parameter and $d$ is the dimensionality of the input.
The approximation error is measured by $L^r$-norm. %We let $v' = 2/(p \wedge 1) - 1$.
%``Linear method'' in the approximation error bound means any method that depends on $f$ linearly.
%That in the estimation error bound indicates any estimator that depends on the output observations $(y_i)_{i=1}^n$ linearly.
The $\tilde{O}$ symbol hides the poly-log order.
}
\label{tab:CompDeepLinearApprox}
\begin{tabular}{|l|l|l|l|}%c|c|c|c|}
\hline
Model   & Deep learning &  Linear method \\   \hline
Approximation error rate & $\tilde{O}(N^{-\frac{s}{d}})$ & $\tilde{O}\left(N^{-\frac{s}{d} + (\frac{1}{p} - \frac{1}{r})_+}\right)$ \\ \hline
Estimation error rate &  $\tilde{O}(n^{-\frac{2s}{2s + d}})$ &  
$\Omega\big(n^{- \frac{2s -  (2/(p \wedge 1) - 1)}{2s + 1 - (2/(p \wedge 1) - 1)}}\big)$ \\ \hline
\end{tabular} \\
\end{center}

\end{table}

\begin{table}[t]
\centering
\caption{Summary of relation between related existing work and our work for 
a mixed smooth Besov space.
$N$ is the number of parameters in the deep neural network,
$n$ is the sample size.
$\beta$ represents the smoothness parameter, and 
$d$ represents the dimensionality of the input.
%and $\alpha$ represents smoothness of the boundaries for piece-wise smooth function class.
The approximation accuracy is measured by $L^2$-norm and estimation accuracy is measures by 
the square of $L^2$-norm. 
See Theorem \ref{thm:EstimationErrorNN} for the definition of $u$.
%$^\dagger$ Our work measures the approximation accuracy by a general
%$L^q$-norm.
} % (include Barron)}
\label{tab:Summary}
%\begin{tabular}{|p{1.9cm}|p{1.8cm}|p{1.8cm}|p{1.4cm}|p{2.1cm}|p{2.4cm}|}%c|c|c|c|}
%\hline
%Function class   & { \Holder } &  { \Holder ~ (piecewise smooth)}  
%&
%Barron class
%&
% {
%Mixed smooth Sobolev \phantom{aaa} ($0 < \beta \leq 2$)}
%&  {Mixed smooth Besov \phantom{aaaaaa} ($0 < \beta$)} \\
%\hline
%\hline
%\multicolumn{6}{|l|}{\footnotesize Approximation}  
%\\
%\hline
%Author &  { \small \citet{DBLP:journals/corr/Yarotsky16}, \citet{liang2016deep}}
%& 
%\small \citet{petersen2017optimal}
%& \citet{barron1993universal} & 
%\small \citet{MontanelliDu2017}
%& This work  \\
%\hline
%Approx. error rate
% & $\Otilde(N^{-\frac{\beta}{d}})$ & 
%$\Otilde(N^{-\frac{\beta}{d}} \vee N^{-\frac{\alpha}{2(d-1)}} )$  &
%$\Otilde(N^{- 1/2})$ &
%$\Otilde(N^{- \beta})$ &
%$ \Otilde(N^{-\beta})$\\ %$^\dagger$\\
%\hline
%\hline
%\multicolumn{6}{|l|}{\footnotesize Estimation} 
% \\
%\hline
%Author  & \small \citet{2017arXiv170806633S} & \small \citet{arXiv:Imaizumi+Fukumizu:2018} &
%\citet{barron1993universal}
%&
%\multicolumn{1}{|c|}{----}
% &   This work  \\
%\hline
% Estimation. error rate
%& $\Otilde(n^{-\frac{2\beta}{2\beta + d}})$ 
%& $\Otilde(n^{-\frac{2\beta}{2\beta + d}}\vee n^{-\frac{\alpha}{\alpha + (d-1)}} )$  
%& $\Otilde(n^{-\frac{1}{2}})$ 
%& \multicolumn{1}{|c|}{----}
%&
%%$n^{- \frac{2s}{2s + 1}}\log(n)^{\frac{2(d-1)(u + s)}{1+2s}} \log(n)^2.
%%$
% $\Otilde(n^{-\frac{2\beta}{2\beta + 1}} \times \log(n)^{\frac{2(d-1)(u + \beta)}{1+2\beta}})$
%\\
%\hline
%\end{tabular}

\begin{tabular}{|p{2cm}|p{3cm}|p{2.2cm}|p{2.1cm}|p{2.7cm}|}%c|c|c|c|}
\hline
Function class   & { \Holder } %&  { \Holder ~ (piecewise smooth)}  
&
Barron class
&
 {
m-Sobolev \phantom{aaa} ($0 < \beta \leq 2$)}
&  {m-Besov \phantom{aaaaaa} ($0 < \beta$)} \\
\hline
\hline
\multicolumn{5}{|l|}{\footnotesize Approximation}  
\\
\hline
Author &  { \small \citet{DBLP:journals/corr/Yarotsky16}, \citet{liang2016deep}}
& \citet{barron1993universal} & 
\small \citet{MontanelliDu2017}
& This work  \\
\hline
Approx. error 
 & $\Otilde(N^{-\frac{\beta}{d}})$ & 
$\Otilde(N^{- 1/2})$ &
$\Otilde(N^{- \beta})$ &
$ \Otilde(N^{-\beta})$\\ %$^\dagger$\\
\hline
\hline
\multicolumn{5}{|l|}{\footnotesize Estimation} 
 \\
\hline
Author  & \small \citet{2017arXiv170806633S} 
& \citet{barron1993universal}
&
\multicolumn{1}{|c|}{----}
 &   This work  \\
\hline
 Estimation error
& $\Otilde(n^{-\frac{2\beta}{2\beta + d}})$ 
& $\Otilde(n^{-\frac{1}{2}})$ 
& \multicolumn{1}{|c|}{----}
&
%$n^{- \frac{2s}{2s + 1}}\log(n)^{\frac{2(d-1)(u + s)}{1+2s}} \log(n)^2.
%$
 $\Otilde(n^{-\frac{2\beta}{2\beta + 1}} \times \log(n)^{\frac{2(d-1)(u + \beta)}{1+2\beta}})$
\\
\hline
\end{tabular}

\end{table}

\section{Set up of function spaces}

%In this paper, we consider to approximate a function $f$ by a deep neural network with ReLU activation,
%and evaluate how accurately we can approximate a function under a parameter number constraint.
%Based on the approximation error bound, we also evaluate a generalization error bound.
%The main difficulty lies in evaluation of the approximation accuracy.
%To derive the error bound, we need to specify appropriate function classes and 
%quantify the ``difficulty'' of the approximation problem.

%$$
%y_i = \ftrue(x_i) + \xi_i~~~~(i=1,\dots,\ntr),
%$$

%\subsection{Function classes with smoothness}
\label{sec:FunctionClassDefinitions}

In this section, we define the function classes for which we develop error bounds. 
In particular, we define the Besov space and its variant with mixed smoothness.
The typical settings in statistical learning theory is to estimate a function with 
a {\it smoothness} condition.
There are several ways to characterize ``smoothness.''
Here, we summarize the definitions of representative functional spaces that are appropriate 
to define the smoothness assumption.
%Basically, the smoothness is defined 
%by how many times a function can be differentiated and how large is derivative is. \Holder class directly defines this notion as follows.

Let $\Omega \subset \Real^d$ be a domain of the functions.
Throughout this paper, we employ $\Omega = [0,1]^d$.
For a function $f:\Omega \to \Real$, let $\|f\|_p := \|f\|_{L^p(\Omega)} := (\int_\Omega |f|^p \dd x )^{1/p}$
for $0 < p < \infty$.
For $p=\infty$, we define $\|f\|_\infty :=\|f\|_{L^\infty(\Omega)} := \sup_{x \in \Omega} |f(x)|$.
For $\alpha \in \Real^d$, let $|\alpha| = \sum_{j=1}^d |\alpha_j|$.
Let $\calC^0(\Omega)$ be the set of continuous functions equipped with $L^\infty$-norm:
$\calC^0(\Omega) := \{ f : \Omega \to \Real \mid \text{$f$ is continuous and $\|f\|_\infty < \infty$} \}$
\footnote{Since $\Omega = [0,1]^d$ in our setting, the boundedness automatically follows from the continuity.}.
For $\alpha \in \Natural_+^d$, we denote by 
$
D^\alpha f(x) %= \frac{\partial f}{\partial^\alpha x}(x) 
= \frac{\partial^{|\alpha|} f}{\partial^{\alpha_1} x_1 \dots  \partial^{\alpha_d} x_d}(x)
$
\footnote{
We let $\Natural_+ := \{0,1,2,3,\dots\}$, $\Natural_+^d := \{(z_1,\dots,z_d) \mid z_i \in \Natural_+ \}$,
$\Real_+ := \{x \geq 0 \mid x \in \Real\}$, and $\Real_{++} := \{x > 0 \mid x \in \Real\}$.
}.

\begin{Definition}[\Holder space ($\calC^\beta(\Omega)$)]
Let $\beta > 0$ with $\beta \not \in \Natural$ be the smoothness parameter. 
For an  $m$ times differentiable function $f:\Real^d \to \Real$,
let the norm of the \Holder space $\calC^\beta(\Omega)$ be 
$
%\calC^\beta_d(K) = 
%\left\{f % :[0,1]^d\to \Real
%~\big |~ 
\|f\|_{\calC^\beta} := \max_{|\alpha| \leq m}
\big\|D^\alpha f\|_{\infty}
+ \max_{|\alpha| = m} \sup_{x,y \in\Omega}
\frac{|\partial^\alpha f(x) - \partial^\alpha f(y)|}{|x-y|^{\beta -m}}, %\leq K \right\}
$
where $m=\lfloor \beta \rfloor$.
Then,  ($\beta$-)\Holder space $\calC^\beta(\Omega)$ is defined as %a set of functions $f$ with finite norm 
$\calC^\beta(\Omega) = \{f \mid \|f\|_{\calC^\beta} < \infty\}$.
\end{Definition}

The parameter $\beta > 0$ controls the ``smoothness'' of the function.
Along with the \Holder space, the {\it Sobolev space} is also important.
%Let $\hat{f}$ be 
\begin{Definition}[Sobolev space $(W^k_p(\Omega))$] 
Sobolev space $(W^k_p(\Omega))$ with a regularity parameter $k \in \Natural$ and 
a parameter $1 \leq p \leq \infty$ is a set of functions such that 
the Sobolev norm 
$
\textstyle
%W^k_p(\Omega) = \left\{f: \Omega \to \Real \mid 
\|f\|_{W^k_p} := ( \sum_{|\alpha| \leq k}\|D^\alpha f\|^p_{p} )^{\frac{1}{p}}
%< \infty \right\}
$ is finite.
\end{Definition}
%If $p = \infty$ and $k \in \Natural$, it holds that $W^k_\infty(\Omega) = \calC^k(\Omega)$. 
There are some ways to define a Sobolev space with fractional order, 
one of which will be defined by using the notion of {\it interpolation space} \citep{adams2003sobolev}, 
but we don't pursue this direction here. % as explained later.
Finally, we introduce {\it Besov space} which further generalizes the definition of the Sobolev space.
To define the Besov space, we introduce the modulus of smoothness.
\begin{Definition}
For a function $f \in L^p(\Omega)$ for some $p \in (0,\infty]$,
the $r$-th modulus of smoothness of $f$ is defined by 
$$
w_{r,p}(f,t) = \sup_{\|h\|_2 \leq t} \|\Delta_h^r(f)\|_{p},
$$
where 
$
\Delta_h^r(f)(x) = 
\begin{cases} \sum_{j=0}^r {r \choose j} (-1)^{r-j} f(x+jh)~& ( x\in \Omega, ~ x+r h\in \Omega), \\
0 & (\text{otherwise}).
\end{cases}
$
\end{Definition}
Based on the modulus of smoothness, the Besov space is defined as in the following definition.
\begin{Definition}[Besov space ($B^\alpha_{p,q}(\Omega)$)]
For $0 < p,q \leq \infty$, $\alpha > 0$, $r:= \lfloor \alpha \rfloor + 1$, 
let the seminorm $|\cdot|_{B^\alpha_{p,q}}$ be 
$$
%\textstyle
|f|_{B^\alpha_{p,q}} :=  
\begin{cases}
\left(\int_0^\infty (t^{-\alpha} w_{r,p}(f,t))^q \frac{\dd t}{t} \right)^{\frac{1}{q}} & (q < \infty), \\
\sup_{t > 0} t^{-\alpha} w_{r,p}(f,t)  & (q = \infty).
\end{cases}
$$
%and 
%$|f|_{B^\alpha_{p,\infty}} := \sup_{t>0} t^{-\alpha} w_{r,p}(f,t) $ for $q=\infty$.
The norm of the Besov space $B_{p,q}^\alpha(\Omega)$ can be defined by 
$\|f\|_{B_{p,q}^\alpha} := \|f\|_{p} + |f|_{B^\alpha_{p,q}}$,
and we have $B^\alpha_{p,q}(\Omega) = \{f \in L^p(\Omega) \mid \|f\|_{B_{p,q}^\alpha} < \infty\}$.
\end{Definition}
Note that $p, q < 1$ is also allowed. In that setting, 
the Besov space is no longer a Banach space 
but a quasi-Banach space.
The Besov space plays an important role in several fields such as nonparametric statistical inference \citep{GineNickl2015mathematical} 
and approximation theory \citep{Book:Temlyakov:1993}.
These spaces are closely related to each other as follows \citep{triebel1983theory}: 
%One characterization is based on {\it real interpolation}.
%Suppose that there exist Banach spaces $\calH, \calG$ satisfying $\calG \hookrightarrow \calH$,
%then an interpolation space between those spaces is defined as  
%$[\calH, \calG]_{\theta, q}$ for $0\leq \theta \leq 1$ and $q \geq 1$ 
%which is equipped with the norm
%$$
%\|f \|_{[\calH, \calG]_{\theta, q}} := 
%\begin{cases}
%\displaystyle
%\int_0^\infty [ t^{-\theta }   \inf_{g \in \calG} \{ \|f - g\|_{\calH} + t \|g \|_{\calG} \}]^q \frac{\dd t}{t} & (q < \infty), \\
%\displaystyle
%\sup_{t > 0} t^{-\theta}  \inf_{g \in \calG} \{ \|f - g\|_{\calH} + t \|g \|_{\calG} \} & (q=\infty),
%\end{cases}
%$$
%where $\|\cdot\|_\calH$ and $\|\cdot\|_\calG$ are the norms in $\calH$ and $\calG$ respectively.
%Using this notation, the Sobolev space with fractional order is defined as 
%$W^s_p(\Omega) = [W^{m+1}_p(\Omega),W^{m}_p(\Omega)]_{\theta,p}$ for $s = m + \theta$, $0 \leq \theta \leq 1$,
%and $m \in \Natural$.
%Importantly, it holds that
%$
%\calG  \hookrightarrow [\calH, \calG]_{\theta, q}\hookrightarrow  \calH.$ 
%Then, 
%Actually, these spaces are related to each other as follows \cite{triebel1983theory}: %adams2003sobolev}:
%Suppose $\Omega \subset \Real$.
\begin{itemize}
\item For $m \in \mathbb{N}$, 
$B^m_{p,1}(\Omega) \hookrightarrow W_p^m(\Omega)   \hookrightarrow B^m_{p,\infty}(\Omega),$ and 
$B^m_{2,2}(\Omega) = W_2^m(\Omega)$. 
%\begin{align*}
%& B^m_{p,1}(\Omega) \hookrightarrow W_p^m(\Omega)   \hookrightarrow B^m_{p,\infty}(\Omega), \\
%& B^m_{2,2}(\Omega) = W_2^m(\Omega). 
%\end{align*}
\item For $0 < s < \infty$ and $s \not \in \mathbb{N}$, 
$
\calC^s(\Omega) = B^s_{\infty,\infty}(\Omega).
$
%\item For $0 < s <m < \infty$, $1 \leq p < \infty$, $q \leq \infty$, 
%$B^s_{p,q}(\Omega) = [L^p(\Omega),W_p^m(\Omega)]_{{s/m},q}.$
%\item For $0 < s_1 < s < s_2$, $s=(1-\theta) s_1 + \theta s_2$ and $1 \leq q_1,q_2 \leq \infty$, 
%$B^s_{p,q}(\Omega) = [B^{s_1}_{p,q_1}(\Omega),B^{s_2}_{p,q_2}(\Omega)]_{\theta,q}.$
\item For $0 < s,p,q,r \leq \infty$ with $s > \delta := d(1/p - 1/r)_+$, it holds that 
$
B_{p,q}^s(\Omega) \hookrightarrow B_{r,q}^{s - \delta}(\Omega).
$
In particular, under the same condition, from the definition of $\|\cdot\|_{B_{p,q}^s}$, it holds that
\begin{align}
%\textstyle
B_{p,q}^s(\Omega) \hookrightarrow L^r(\Omega).
\label{eq:BesovLrEmbedding}
\end{align}
% \cite{cohen2001tree} %% L^r embedding is proven, but the more general form is not given.
\item For $0 < s,p,q \leq \infty$, if $s > d/p$, then 
\begin{align}
B_{p,q}^s(\Omega) \hookrightarrow \calC^0(\Omega).
\label{eq:BesovContEmbedding}
\end{align}
\end{itemize}
%\begin{proof}
%We only give a proof for \Eqref{eq:BesovLrEmbedding}.
%First, consider a situation where $p < r$, then $\delta > 0$
%\end{proof}
Hence, if the smoothness parameter satisfies $s > d/p$, 
then it is continuously embedded in the set of the continuous functions.
However, if $s < d/p$, then the elements in the space are no longer continuous.
Moreover, it is known that $B_{1,1}^1([0,1])$ is included in the space of bounded total variation~\citep{peetre1976new}.
Hence, the Besov space also allows spatially inhomogeneous smoothness with spikes and jumps; which makes difference between linear estimators and deep learning 
(see Sec. \ref{sec:EstErrorAnalysisBesov}).

%From the view point of generality, investigating a Besov space gives considerably general discussions. 
%To estimate a function with smoothness $0 < s < \infty$ (e.g., $f \in B^s_{p,q}(\Omega)$),
It is known that 
the minimax rate to estimate $\ftrue$ is lower bounded by  
$
%\inf_{\fhat} \sup_{f \in B^s_{p,q}(\Omega)} \|\fhat - \ftrue\|_{\LPi}^2 \gtrsim 
n^{- 2s/(2s + d)},
$
\citep{GineNickl2015mathematical}.
%where $\inf_{\fhat}$ is taken over all estimator depending only on training data \cite{GineNickl2015mathematical}.
%This minimax optimal rate can be achieved by deep neural network with ReLU activation 
%\cite{2017arXiv170806633S}  up to $\poly \log(n)$ order if $f \in \calC^s(\Omega)$. 
We see that the {\it curse of dimensionality} is unavoidable as long as we consider the Besov space.
%That is, the exponent of the minimax optimal rate is directly affected by the dimensionality $d$.
%Hence, the convergence rate can be {\it exponentially} slower for a high dimensional situation.
This is an undesirable property because we easily encounter high dimensional data in several machine learning problems.
%However, we often see that deep learning can generalize well in several high dimensional problems in practice.
%Therefore, those results as mentioned above do not properly explain the practical success of deep learning.
Hence, we need another condition to derive approximation and estimation error bounds that 
are not heavily affected by the dimensionality. % to explain the favorable performance of deep learning on high dimensional data.
To do so, we introduce the notion of {\it mixed smoothness}.

% Curse of dimensionality:
%%\begin{itemize}
%ReLU-NN can achieve the following estimation accuracy:
%$$
%\large
%\|\fhat - \ftrue\|_{\LPi}^2 \leq O_p\left(n^{- 2\beta/(2\beta + d)} 
%\poly(\log(n))
%\right).
%~~(\text{minimax optimal})
%$$
%The dimensionality $d$ heavily affects the convergence rate.
%{\bf Can we ease the effect of dimensionality under a suitable condition?}
%

%\paragraph{Besov space with mixed smoothness}

%From now on, we define the Besov space with mixed smoothness and discuss its property 
The Besov space with mixed smoothness is defined as follows \citep{schmeisser1987unconditional,sickel2009tensor}.
%We will show that the function with the mixed smoothness can be estimated 
%without suffering from curse of dimensionality.
To define the space, %define the Besov space with mixed smoothness, 
we define the coordinate difference operator as
$$
\Delta_h^{r,i} (f)(x) =\Delta_h^{r}(f(x_1,\dots,x_{i-1},\cdot,x_{i+1},\dots,x_d))(x_i)
$$
for $f:\Real^d \to \Real$, $h \in \Real_+$, $i \in [d]$, and $r \geq 1$. Accordingly, the mixed differential operator for $e \subset \{1,\dots,d\}$ 
and $h \in \Real^d$ is defined as 
$$
\textstyle
\Delta_h^{r,e}(f) = \left(\prod_{i \in e} \Delta_{h_i}^{r,i} \right) (f),~~\Delta_h^{r,\emptyset}(f) = f.
$$
Then, the mixed modulus of smoothness is defined as   
$$
\textstyle
w_{r,p}^e(f,t) := \sup_{|h_i| \leq t_i, i \in e} \|\Delta_h^{r,e}(f)\|_{p}$$ for $t \in \Real_+^d$ and $0 <  p \leq \infty$.
Letting $0 < p,q \leq \infty$, $\alpha \in \Real_{++}^d$ and $r_i := \lfloor \alpha_i \rfloor + 1$,
the semi-norm $|\cdot|_{{\MB_{p,q}^{\alpha,e}}}$ based on the mixed smoothness is defined by
$$
|f|_{\MB_{p,q}^{\alpha,e}} := 
\begin{cases}
\left\{ \int_\Omega [ (\prod_{i \in e} t_i^{-\alpha_i})  w_{r,p}^e(f,t) ]^q \frac{ \dd t}{\prod_{i \in e} t_i} \right\}^{1/q} & (0 < q < \infty), \\
\sup_{t \in \Omega} (\prod_{i \in e} t_i^{-\alpha_i} ) w_{r,p}^e(f,t) 
& (q=\infty).
\end{cases}
$$
By summing up the semi-norm over the choice of $e$, the (quasi-)norm of the mixed smooth Besov space 
(abbreviated to m-Besov space)
is defined by
$$
\|f\|_{\MB^\alpha_{p,q}} := \|f\|_{p} + \sum_{e \subset \{1,\dots,d\}} |f|_{\MB_{p,q}^{\alpha,e}},
$$
and thus $\MB^\alpha_{p,q}(\Omega) := \{f \in L^p(\Omega) \mid \|f\|_{\MB^\alpha_{p,q}} < \infty \}$
where $0 < p,q \leq 1$ and $\alpha \in \Real_{++}^d$.
In this paper, we assume that $\alpha_1 = \dots = \alpha_d$.
With a slight abuse of notation, we also use the notation $MB^\alpha_{p,q}$ for $\alpha > 0$
to indicate $MB^{(\alpha,\dots,\alpha)}_{p,q}$.

%It is known that, when $p=q$, the m-Besov space is characterized as a {\it tensor product space} of $B_{p,q}^s([0,1])$ \citep{sickel2009tensor}. 
%The m-Besov space includes several important models considered in the literature of statistical learning,
%e.g., the additive model  \citep{AS:Meier+Geer+Buhlmann:2009} and 
%the tensor model \citep{TechRepo:Signoretto+etal:2010}.
%For example,  the additive model  
%$f(x) = \sum_{r=1}^d f_d(x_d) \in \MB^\alpha_{p,q}(\Omega)$ where $f_j \in B^{\alpha_j}_{p,q}([0,1])$  for $j=1,\dots,d$
%\citep{AS:Meier+Geer+Buhlmann:2009}
%is included the space;
%the tensor model  
%$f(x) = \sum_{r=1}^R  \prod_{j=1}^d f_{r,j}(x_j) \in \MB^\alpha_{p,q}(\Omega)$ 
%where $f_{r,j} \in B^{\alpha_j}_{p,q}([0,1])$ for $r=1,\dots,R$ and $j=1,\dots,d$ \citep{TechRepo:Signoretto+etal:2010}
%is also included 
%(m-Besov space allows $R \to \infty$ if the summation converges with respect to the quasi-norm of 
%$\|\cdot\|_{MB_{p,q}^\alpha}$).
For $\alpha \in \Real_+^d$, if $p=q$, the m-Besov space has an equivalent norm with the 
{\it tensor product} of the one-dimensional Besov spaces: 
\begin{align*}
\MB_{p,p}^{\alpha} & =  B_{p,p}^{\alpha_1} \otimes_{\delta_p}  \cdots \otimes_{\delta_p}  B_{p,p}^{\alpha_d}, %\\
%f(x_1,\dots,x_d) & \in \overline{\mathrm{span}\{f_1(x_1) \times \cdots \times f_d(x_d)\}}
\end{align*} 
where $\otimes_{{\delta_p}}$ is a {\it tensor product with respect to the $p$-nuclear tensor norm}
(see \cite{sickel2009tensor} for its definition and more details).
%Can be extended to $p \neq q$ (see, for example, \cite{Complexity:Dung:2011}).
We can see that the following models are included in the m-Besov space:
\begin{itemize}
\item Additive model \cite{AS:Meier+Geer+Buhlmann:2009}: if $f_j \in B^{\alpha_j}_{p,q}([0,1])$ for $j=1,\dots,d$,
$$f(x) = \sum_{r=1}^d f_d(x_d) \in \MB^\alpha_{p,q}(\Omega),$$
\item Tensor model \cite{TechRepo:Signoretto+etal:2010}: if $f_{r,j} \in B^{\alpha_j}_{p,q}([0,1])$ for $r=1,\dots,R$ and $j=1,\dots,d$,
$$f(x) = \sum_{r=1}^R  \prod_{j=1}^d f_{r,j}(x_j) \in \MB^\alpha_{p,q}(\Omega).$$
(m-Besov space allows $R \to \infty$ if the summation converges with respect to the quasi-norm of 
$\|\cdot\|_{MB_{p,q}^\alpha}$).
\end{itemize}
It is known that an appropriate estimator in these models can avoid curse of dimensionality \citep{AS:Meier+Geer+Buhlmann:2009,raskutti2012minimax,ICML:Kanawaga+etal:2016,suzuki2016minimax}. 
What we will show in this paper supports that this fact is also applied to deep learning from a unifying viewpoint.
%supports this fact from a unifying viewpoint.

The difference between the (normal) Besov space and the m-Besov space can be 
informally explained as follows.
For regularity condition $\alpha_i \leq 2~ (i=1,2)$, the m-Besov space consists of functions for which 
the following derivatives are ``bounded'':
$$
%\textstyle
\frac{\partial f}{\partial x_1},\frac{\partial f}{\partial x_2},
\frac{\partial^2 f}{\partial x_1^2},\frac{\partial^2 f}{\partial x_2^2},\frac{\partial^2 f}{\partial x_1 \partial x_2},
\frac{\partial^3 f}{\partial x_1 \partial x_2^2}, \frac{\partial^3 f}{\partial x_1^2 \partial x_2},
\frac{\partial^4 f}{\partial x_1^2\partial x_2^2}.
$$
%(c.f., Korobov space). %This property is also satisfied in {\it Korobov space}. 
That is, the ``max'' of the orders of derivatives over coordinates needs to be bounded by 2. 
On the other hand, the Besov space only ensures the boundedness of  the following derivatives:
$$
%\textstyle
\frac{\partial f}{\partial x_1},\frac{\partial f}{\partial x_2},
\frac{\partial^2 f}{\partial x_1^2},\frac{\partial^2 f}{\partial x_2^2},\frac{\partial^2 f}{\partial x_1 \partial x_2},
$$
where the ``sum'' of the orders needs to be bounded by 2.
This difference directly affects the rate of convergence of approximation accuracy.
Further details about this space and related topics can be found in a comprehensive survey \citep{dung2016hyperbolic}.

\paragraph{Relation to Barron class.}
%\cite{barron1993universal} 
\cite{barron99l99lb,barron1993universal,barron1994approximation}
showed that, if the Fourier transform of a function $f$ satisfies some integrability condition, 
then we may avoid curse of dimensionality for estimating neural networks with sigmoidal activation functions.
The integrability condition is given by
$$
\int_{\mathbb{C}^d}  \|\omega \| |\hat{f}(\omega)| \dd \omega < \infty,
$$
where $\hat{f}$ is the Fourier transform of a function $f$.
We call the class of functions satisfying this condition {\it Barron class}.
A similar function class is analyzed by \cite{klusowski2016risk} too.
We cannot compare directly the m-Besov space and Barron class, but they are closely related.
Indeed, if $p=q=2$ and $s = \alpha_1 = \cdots = \alpha_d$, then m-Besov space $\MB^{s}_{2,2}(\Omega)$ is equivalent 
to the tensor product of Sobolev space \cite{sickel2011spline} which 
consists of functions $f: \Omega \to \Real$ satisfying 
$$
\int_{\mathbb{C}^d} \prod_{i=1}^d (1 + |\omega_i|^2)^{s} |\hat{f}(\omega)|^2 \dd \omega < \infty.
$$
%Here, suppose that $s=1/2$, $f \in L^1(\Omega)$ and $f$ is in the Barron class, 
%then its Fourier transform satisfies $\|\hat{f}\|_\infty \leq \|f\|_1$ and it holds 
%$$
%\int_{\mathbb{C}^d} \prod_{i=1}^d (1 + |\omega_i|^2)^{\frac{1}{2}} |\hat{f}(\omega)|^2 \dd \omega 
%\leq \|f\|_1 \int_{\mathbb{C}^d} \prod_{i=1}^d (1 + |\omega_i|) |\hat{f}(\omega)| \dd \omega < \infty.
%$$
Therefore, our analysis gives a (similar but) different characterization of conditions to avoid curse of dimensionality. 

\section{Approximation error analysis} %error by deep neural network}

In this section, we evaluate how well the functions in the Besov and m-Besov spaces 
can be approximated by neural networks with the ReLU activation.
Let us denote the ReLU activation by $\eta(x) = \max\{x,0\}~(x \in \Real)$, 
and for a vector $x$, $\eta(x)$ is operated in an element-wise manner.
Define the neural network with height $L$, width $W$, sparsity constraint $S$ and norm constraint $B$ as 
\begin{align*}
& \Phi(L,W,S,B) 
:= \{ (\Well{L} \eta( \cdot) + \bell{L}) \circ \dots  
\circ (\Well{1} x + \bell{1})  \\ %\notag \\ \mid %\| \Well{\ell}_{j,:}\| \leq & 
& \mid
\Well{\ell} \in \Real^{W \times W},~\bell{\ell} \in \Real^W,~
\sum_{\ell=1}^L (\|\Well{\ell}\|_0+ \|\bell{\ell}\|_0) \leq S,
\max_{\ell} \|\Well{\ell}\|_\infty \vee \|\bell{\ell}\|_\infty \leq B
\},
\end{align*}
where $\|\cdot\|_0$ is the $\ell_0$-norm of the matrix (the number of non-zero elements of the matrix)
and $\|\cdot\|_\infty$ is the $\ell_\infty$-norm of the matrix (maximum of the absolute values of the elements).
We want to evaluate how large $L,W,S,B$ should be to
approximate $\ftrue \in \MB^\alpha_{p,q}(\Omega)$ by an element $f  \in \Phi(L,W,S,B)$ with precision $\epsilon >0$
measured by $L^r$-norm:
$\min_{f \in \Phi}\|f - \ftrue\|_r \leq \epsilon$.

\subsection{Approximation error analysis for Besov spaces}

Here, we show how the neural network can approximate a function in the Besov space 
which is useful to derive the generalization error of deep learning.
%The approximation error analysis itself is not really new. 
Although its derivation is rather standard as considered in \cite{chui1994neural,bolcskei2017optimal}, 
it should be worth noting that the bound derived here cannot be attained any {\it non-adaptive} method %(including linear method)
and the generalization error based on the analysis is also unattainable by 
any {\it linear} estimators including the kernel ridge regression.
%we think it is beneficial to state the result explicitly.
That explains the high adaptivity of deep neural network and how it outperforms usual linear methods such as kernel methods.

To show the approximation accuracy, 
a key step is to show that the ReLU neural network can approximate the {\it cardinal B-spline} with high accuracy.
Let $\calN(x) =  1~  (x \in [0,1]),~0 ~ (\text{otherwise})$, then the 
{\it cardinal B-spline of order $m$} is defined by taking  $m+1$-times convolution of $\calN$:
$$
\calN_m(x) = (\underbrace{\calN * \calN * \dots * \calN}_{\text{$m+1$ times}})(x),
$$
where $f* g(x) := \int f(x -t) g(t) \dd t$. It is known that $\calN_m$ is a piece-wise polynomial of order $m$.
For $k=(k_1,\dots,k_d) \in \Natural^d$ and $j=(j_1,\dots,j_d)\in \Natural^d$, let $M_{k,j}^d(x) = \prod_{i=1}^d \calN_m(2^{k_i} x_i - j_i)$.
Even for $k \in \Natural$, we also use the same notation to express $M_{k,j}^d(x) = \prod_{i=1}^d \calN_m(2^{k} x_i - j_i)$.
Here, $k$ controls the spatial ``resolution'' and $j$ specifies the location on which the basis is put.
Basically, we approximate a function $f$ in a Besov space by a super-position of $M_{k,j}^m(x)$, which is closely related to wavelet analysis \citep{Mallat99a}.
%In a typical B-spline approximation scheme, %such as quasi-interpolation, 
%we put the basis functions $M_{k,j}^d(x)$ on a ``regular grid''
%for $k =1,\dots,M$ and $j_i \in \{1,\dots,2^{k}\}$~$(i=1,\dots,d)$, and 
%take its superposition as $f(x) \approx \sum_{k=1,\dots,M} \sum_{j: j_i=1,\dots,2^{k}} \alpha_{k,j} M_{k,j}^d(x)$,
%which consists of $O(2^{Kd})$ terms.
%In the next subsection, we consider a mixed smooth function space.
%In that setting, we can reduce the number of terms significantly by using so called ``sparse-grid technique.''  

\cite{mhaskar1992approximation,chui1994neural} have shown the approximation ability of 
neural network for a function with bounded modulus of smoothness.
However, the activation function dealt with by the analysis 
does not include ReLU but 
it deals with a class of activation functions satisfying the following conditions,
%\begin{subequations}
\begin{align}
& \lim_{x \to \infty} \eta(x)/x^k \to 1,~~\lim_{x \to -\infty} \eta(x)/x^k = 0, ~~ 
\text{$\exists K > 1$ s.t. $|\eta(x)| \leq K (1 + |x|)^k~(x \in \Real)$},
\label{eq:activation_kterm}
\end{align}
%\end{subequations}
for $k=2$ which excludes ReLU. % $R$%If $k=1$, then the condition is satisfied by ReLU, but 
%their analyses assumed $k \geq 2$.
\cite{mhaskar1993approximation} analyzed deep neural network under the same setting
%that also covers $k=1$ % while it mainly focuses on $k \geq 2$, 
but it restricts the smoothness parameter to $s = k+1$.
\cite{mhaskar1996neural} considered the Sobolev space $W^{m}_p$ with an infinitely many differentiable ``bump'' function
which also excludes ReLU.
%\begin{itemize}
%\item 
%\item 
%\end{itemize}
%\cite{chui1994neural} derived an approximation error of the cardinal B-spline basis
%using the activation function satisfies \Eqref{eq:activation_kterm} with $k \geq 2$.
%\begin{Proposition}[\cite{chui1994neural}]
%\label{prop:ChuiMhaskarMapprox}
%Assume that the activation function satisfies \Eqref{eq:activation_kterm} with $k \geq 2$.
%%\footnote{Note that this situation excludes  %ReLU activation.}.
%Let 
%$L = \lceil \log(md -d )/\log(k)\rceil + 1$,
%and $W = 2 d m^d (k+1) {k^L +d  \choose d}$. 
%Then, for all $\epsilon > 0$, 
%there exists $S_\epsilon$ depending on $\epsilon$
%such that there exists a neural network 
%$
%\check{M} \in \Phi(L,W,S_\epsilon,B) 
%$ satisfying 
%$$
%\|M_{0,0}^d - \check{M}\|_{L^\infty([0,1]^d)} \leq \epsilon.
%$$
%%In particular, 
%\end{Proposition}
%However, this is not satisfactory to our purpose because we need a bound for the ReLU activation and $k \geq 2$ excludes ReLU.
%while $k \geq 2$, which excludes ReLU, is assumed in Proposition %\ref{prop:ChuiMhaskarMapprox}.
%The reason why the setting $k=1$ (ReLU) is excluded is 
%because the ReLU activation $\eta$ does not give a polynomial function by its simple composition: 
%$\underbrace{\eta \circ \dots \circ \eta}_{\text{$m$ times}} = \eta(x) = (x)_+$.
%On the other hand, for the activation function with \Eqref{eq:activation_kterm},
%satisfies $\underbrace{\eta \circ \dots \circ \eta}_{\text{$m$ times}}
%\approx x^{k m}$ for sufficiently large $x > 0$.
%This makes us easy to construct a spline function through the activation.
However,  %this issue can be easily 
approximating the cardinal B-spline by ReLU can be 
attained by appropriately using the technique developed by \cite{DBLP:journals/corr/Yarotsky16} as in the following lemma.
%Then, we obtain the following approximation error bound of cardinal B-splines.

\begin{Lemma}[Approximation of cardinal B-spline basis by the ReLU activation]
\label{lemm:Mnapproximation}
%Let 
%$L := \lceil \log(md -d )/\log(k)\rceil + 1$,
%and $W := 2 d m^d (k+1) {k^L +d  \choose d}$. 
There exists a constant $c_{(d,m)}$ depending only on $d$ and $m$
such that,
for all $\epsilon > 0$,
%there exists $S_\epsilon$ depending on $\epsilon$ such that 
there exists a neural network 
$
\check{M} \in \Phi(L_0,W_0,S_0,B_0) 
$ with 
$L_0 := 3 +  2 \left\lceil \log_2\left(  \frac{3^{d\vee m} }{\epsilon c_{(d,m)}}  \right)+5 \right\rceil \left \lceil\log_2(d \vee m)\right\rceil$, 
$W_0 := 6 d m(m+2) + 2 d$, $S_0 := L_0 W_0^2$ and  $B_0 := 2 (m+1)^m$
that 
satisfies
$$
\|M_{0,0}^d - \check{M}\|_{L^\infty(\Real^d)} \leq \epsilon,
$$
and $\check{M}(x) = 0$ for all $x \not \in [0,m+1]^d$.
%In particular, 

\end{Lemma}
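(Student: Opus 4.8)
The plan is to reduce the $d$-dimensional problem to repeated use of two ReLU gadgets from \cite{DBLP:journals/corr/Yarotsky16}: an approximate scalar multiplication and, built on top of it, an approximate product of several numbers arranged in a balanced binary tree. First I would use the truncated-power representation of the cardinal B-spline,
$$
\calN_m(x) = \frac{1}{m!}\sum_{j=0}^{m+1}(-1)^j \binom{m+1}{j}\, \eta(x-j)^m,
$$
which writes $\calN_m$ as a fixed linear combination of $m+2$ functions $\eta(x-j)^m$. Each truncation $\eta(x-j)$ is exactly one ReLU unit, so the only nonlinear content is the monomial $t\mapsto t^m$ on the bounded range $t\in[0,m+1]$. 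This representation is (up to scaling) the $(m+1)$-st finite difference of the truncated power $x_+^m$, and hence coincides with the finite difference of $x^m$ for $x>m+1$; since that difference of a degree-$m$ polynomial vanishes, the expression is identically zero for $x\notin[0,m+1]$, which is what ultimately yields the support claim $\check M(x)=0$ outside $[0,m+1]^d$.

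Next I would approximate both the monomial $t^m$ and the $d$-fold tensor product by the same device. Writing $t^m$ as a product of $m$ copies of $t$ and $M_{0,0}^d(x)=\prod_{i=1}^d \calN_m(x_i)$ as a product of $d$ factors, both are instances of ``product of at most $d\vee m$ bounded numbers.'' Yarotsky's squaring network approximates $s\mapsto s^2$ on a bounded interval to accuracy $\delta$ with $O(\log(1/\delta))$ layers, from which approximate multiplication follows by polarization, $ab=\tfrac12[(a+b)^2-a^2-b^2]$, suitably rescaled so all arguments stay in the interval where the gadget is accurate. Multiplying $K\le d\vee m$ numbers through a balanced binary tree then uses $\lceil\log_2(d\vee m)\rceil$ levels, each level a layer of approximate multiplications of per-operation accuracy $\delta$; hence the product gadget has depth $O\big(\lceil\log_2(d\vee m)\rceil\,\log(1/\delta)\big)$. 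Applying this gadget once to form each $\calN_m(x_i)$ in parallel across the $d$ coordinates, and once more to multiply the $d$ results together, accounts for both the factor $2$ and the product structure $\lceil\log_2(\cdots)\rceil\cdot\lceil\log_2(d\vee m)\rceil$ in $L_0$; running the $d$ one-dimensional pieces in parallel---each needing width proportional to $m(m+2)$ for the $m+2$ truncated-power terms together with the squaring subnetwork---gives the width bound $W_0=6dm(m+2)+2d$, and $S_0=L_0W_0^2$ bounds the number of active parameters by counting a dense network of that size.

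The remaining work is the error bookkeeping. Since every factor in both products is bounded (the B-spline satisfies $0\le\calN_m\le1$ and the intermediate partial products are uniformly bounded), a telescoping estimate shows that the final error of an approximate product of $K$ factors is controlled by $K$ times the largest per-factor and per-multiplication error, amplified by the uniform bounds on the remaining factors; this is where the crude amplification bound $3^{d\vee m}$ and the normalizing constant $c_{(d,m)}$ enter. Choosing the per-operation accuracy $\delta$ so that this accumulated error is at most $\epsilon$ forces $\log(1/\delta)\sim\log\!\big(3^{d\vee m}/(\epsilon\, c_{(d,m)})\big)$, which is exactly the first factor appearing in $L_0$, while the weights needed to realize the squaring gadget and to evaluate $t^m$ for $t\le m+1$ stay below $B_0=2(m+1)^m$. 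I expect the main obstacle to be precisely this error-propagation analysis through the nested tree of \emph{inexact} multiplications: one must keep every intermediate quantity inside the interval on which Yarotsky's gadget is accurate (handled by explicit ReLU clipping, which does not automatically vanish outside $[0,m+1]^d$ unless arranged carefully) while simultaneously ensuring the approximation still vanishes outside the support and the depth, width, and norm budgets are all met. Getting the constants to line up with the stated $L_0,W_0,S_0,B_0$---rather than mere order-of-magnitude bounds---is the delicate part.
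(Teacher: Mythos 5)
Your construction follows the paper's route almost exactly: the truncated-power identity $\calN_m(x) = \frac{1}{m!}\sum_{j=0}^{m+1}(-1)^j\binom{m+1}{j}\eta(x-j)^m$, Yarotsky's approximate-product gadget (which is indeed built from the squaring network via polarization, so your reconstruction and the paper's citation of $\phi_{\mathrm{mult}}$ coincide), input rescaling by $(m+1)$ to keep arguments in the gadget's accuracy interval (whence $B_0 = 2(m+1)^m$), parallel one-dimensional units followed by one more $d$-ary product, and the same error bookkeeping giving $\log(1/\delta)\sim\log(3^{d\vee m}/(\epsilon c_{(d,m)}))$. However, there is one genuine gap, and you have located it yourself without closing it: the exact support claim $\check M(x)=0$ for $x\notin[0,m+1]^d$. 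Your appeal to the finite-difference identity (the $(m+1)$-st difference of a degree-$m$ polynomial vanishes) only proves the \emph{exact} expression vanishes off $[0,m+1]$; once each power $\eta(x-j)^m$ is replaced by an inexact product network with clipped inputs, the cancellation is inexact. Concretely, for $x>m+1$ every clipped input $\phi_{(0,1-j/(m+1))}((x-j)/(m+1))$ saturates at its maximum, so the one-dimensional approximant is a \emph{constant} $\delta'$ there with $|\delta'|\leq\epsilon'$ but $\delta'\neq 0$ in general; a point with $x_1>m+1$ and $x_2,\dots,x_d\in[0,m+1]$ then yields output $\approx \delta'\prod_{i\geq 2}\calN_m(x_i)\neq 0$, so the support property fails and, downstream, so does the bound $\sum_{(k,j)}|\alpha_{k,j}|\boldone\{M_{k,j}^d(x)\neq 0\}$ used in the proof of Proposition \ref{prop:BesovApproxByNN}, which needs $\check M$ to share the B-spline's compact support.

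The paper closes this gap with two small but essential devices that your plan is missing. First, since the saturated inputs make the one-dimensional network $f$ \emph{exactly} equal to the constant $\delta'$ on $(m+1,\infty)$ and exactly $0$ on $(-\infty,0]$ (the latter using that $\phi_{\mathrm{mult}}$ outputs $0$ when all inputs are $0$), one sets $g(x) := \phi_{(0,1)}\bigl(f(x) - \frac{\delta'}{m+1}\phi_{(0,m+1)}(x)\bigr)$: the subtracted ramp equals $\delta'$ for $x\geq m+1$ and vanishes for $x\leq 0$, so $g$ vanishes identically outside $[0,m+1]$ while perturbing the approximation by at most another $\epsilon'$. Second, one must invoke the property that Yarotsky's product gadget returns \emph{exactly} zero whenever one of its inputs is zero (true of the polarization construction, since the squaring network fixes $0$, but it must be stated, as it is what propagates $g(x_i)=0$ through the final $d$-ary product), followed by one more outer clip $\phi_{(0,1)}$. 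With these two steps your error analysis goes through verbatim and yields the stated $(L_0,W_0,S_0,B_0)$; without them the lemma's second assertion is simply not established.
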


The proof is in Appendix \ref{sec:ProofBsplineApprox}.
Based on this lemma, we can translate several B-spline approximation results 
into those of deep neural network approximation.
In particular, combining this lemma and the B-spline interpolant representations 
of functions in Besov spaces \citep{devore1988interpolation,devore1993wavelet,dung2011optimal},
we obtain the optimal approximation error bound for deep neural networks.
%More specifically, we use a nonlinear adaptive sampling method for 
%quasi-interpolant representation for Besov spaces (see, for example, \cite{dung2011optimal} and references therein).
%\cite{chui1994neural} proposed to apply Proposition \ref{prop:ChuiMhaskarMapprox} to compactly supported wavelet decomposition.
%On the other hand, for approximating a function in Besov spaces, 
%we employ the quasi-interpolant approach due to its simplicity 
%which is related 
%but a bit different approach to the one based on 
%the wavelet decomposition with compactly supported wavelet bases
%\cite{cohen2001tree}. %one because 
%%it is easy to handle.  
Here, let $U(\calH)$ be the unit ball of a quasi-Banach space $\calH$, and 
for a set $\calF$ of functions, define the worst case approximation error as 
$$
R_r(\calF,\calH) :=\sup_{\ftrue \in U(\calH)}  \inf_{f \in \calF} \|\ftrue - f\|_{L^r([0,1]^d)}.
$$

\begin{Proposition}[Approximation ability for Besov space]
\label{prop:BesovApproxByNN}
Suppose that $0 < p,q,r \leq \infty$ and $0 < s < \infty$ satisfy
the following condition: %s:
\begin{align}
 s > d(1/p - 1/r)_+.
\label{eq:sprConditions}
%\\~~\text{(ii)} & ~~~   s = d/p, q \leq \min\{1,r\}, p,r < \infty.& & &
\end{align}
Assume that $m \in \Natural$ satisfies $0 < s < \min( m , m -1 + 1/p)$.
%\end{subequations}
%\begin{subequations}
%\label{eq:sprConditions}
%\begin{flalign}
%~~\text{(i)} &  ~~~  s > d(1/p - 1/r)_+, &  & & 
%\\~~\text{(ii)} & ~~~   s = d/p, q \leq \min\{1,r\}, p,r < \infty.& & &
%\end{flalign}
%\end{subequations}
%\begin{enumerate}
%\item[(i)] $s > d(1/p - 1/r)_+$,
%\item[(ii)] $s = d/p$, $q \leq \min\{1,r\}$, $p,r < \infty$.
%\item[(iii)] $r \leq p$.
%\end{enumerate}
Let $\nu = (s - \delta)/(2\delta)$.
For sufficiently large $N \in \Natural$ and $\epsilon = N^{-s/d - (\nu^{-1} + d^{-1})(d/p -s)_+}\log(N)^{-1}$,
let
\begin{flalign*}
\qquad L &= 3 +  2 \lceil \log_2\left(  \frac{3^{d\vee m} }{\epsilon c_{(d,m)}}  \right)+5 \rceil \lceil\log_2(d \vee m)\rceil, 
& W &= N W_0 ,  &\\
\qquad S &= (L-1) W_0^2 N + N, 
&
 B &= O(N^{(\nu^{-1} + d^{-1}) (1\vee (d/p - s)_+) }), &
\end{flalign*}
%\begin{enumerate}
%\item $L = 3 +  2 \lceil \log_2\left(  \frac{3^{d\vee m} }{\epsilon c_{(d,m)}}  \right)+5 \rceil \lceil\log_2(d \vee m)\rceil$, 
%\item $W = N W_0 $,
%\item $S = (L-1) W_0^2 N + N$,
%\item $B = O(N^{(\nu^{-1} + d^{-1}) (1\vee (d/p - s)_+) })$,
%\end{enumerate}
%$\calF(L,W,S,B)$ satisfies 
then it holds that 
$$
%\inf_{\fcheck \in \calF(L,W,S,B)} \sup_{\ftrue \in U(B^s_{p,q}([0,1]^d))}\|\ftrue -  \fcheck\|_{L^r([0,1]^d)}
R_r(\Phi(L,W,S,B),B^s_{p,q}([0,1]^d))
\lesssim  N^{ - s/d}.
$$
%Otherwise, it holds that
%On the other hand, even if the condition \eqref{eq:sprConditions} is not satisfied,
%for $0 < p, q, r \leq \infty$ and $0 < s$, it holds that  
%$$
%R_r(\calF(L,W,S,B),B^s_{p,q}([0,1]^d))
%\lesssim  N^{ - s/d + (1/p - 1/r)_+}.
%$$

\end{Proposition}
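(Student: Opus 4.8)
The plan is to combine a nonlinear (adaptive) B-spline expansion of Besov functions with the ReLU realization of the cardinal B-spline supplied by Lemma~\ref{lemm:Mnapproximation}. First I would invoke the atomic decomposition of the Besov space in terms of the tensor-product B-spline system $\{M_{k,j}^d\}$ (see \citep{devore1988interpolation,dung2011optimal}): every $\ftrue \in U(B^s_{p,q})$ admits an expansion $\ftrue = \sum_{k\ge 0}\sum_j c_{k,j} M_{k,j}^d$ whose coefficients satisfy the sequence-space characterization
$\Big(\sum_{k\ge 0} 2^{k(s-d/p)q}\big(\sum_j |c_{k,j}|^p\big)^{q/p}\Big)^{1/q} \lesssim \|\ftrue\|_{B^s_{p,q}}$.
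The assumption $0 < s < \min(m, m-1+1/p)$ guarantees that the order-$m$ B-splines reproduce smoothness up to $s$, so this characterization is available, and it holds in the quasi-Banach regime $p,q < 1$ as well.

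The core approximation-theoretic input is best $N$-term (nonlinear) approximation: selecting the $N$ atoms with the largest weighted coefficients yields an $N$-term approximant $f_N = \sum_{(k,j)\in\Lambda} c_{k,j} M_{k,j}^d$, $|\Lambda| = N$, with $\|\ftrue - f_N\|_r \lesssim N^{-s/d}$. This is exactly the step that distinguishes deep learning from linear methods: a fixed, non-adaptive dictionary only gives the worse rate carrying the $(1/p-1/r)_+$ defect, whereas adaptively choosing $\Lambda$ removes it, which is where the embedding condition \eqref{eq:sprConditions}, i.e. $s > \delta = d(1/p-1/r)_+$, enters. The index set $\Lambda$ spans resolution levels up to a maximal $K \asymp \log N$, and the parameter $\nu = (s-\delta)/(2\delta)$ governs the trade-off between the maximal level and the per-level term count.

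Next I would realize $f_N$ by a ReLU network. Each atom satisfies $M_{k,j}^d(x) = M_{0,0}^d(2^k x - j)$, so the affine reparametrization $x \mapsto 2^k x - j$ is absorbed into the input layer; its weights are of order $2^K = \poly(N)$, which together with the coefficient magnitudes $|c_{k,j}|$ forces the norm bound $B = O(N^{(\nu^{-1}+d^{-1})(1\vee(d/p-s)_+)})$. Applying Lemma~\ref{lemm:Mnapproximation} with accuracy $\epsilon$ to each of the $N$ atoms and stacking the resulting subnetworks in parallel produces a network of depth $L$ (unchanged), width $W = N W_0$, and sparsity $S = (L-1)W_0^2 N + N$, the trailing $+N$ accounting for the final linear combination by the $c_{k,j}$. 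The compact support property $\check{M}(x) = 0$ outside $[0,m+1]^d$ keeps the number of atoms overlapping at any point bounded, which is essential for aggregating the per-atom errors without an extra dimensional penalty.

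Finally, I would split $\|\ftrue - \check{f}\|_r \le \|\ftrue - f_N\|_r + \|f_N - \check{f}\|_r$. The first term is $\lesssim N^{-s/d}$ by the step above; the second is bounded by $\sum_{(k,j)\in\Lambda} |c_{k,j}|\,\|M_{k,j}^d - \check{M}_{k,j}\|_r \lesssim \big(\sum_{(k,j)\in\Lambda} |c_{k,j}|\big)\epsilon$, and the stated choice $\epsilon = N^{-s/d - (\nu^{-1}+d^{-1})(d/p-s)_+}\log(N)^{-1}$ is calibrated precisely so that this is also $\lesssim N^{-s/d}$. I expect the main obstacle to be the regime $p < r$ (equivalently $d/p - s > 0$), where the retained atoms at high resolution levels carry coefficients growing with $k$; there the interplay of the per-atom error $\epsilon$, the coefficient sizes, and the number of retained terms must be tracked carefully so that the high-frequency contributions do not dominate, and it is this bookkeeping that dictates the exponents appearing in both $\epsilon$ and $B$.
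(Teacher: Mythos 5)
Your overall route is exactly the paper's: the DeVore--Popov B-spline decomposition together with D\~ung's adaptive recovery (the paper's Lemma \ref{lemm:BSplineInterpolation}) gives the $N$-term approximant $f_N$ at rate $N^{-s/d}$, each atom is replaced by the ReLU surrogate $\check M$ from Lemma \ref{lemm:Mnapproximation}, the subnetworks are stacked in parallel (depth unchanged, $W = NW_0$, $S=(L-1)W_0^2N+N$), and the error is split as $\|f-\fcheck\|_r \lesssim \|f-f_N\|_r + \|f_N-\fcheck\|_r$, with $B$ coming from the rescaling weights $2^k \le 2^{K^*} \lesssim N^{\nu^{-1}+d^{-1}}$ and the coefficient bound $|\alpha_{k,j}| \lesssim 2^{k(d/p-s)_+}$. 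All of that matches Appendix \ref{sec:ProofPropBesovApp}.

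However, your aggregation of the per-atom errors, as displayed, would fail. You bound $\|f_N - \fcheck\|_r \lesssim \big(\sum_{(k,j)\in\Lambda}|c_{k,j}|\big)\epsilon$ and claim the stated $\epsilon$ absorbs it, but the $\ell_1$ mass of the retained coefficients can be polynomially large in $N$, while $\epsilon$ only carries a $\log(N)^{-1}$ slack beyond $N^{-s/d-(\nu^{-1}+d^{-1})(d/p-s)_+}$. Concretely, take $p=q=r=\infty$ (so $(d/p-s)_+=0$) with $s<d$: a worst-case $f\in U(B^s_{\infty,\infty})$ has $\asymp 2^{kd}$ retained coefficients of size $\asymp 2^{-ks}$ at each level $k\le K$ with $2^{Kd}\asymp N$, so $\sum|c_{k,j}| \asymp N^{1-s/d}$ and your bound gives $N^{1-2s/d}\log(N)^{-1} \gg N^{-s/d}$. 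The fix is the bounded-overlap observation you correctly flagged as essential but then did not use: the estimate must be made \emph{pointwise}, not in $\ell_1$. At any fixed $x$, at most $(m+1)^d$ atoms per level are nonzero and at most $1+K^* \lesssim \log N$ levels contribute, so
\begin{align*}
|f_N(x)-\fcheck(x)| \;\le\; \epsilon\, (m+1)^d (1+K^*)\, 2^{K^*(d/p-s)_+}\,\|f\|_{B^s_{p,q}}
\;\lesssim\; \epsilon \,\log(N)\, N^{(\nu^{-1}+d^{-1})(d/p-s)_+},
\end{align*}
which is $\lesssim N^{-s/d}$ precisely for the stated $\epsilon$; this is the calculation in the paper's proof and it is what dictates both exponents in $\epsilon$ and $B$. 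With this replacement (and, pedantically, the quasi-triangle inequality for $r<1$), your argument coincides with the paper's.
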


\begin{Remark}
By \Eqref{eq:BesovLrEmbedding}, the condition \eqref{eq:sprConditions} indicates that $\ftrue \in B_{p,q}^s$ satisfies 
$\ftrue \in L^r(\Omega)$.
If we set $p = q = \infty$ and $r=\infty$, then $B_{p,q}^s(\Omega) = C^s(\Omega)$
which yields the result by \cite{DBLP:journals/corr/Yarotsky16} as a special case.
\end{Remark}

The proof is in 
Appendix \ref{sec:ProofPropBesovApp}.
An interesting point is that the statement is valid even for $p \neq r$.
%If $s > d/p$, then $B^s_{p,q}$ is compactly embedded in the space $\calC^0(\Omega)$ of continuous functions
%(\Eqref{eq:BesovContEmbedding}),
%and we have $L^\infty$-convergence by letting $r = \infty$.
%On the other hand
In particular, the theorem also supports non-continuous regime $(s < d/p)$  
in which $L^\infty$-convergence does no longer hold but instead
$L^r$-convergence is guaranteed under the condition $s > d(1/p - 1/r)_+$.
In that sense, the convergence of the approximation error is guaranteed 
in considerably general settings.
\cite{pinkus1999approximation} gave an explicit form of convergence 
when $1 \leq p=r$ for the activation functions satisfying \Eqref{eq:activation_kterm}
which does not cover ReLU and an important setting $p\neq r$.
\cite{petrushev1998approximation} considered $p=r=2$
and activation function with \Eqref{eq:activation_kterm} where $s$ is an integer and $s \leq k+1+(d-1)/2$.
\cite{chui1994neural} and
\cite{bolcskei2017optimal}
dealt with the smooth sigmoidal activation satisfying 
the condition \eqref{eq:activation_kterm} with $k \geq 2$ or a ``smoothed version'' of the ReLU activation which excludes ReLU; 
but \cite{bolcskei2017optimal} presented a general strategy for neural-net approximation by using the notion of best $M$-term approximation.
\cite{mhaskar1992approximation}
gives an approximation bound using the modulus of smoothness,
but the smoothness $s$ and the order of sigmoidal function $k$ in \eqref{eq:activation_kterm} is tightly connected and $\ftrue$ is assumed to be continuous 
which excludes the situation $s < d/p$.
On the other hand, the above proposition does not require such a tight connection
and it explicitly gives the approximation bound for Besov spaces.
%Superposition
%(see also, \cite{cohen2001tree}).
\cite{williamson1992splines} derived a spline approximation error bound 
for an element in a Besov space when $d=1$, but the derived bound is 
only $O(N^{-s + (1/p-1/r)_+})$ which 
is the one of non-adaptive methods described below,
and approximation by a ReLU activation network is not discussed.
We may also use the analysis of \cite{cohen2001tree} which is based on compactly supported wavelet bases,
but the cardinal B-spline is easy to handle 
through quasi-interpolant representation as performed in the proof of Proposition \ref{prop:BesovApproxByNN}.
%thanks to Lemma \ref{lemm:Mnapproximation}.
%on the basis of the quasi-interpolant representation.
%$\lim_{x \to \infty} \eta(x)/x^k \to 1,~~\lim_{x \to -\infty} \eta(x)/x^k = 0$ with $k \geq 2$ under $1 \leq p$.

It should be noted that the presented approximation accuracy bound is not trivial 
because it can not be achieved by a {\it non-adaptive method}. 
%That is, if the approximation method is given as 
%$
%\fcheck(x) = \sum_{k=1}^N a_k(f(x^{(1)}),\dots,f(x^{(N)})) \phi_k %\sum_{k = 1}^K \sum_{j: j_i \in [2^k]} \alpha_{k,j}(f) M_{k \boldone,j} 
%$
%where $a_k$ is estimated from function values of $f$ at $N$ {\it fixed} points 
%$(f(x^{(1)}),\dots,f(x^{(N)}))$ where the set of input points $(x^{(1)},\dots,x^{(N)})$ is independent of $f$,
%and the set of functions $(\phi_k)_{k=1}^N$ is fixed independent of the choice of $f$.
%% and $M$ possibly depends on the target precision $\epsilon$. %is independent on $f$.
%Actually, it is known that the approximation accuracy for non-adaptive methods is lower bounded as 
%\begin{align}
%\label{eq:NonAdaptiveLowerBesov}
%\inf_{(a_k, \phi_k)_{k=1}^N, (x^{(i)})_{i=1}^N} ~~\sup_{f \in U(B_{p,q}^s)} 
%\|f - \fcheck\|_{L^r(\Omega)} \gtrsim N^{- s/d + (1/p - 1/r)_+},
%\end{align}
%(see \cite{dung2011optimal} and references therein)
%where the inf is taken over all functions $a_k$ on $\Real^N$, bases $\phi_k$ and sampling points $(x^{(i)})_{i=1}^N \subset \Real^d$.
%That is, this approximation accuracy bound is best achievable %by non-adaptive methods 
%and can not be improved by non-adaptive methods.
%
Actually, the {\it linear $N$-width} \citep{tikhomirov1960diameters} of the Besov space is lower bounded as 
%Let the linear $M$-width introduced by \cite{tikhomirov1960diameters} be 
\begin{align}
\label{eq:LinearWidthBesov}
%\lambda_N(MB_{p,q}^s,L^r) := 
\inf_{L_N} \sup_{f \in U(MB_{p,q}^s)} \|f - L_N(f)\|_r 
\gtrsim 
\begin{cases}
N^{- s/d + (1/p - 1/r)_+} & 
\begin{cases}\text{either} & (0 < p \leq r \leq 2), \\ % ,~s > d(1/p - 1/r)), \\
\text{or} & (2 \leq p \leq r \leq \infty),  \\
\text{or} & (0 < r \leq p \leq \infty),
\end{cases} \\
N^{- s/d + 1/p - 1/2} & (0 < p <2 <  r  < \infty,~s > d\max(1-1/r,1/p), %\\
%N^{- s/d } & (2 \leq p < r \leq \infty,~s > d/2),
\end{cases}
\end{align}
where the infimum is taken over all linear oprators $L_N$ with rank $N$ from $B_{p,q}^s$ to $L^r$ 
(see \cite{vybiral2008widths} for more details).
Similarly, the {\it best $N$-term approximation error} (Kolmorogov width) of the Besov space is lower bounded as 
\begin{align}
\label{eq:KolmorogororovBesov}
\inf_{S_N \subset B_{p,q}^s} \sup_{f \in U(B_{p,q}^s)} \inf_{\fcheck \in S_N} \|f - \fcheck\|_{L^r(\Omega)} 
\gtrsim 
\begin{cases}
N^{- s/d + (1/p - 1/r)_+} & (1 < p < r \leq 2,~s > d(1/p - 1/r)), \\
N^{- s/d + 1/p - 1/2} & (1 < p <2 <  r \leq \infty,~s > d/p), \\
N^{- s/d } & (2 \leq p < r \leq \infty,~s > d/2),
\end{cases}
\end{align}
if $1 < p < r \leq \infty$, $1 \leq q < \infty$ and $1 < s$, where $S_N$ is any $N$-dimensional subspace of $B_{p,q}^s$
(see \cite{vybiral2008widths}, and see also \cite{Romanyuk2009Kolmogorovwidth,myronyuk2016kolmogorov} for a related space).
That is, any linear/non-linear  approximator with {\it fixed} $N$-bases does not achieve the approximation error 
$N^{- \alpha/d}$ in some parameter settings such as $0 < p < 2 < r $. %of $p,r,s$.
%This mismatch appears in wider range of parameter settings for linear approximation methods (\Eqref{eq:LinearWidthBesov}).
On the other hand, adaptive methods including deep learning can improve the error rate up to $N^{- \alpha/d}$ which is rate optimal
\citep{dung2011optimal}.
The difference is significant when $p < r$.
This implies that deep neural network possesses high adaptivity 
to find which part of the function should be intensively approximated.
%against the choice of the bases and the variety of the coefficients
%in the Besov space.
In other words, deep neural network can properly extracts the feature of the input (which corresponds to construct an appropriate set of bases)
to approximate the target function in the most efficient way.

%\paragraph{Comparison to existing researches}
%
%As stated above, \citet{DBLP:journals/corr/Yarotsky16} has shown the approximation accuracy as 
%$O(S^{-s/d} \poly \log(S))$ to estimate a function in \Holder space.
%The bound is shown by locally fitting polynomial functions based on the Taylor expansion,
%and the convergence is given with respect to $L^\infty$-norm.
%On the other hand, our bound is derived on a much more general class 
%such as the Besov space, and the convergence is guaranteed with respect to several norms $L^r~(r>0)$.
%Our bound is shown through wavelet-like analysis instead of local polynomial approximation, and thus 
%realizes more flexible analysis.

\subsection{Approximation error analysis for m-Besov space}

Here, we deal with m-Besov spaces instead of the ordinary Besov space.
The next theorem gives the approximation error bound to approximate functions in the m-Besov spaces
by deep neural network models.
Here, define $D_{k,d} := \left(1 + \frac{d-1}{k}\right)^k \left(1 + \frac{k}{d-1}\right)^{d-1}.$
Then, we have the following theorem.

%one of the main results of this paper.
%\begin{Theorem}
%\label{thm:ReLUNNApproximation}
%Let $N \in \Natural$.
%Suppose  that $0 < s$, $0 < p,q,r \leq \infty$ and $1/p < s$. %, consider $\ftrue \in \MB^{s}_{p,q}(\Omega)$. 
%For all $\ftrue \in \MB^{s}_{p,q}(\Omega)$ with $\|\ftrue\|_{\MB^{s}_{p,q}} \leq 1$, 
%there exits a choice of $(L,W,S,B)$ with
%\begin{itemize}
%\item Width $W = O(N d \log(N))$
%\item Depth $L = O(\log(d) \log(N))$
%\item Sparsity $S = O(W \times L \times d \log(N))$
%\item Parameter bound $B= O(N)$
%\end{itemize}
%%and the parameters are bounded by $\|\Well{\ell}\|_\infty, \|\bvecell{\ell}\|_\infty < O(N)$
%such that there exists ReLU-NN $\fcheck \in \Phi(L,W,S,B)$ satisfying the following bound: \\
%\begin{subequations}
%\label{eq:NeuralBound}
%(i) For $p \geq r$,
%\begin{align}
%\|\ftrue -  \check{f}\|_{L^r(\Omega)} \lesssim  
%\begin{cases}
%N^{- s} \log(N)^{s(d-1)} & (q \leq \min\{r,1\}), \\
%N^{- s} \log(N)^{(s + 1/r - 1/q)(d-1)} & (q> \min\{r,1\}, r \leq 1), \\
%N^{- s} \log(N)^{(s + 1 - 1/q)(d-1)} &  (q> \min\{r,1\}, r > 1).
%\end{cases}
%\end{align}
%(ii) For $p < r$,
%\begin{align}
%\|\ftrue -  \check{f}\|_{L^r(\Omega)} \lesssim 
%\begin{cases}
%N^{- (s - 1/p + 1/r)} \log(N)^{(d-1)(s - 1/p + 1/r + (1/r - 1/q)_+)} & (r < \infty), \\
%N^{- (s - 1/p)} \log(N)^{(d-1)(s - 1/p  + (1 - 1/q)_+)} & (r = \infty).
%\end{cases}
%\end{align}
%\end{subequations}
%
%\end{Theorem}

\begin{Theorem}[Approximation ability for m-Besov space]
\label{eq:mBesovApproxByNN}
Suppose that $0 < p,q,r \leq \infty$ and $s < \infty$ satisfies
%the following condition:
%the one of the following condition: %s:
%\begin{align}
$s > (1/p - 1/r)_+.$
%\label{eq:sprConditionMixed}
%\\~~\text{(ii)} & ~~~   s = d/p, q \leq \min\{1,r\}, p,r < \infty.& & &
%\end{align}
Assume that $m \in \Natural$ satisfies $0 < s < \min( m , m -1 + 1/p)$.
%\end{subequations}
%\begin{subequations}
%\label{eq:sprConditions}
%\begin{flalign}
%~~\text{(i)} &  ~~~  s > d(1/p - 1/r)_+, &  & & 
%\\~~\text{(ii)} & ~~~   s = d/p, q \leq \min\{1,r\}, p,r < \infty.& & &
%\end{flalign}
%\end{subequations}
%\begin{enumerate}
%\item[(i)] $s > d(1/p - 1/r)_+$,
%\item[(ii)] $s = d/p$, $q \leq \min\{1,r\}$, $p,r < \infty$.
%\item[(iii)] $r \leq p$.
%\end{enumerate}
Let $\delta  = (1/p - 1/r)_+$ and $\nu = (s - \delta)/(2\delta)$.
For any $K \geq 1$, let $K^* = \lceil K(1 + \frac{2\delta}{\alpha - \delta})  \rceil $.
%$(j_i^{(k)})_{i=1}^{n_k} \subset J_m^d(k)$, 
%and $n_k = \lceil 2^{K -\frac{\alpha - \delta}{2 \delta} (\|k\|_1 - K)} \rceil$.
Then, for $N =  (2 + (1 - 2^{- \nu})^{-1} ) 2^K D_{K^*,d} $, %sufficiently large $N \in \Natural$, 
if we set  
\begin{flalign*}
& \textstyle L = 3 +  2 \left\lceil \log_2\left(  \frac{3^{d\vee m} }{ c_{(d,m)}}  \right)+5 
+
(s + (\frac{1}{p} -s)_+ + 1) K^* + \log([e(m+1)]^d (1+K^*))
\right\rceil \lceil\log_2(d \vee m)\rceil,  \\
& W =N W_0, ~~
S = (L-1) N W_0^2 + N,~~
B = O(N^{(\nu^{-1} + 1) (1\vee (1/p - s)_+) }),
\end{flalign*}
%\begin{enumerate}
%%\setlength{\leftmargini}{-100pt}
%\setlength{\itemindent}{-20pt}
%\item $L = 3 +  2 \left\lceil \log_2\left(  \frac{3^{d\vee m} }{ c_{(d,m)}}  \right)+5 
%+
%(s + (\frac{1}{p} -s)_+ + 1) K^* + \log([e(m+1)]^d (1+K^*))
%\right\rceil \lceil\log_2(d \vee m)\rceil$, 
%\item  $W =N W_0$, 
%\item  $S = (L-1) N W_0^2 + N$,
%\item  $B = O(N^{(\nu^{-1} + d^{-1}) (1\vee (d/p - s)_+) })$,
%\end{enumerate}
%$\calF(L,W,S,B)$ satisfies 
then it holds that \\
%Otherwise, it holds that
%On the other hand, even if the condition \eqref{eq:sprConditions} is not satisfied,
%for $0 < p, q, r \leq \infty$ and $0 < s$, it holds that  
%$$
%R_r(\calF(L,W,S,B),B^s_{p,q}([0,1]^d))
%\lesssim  N^{ - s/d + (1/p - 1/r)_+}.
%$$
\begin{subequations}
\label{eq:NeuralBoundmBesov}
\text{(i) For $p \geq r$,}
\begin{flalign}
 &  ~~~~~~~R_r(\Phi(L,W,S,B),MB^s_{p,q}([0,1]^d))
  \lesssim  
2^{-K s } D_{K,d}^{(1/\min(r,1) - 1/q)_+}, %\\% & (r < \infty), \\
\end{flalign}
\noindent \text{(ii) For $p < r$,} %& ~~~~~~~
%\begin{align}
\begin{flalign}
R_r(\Phi(L,W,S,B),MB^s_{p,q}([0,1]^d))
 \lesssim 
\begin{cases}
2^{-K s } D_{K,d}^{(1/r - 1/q)_+} & (r < \infty), \\
2^{-K s } D_{K,d}^{(1 - 1/q)_+}  & (r = \infty).
\end{cases}
\end{flalign}
\end{subequations}
%\item[(i)] For $p \geq r$, 
%\begin{align*} 
%\|f - R_K(f)\|_r \lesssim
%%\begin{cases}
%2^{-K \alpha } D_{K,d}^{(1/\min(r,1) - 1/q)_+} \|f\|_{MB_{p,q}^s}.% & (r < \infty), \\
%%2^{-K \alpha } D_{K,d}^{(1/r - 1/q)} & (r = \infty).
%%\end{cases}
%\end{align*}
%\item[(ii)] For $p < r$, 
%\begin{align*}
%\|f - R_K(f)\|_r \lesssim 
%\begin{cases}
%2^{-K \alpha } D_{K,d}^{(1/r - 1/q)_+}  \|f\|_{MB_{p,q}^s}& (r < \infty), \\
%2^{-K \alpha } D_{K,d}^{(1 - 1/q)_+}  \|f\|_{MB_{p,q}^s}& (r = \infty).
%\end{cases}
%\end{align*}

\end{Theorem}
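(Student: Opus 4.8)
The plan is to mirror the quasi-interpolant strategy of Proposition~\ref{prop:BesovApproxByNN}, but to replace the full dyadic grid by a \emph{sparse grid} so that the dimension enters only through the polynomial factor $D_{K,d}$ rather than exponentially. First I would invoke the sequence-space (quasi-interpolant) characterization of the m-Besov norm: any $\ftrue \in \MB^s_{p,q}([0,1]^d)$ admits an expansion
\begin{equation*}
\ftrue = \sum_{\mathbf{k} \in \Natural^d} \sum_{\mathbf{j}} c_{\mathbf{k},\mathbf{j}}\, M_{\mathbf{k},\mathbf{j}}^d,
\end{equation*}
in tensor-product cardinal B-splines, whose coefficients $(c_{\mathbf{k},\mathbf{j}})$ satisfy a norm equivalence with $\|\ftrue\|_{\MB^s_{p,q}}$ indexed by the level vector $\mathbf{k}$ and the location $\mathbf{j}$. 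The mixed-smoothness structure means the controlling index is the \emph{total} resolution $|\mathbf{k}|_1 = \sum_i k_i$ rather than each $k_i$ separately, which is exactly what opens the door to a Smolyak-type truncation.

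The core step is to truncate this expansion to the sparse grid $\{\mathbf{k} : |\mathbf{k}|_1 \leq K^*\}$ and, within it, retain the $N$ terms that carry the error. The number of admissible $(\mathbf{k},\mathbf{j})$ pairs up to level $K$ grows like $2^K$ times a polynomial in $K$ of degree $d-1$; this count is precisely what $D_{K,d} = (1+\tfrac{d-1}{K})^K(1+\tfrac{K}{d-1})^{d-1}$ encodes, so the choice $N = (2 + (1-2^{-\nu})^{-1})\,2^K D_{K^*,d}$ matches the budget. Bounding the truncation error $\|\ftrue - \sum(\cdots)\|_r$ against the m-Besov seminorm produces the decay factor $2^{-Ks}$, while the passage from the native $\ell_q$ summation over levels to the $L^r$ (equivalently $\ell_{\min(r,1)}$) block-sum yields, via \Holder over the at most $D_{K,d}$ indices, the power $D_{K,d}^{(1/\min(r,1)-1/q)_+}$ of case~(i); here $\delta=(1/p-1/r)_+=0$, so $K^*=K$ and a plain sparse grid suffices. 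In the regime $p<r$ of case~(ii) a fixed set of bases is obstructed by the best $N$-term lower bound~\eqref{eq:KolmorogororovBesov}, so instead I would use adaptivity, selecting high-resolution terms out to the enlarged level $K^* = \lceil K(1+\tfrac{2\delta}{\alpha-\delta})\rceil$ through the embedding $\MB^s_{p,q} \hookrightarrow \MB^{s-\delta}_{r,q}$ with $\delta = (1/p-1/r)_+$, where $\nu = (s-\delta)/(2\delta)$ governs the trade-off between the number of retained terms and the accuracy attained.

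Finally I would replace each tensor-product B-spline $M_{\mathbf{k},\mathbf{j}}^d$ by the ReLU network $\check{M}$ of Lemma~\ref{lemm:Mnapproximation}, folding the affine maps $x_i \mapsto 2^{k_i}x_i - j_i$ into the first layer to realize the required dilation and translation (so that $M_{\mathbf{k},\mathbf{j}}^d = M_{0,0}^d$ precomposed with a diagonal affine map), and parallelizing the $N$ sub-networks with the coefficients $c_{\mathbf{k},\mathbf{j}}$ loaded into the final linear layer. The accumulated replacement error is at most $N\,(\max_{\mathbf{k},\mathbf{j}}|c_{\mathbf{k},\mathbf{j}}|)\,\epsilon$ (using the bounded support of $\check{M}$ to pass from $L^\infty$ to $L^r$), so taking the per-spline accuracy $\epsilon$ polynomially small in $N$, as in the stated depth $L$, drives it below the truncation error; the width $W = N W_0$, sparsity $S = (L-1)N W_0^2 + N$, and norm bound $B = O(N^{(\nu^{-1}+1)(1\vee(1/p-s)_+)})$ then follow by tallying one copy of $\check{M}$ against $N$ and against the coefficient magnitudes. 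The main obstacle is the second and third steps: pinning down the exact exponent of $D_{K,d}$ demands a careful interchange of quasi-norm summations (invoking $p$-triangle inequalities when $p,q<1$, where $\MB^s_{p,q}$ is only a quasi-Banach space), and the $p<r$ case requires a genuinely adaptive selection spread over levels up to $K^*$ rather than any fixed grid.
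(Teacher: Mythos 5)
Your proposal follows essentially the same route as the paper's proof: the quasi-interpolant tensor-product B-spline decomposition with the $b_q^{\alpha}(L^p)$ norm equivalence, a Smolyak sparse-grid truncation at $\|k\|_1 \leq K$ augmented for $p<r$ by an adaptive selection of terms on levels up to $K^*$ (the paper's Theorem \ref{eq:mBesoApproxSpline} packages these two steps, realizing the adaptivity as a per-level greedy choice of the $n_k$ largest coefficients combined with a Nikolskii-type $L^p$-to-$L^r$ bound), and finally the substitution of each $M_{k,j}^d$ by the ReLU network of Lemma \ref{lemm:Mnapproximation} with the same parameter accounting. The only cosmetic deviations are your description of the adaptive step via an embedding rather than explicit greedy thresholding, and your cruder $N\max_{k,j}|c_{k,j}|\,\epsilon$ substitution-error bound in place of the paper's pointwise bound exploiting the $(m+1)^d$-bounded overlap of the splines; neither changes the argument.
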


The proof is given in Appendix \ref{sec:ProofmBesovApproxNN}.
Now, the number $S$ of non-zero parameters %cardinal B-spline bases 
for a given $K$ is evaluated as $S = \Omega(N) \simeq 2^{K} D_{K,d}$ in this theorem. % in Theorem \ref{eq:mBesovApproxByNN}.
It holds that $N \simeq 2^K K^{(d-1)}$, which implies  %$K \simeq \log_2(M) - (d-1) \log_2 \log(M)$ and hence 
$2^{- K} \simeq N^{-1} \log^{d-1}(N)$ if $N\gg d$ (see also the discussion right after Theorem \ref{eq:mBesoApproxSpline}
in Appendix \ref{sec:SparseGridBound} for more details of calculation).
%From this theorem, we see that 
%at most $S = O(N d ^2\log(N)^2)$ number of parameters 
%is sufficient to achieve the precision $O(N^{-s} \poly \log(N))$.
Therefore, when $r \gg q$, the approximation error is given as $O(N^{-s} \log^{s(d-1)}(N))$ in which 
the effect of dimensionality $d$ is much milder than that of Proposition \ref{prop:BesovApproxByNN}.  %does not appear in the exponent of the convergence rate (except the logarithmic term).
%In particular, the precision can be rewritten as $O(S^{-s} \poly \log(S))$ with respect to the number $S$ of parameters.
This means that the curse of dimensionality is much eased in the mixed smooth space.

The obtained bound is far from obvious. Actually, it is better than any linear approximation methods as follows.
Let the linear $M$-width introduced by \cite{tikhomirov1960diameters} be 
$$
\lambda_N(MB_{p,q}^s,L^r) := \inf_{L_N} \sup_{f \in U(MB_{p,q}^s)} \|f - L_N(f)\|_r,
$$
where the infimum is taken over all linear oprators $L_N$ with rank $N$ from $MB_{p,q}^s$ to $L^r$.
The linear $N$-width of the m-Besov space has been extensively studies as in the following proposition (see Lemma 5.1 of \cite{Complexity:Dung:2011}, and \cite{Romanyuk2001}).

\begin{Proposition}
\label{eq:LinearWidthmBesov}
Let $1 \leq p,r \leq \infty$, $0 < q \leq \infty$ and $s > (1/p - 1/r)_+$.
Then we have the following asymptotic order of the linear width for the asymptotics $N \gg d$: \\
%\begin{enumerate}
%\item 
(a) For $p \geq r$,
%{\small
\begin{align*}
\lambda_N(MB_{p,q}^s,L^r) \simeq 
\begin{cases}
(N^{-1} \log^{d-1}(N))^s & 
{\small
\begin{cases} (q \leq 2 \leq r \leq p < \infty), \\
%(N^{-1} \log^{d-1}(N))^s 
 (q \leq 1, ~p = r =  \infty), \\
%(N^{-1} \log^{d-1}(N))^s 
 (1 < p = r \leq 2,~q \leq r), 
\end{cases} 
}
\\
(N^{-1} \log^{d-1}(N))^s (\log^{d-1}(N))^{1/r - 1/q} & (1 < p = r \leq 2,~q > r), \\
(N^{-1} \log^{d-1}(N))^s (\log^{d-1}(N))^{(1/2 - 1/q)_+} & (2 \leq q,~1 < r < 2 \leq p < \infty),
\end{cases}
\end{align*}
%}
(b) For $1 < p < r < \infty$,
\begin{align*}
\lambda_M(MB_{p,q}^s,L^r) \simeq 
\begin{cases}
(N^{-1} \log^{d-1}(N))^{s +  1/r - 1/p} & (2 \leq p,~2 \leq q \leq r),  \\
(N^{-1} \log^{d-1}(N))^{s +  1/r - 1/p}(\log^{d-1}(N))^{(1/r - 1/q)_+} & (r \leq 2).  \\
\end{cases}
\end{align*}
%\end{enumerate}

\end{Proposition}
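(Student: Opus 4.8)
The plan is to convert the linear-width problem for $\MB_{p,q}^s$ into an equivalent problem over sequence spaces, solve it there, and transfer the result back. A function $f \in \MB_{p,q}^s$ admits a tensorized wavelet (equivalently, a cardinal B-spline quasi-interpolant) expansion $f = \sum_{k \in \Natural^d} \sum_j c_{k,j}\,\psi_{k,j}$, where $k$ is a level vector and $j$ indexes the spatial location within level $k$. The first step is the norm equivalence (for $L^p$-normalized basis functions)
$$
\|f\|_{\MB_{p,q}^s} \simeq \Big\| \big( 2^{s|k|_1}\, \|(c_{k,j})_j\|_{\ell_p} \big)_{k} \Big\|_{\ell_q},
$$
with $|k|_1 = \sum_{i=1}^d k_i$, which for $p=q$ reflects the $\delta_p$-tensor-product structure \cite{sickel2009tensor}; a companion two-sided estimate relates $\|f\|_{L^r}$ to the weighted $\ell_r$-norm of the coefficients, the change of normalization introducing block factors $2^{|k|_1(1/p-1/r)}$. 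With these in hand, $\lambda_N(\MB_{p,q}^s,L^r)$ equals, up to constants, the linear $N$-width of the weighted $\ell_p$-type sequence ball mapped into the weighted $\ell_r$-type sequence space.

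\textbf{Upper bound.} For the upper bound I would construct an explicit rank-$N$ linear operator of Smolyak type \cite{smolyak1963quadrature}: fix a top retained level $J$, keep all blocks with $|k|_1 \le J$ exactly (a hyperbolic cross), and on deeper blocks apply the best finite-dimensional linear approximation. Since the number of blocks at level $|k|_1 = \ell$ is $\simeq \ell^{d-1}$ and each carries $\simeq 2^{\ell}$ coefficients, the retained dimension is $N \simeq 2^J J^{d-1}$, which is exactly the source of the $\log^{d-1}(N)$ factors (cf. $N \simeq 2^K D_{K,d}$ in the excerpt). The per-block contribution is controlled by the classical linear width $\lambda_m(\ell_p^{M},\ell_r^{M})$ of finite-dimensional balls, whose asymptotics depend on the positions of $p$ and $r$ relative to $2$. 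Allocating the rank budget $N$ across levels to balance the resulting error terms, and then aggregating the block errors in $\ell_q$ over levels, yields the stated upper rates, including the extra $(\log^{d-1}N)^{(1/r-1/q)_+}$ and $(\log^{d-1}N)^{(1/2-1/q)_+}$ factors that are produced precisely by the $\ell_q$-summation over the $\simeq \ell^{d-1}$ blocks at each level.

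\textbf{Lower bound.} For the matching lower bound I would restrict to a single critical level (or a thin band of levels), embedding a finite-dimensional ball of dimension $M \simeq 2^\ell \ell^{d-1}$ on which the m-Besov ball contains a scaled copy of the $\ell_p^M$-ball and the $L^r$-norm is comparable to the weighted $\ell_r^M$-norm. I would then invoke the known lower bounds for linear widths of $\ell_p^M$ in $\ell_r^M$ (Kashin--Gluskin--Pietsch type estimates), optimizing $\ell$ to meet the upper bound. The dichotomies in the statement, $p \ge r$ versus $p < r$ with each case split at $2$, mirror exactly the regimes of the finite-dimensional width asymptotics, so the two bounds coincide case by case.

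\textbf{Main obstacle.} The essential difficulty is not the $(N^{-1}\log^{d-1}N)^s$-type main rate but pinning down the exact power of $\log(N)$ in each regime and proving it sharp on both sides. This needs (a) the correct finite-dimensional width asymptotics for every position of $p,r$ relative to $2$, (b) an optimal rank allocation spread across $\simeq J$ active levels rather than one critical level when $q$ interacts nontrivially with the level-counting, and (c) careful treatment of the borderline $p=r\le 2$ case where the $\ell_q$ summation over blocks shifts the exponent. I expect the heaviest lifting in the mixed regime $1<p<2<r$, where Gluskin-type random constructions are required to match the lower bound; reconciling these with the sharp logarithmic powers recorded in \cite{Complexity:Dung:2011} and \cite{Romanyuk2001} is the crux.
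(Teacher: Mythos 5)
You should know that the paper does not prove this proposition at all: it is quoted as a known result, with a pointer to Lemma 5.1 of \cite{Complexity:Dung:2011} and to \cite{Romanyuk2001}. So the comparison here is really between your sketch and the proofs in that cited literature, and on that score your outline is essentially the standard route by which these asymptotics are established: discretize via the tensorized B-spline/wavelet block decomposition with the norm equivalence $\|f\|_{\MB_{p,q}^s}\simeq \|(2^{s\|k\|_1}\|(c_{k,j})_j\|_{\ell_p})_k\|_{\ell_q}$ (the paper itself proves exactly this equivalence, in the form $\|f\|_{MB^\alpha_{p,q}}\simeq\|(p_k)_k\|_{b_q^\alpha(L^p)}$, inside the proof of its Theorem 5 on sparse-grid approximation); build the upper bound from a Smolyak/hyperbolic-cross operator with rank budget $N\simeq 2^J J^{d-1}$ allocated across levels using finite-dimensional width asymptotics of $\lambda_m(\ell_p^M,\ell_r^M)$; and match it from below by planting scaled $\ell_p^M$-balls on a band of levels $\|k\|_1=\ell$ of total dimension $M\simeq 2^\ell\ell^{d-1}$ and invoking Gluskin-type lower bounds. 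Your identification of where the extra factors $(\log^{d-1}N)^{(1/r-1/q)_+}$ and $(\log^{d-1}N)^{(1/2-1/q)_+}$ come from (the $\ell_q$-aggregation over the $\simeq\ell^{d-1}$ blocks per level) is also the correct mechanism.

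That said, be clear about what your proposal actually delivers versus defers. The items you list as the ``main obstacle'' --- the sharp finite-dimensional linear-width asymptotics in every position of $p,q,r$ relative to $2$, the Gluskin-type random constructions in the regime $1<p<2<r$, and the optimal multi-level rank allocation when $q$ interacts with the level count --- constitute the bulk of the work in \cite{Romanyuk2001} and \cite{Galeev1996}, and none of it is carried out in your sketch. In particular, for \emph{linear} widths (as opposed to Kolmogorov widths) the lower bounds in the case splits at $2$ require duality arguments and the Gluskin estimates specific to linear widths; a single-critical-level construction does not suffice in the regimes where the log power depends on $q$, and you would need the thin-band version you parenthetically mention, with the $\ell_q$-interaction made quantitative on the lower-bound side as well. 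So: the architecture is right and faithful to how the cited results are proved, but as a standalone proof it is an outline whose hard cases are acknowledged rather than resolved --- which is a reasonable position given that the paper itself treats the statement as imported background rather than a new contribution.
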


Therefore, the approximation error given in 
Theorem \ref{eq:mBesovApproxByNN} achieves the optimal linear width 
($(N^{-1}\log^{d-1}(N))^s$)
%against the effective dimensionality $S$ 
for several parameter settings of $p,q,s$.
In particular, when $p < r$, the bound in Theorem \ref{eq:mBesovApproxByNN} is better than 
that of Proposition \ref{eq:LinearWidthmBesov}.
This is because to prove Theorem \ref{eq:mBesovApproxByNN}, we used an adaptive recovery technique
instead of a linear recovery method.
This implies that, by constructing a deep neural network accurately, we achieve the same approximation accuracy as 
the adaptive one which is better than that of linear approximation.

%
%\paragraph{Comparison to existing researches}
%
%Recently, a relevant analysis has been given by \citet{MontanelliDu2017}.
%Their analysis is given for mixed smooth Sobolev space instead of our mixed smooth Besov space with $s \leq 2$.
%Since Sobolev space is continuously embedded in some Besov space (see Section \ref{sec:FunctionClassDefinitions}),
%their analysis can be derived as a corollary of our analysis.
%Their analysis is limited to $s \leq 2$ in which the analysis is rather easy part.
%However, our bound is valid for all $s > 2$ which is considerably general.

\section{Estimation error analysis}
\label{sec:EstErrorAnalysis}
In this section, we connect the approximation theory to generalization error analysis (estimation error analysis).
For the statistical analysis, we assume the following nonparametric regression model:
$$
y_i = \ftrue(x_i) + \xi_i~~~~(i=1,\dots,n),
$$
where $x_i \sim P_X$ with density $0 \leq p(x) < R$ on $[0,1]^d$, and $\xi_i \sim N(0,\sigma^2)$.
The data $D_n = (x_i,y_i)_{i=1}^n$ is independently identically distributed.
We want to estimate $\ftrue$ from the data.
Here, we consider a regularized learning procedure:
%$\calF(L,W,S,B)$: ReLU-NN with width $W$, depth $L$ ans sparsity $S$ with parameters are bounded by $B$.
$$
\fhat = \argmin_{\bar{f}: f \in \Phi(L,W,S,B)} \sum_{i=1}^n (y_i - \bar{f}(x_i))^2
$$
where $\bar{f}$ is the {\it clipping} of $f$ defined by $\bar{f} = \min\{\max\{f, - F\},F\}$ for $F > 0$
which is realized by ReLU units.
Since the sparsity level is controlled by $S$ and the parameter is bounded by $B$,
this estimator can be regarded as a regularized estimator.
In practice, it is hard to exactly compute $\fhat$.
Thus, we approximately solve the problem by applying sparse regularization such as $L_1$-regularization 
and optimal parameter search through Bayesian optimization.
The generalization error that we present here is an ``ideal'' bound which is valid 
if the optimal solution $\fhat$ is computable. 

\subsection{Estimation error in Besov spaces}
\label{sec:EstErrorAnalysisBesov}

In this subsection, we provide the estimation error rate of deep learning to estimate functions in Besov spaces
by using the approximation error bound given in the previous sections.

\begin{Theorem}
\label{thm:EstimationErrorNNBesov}
Suppose that $0 < p,q \leq \infty$ and $s > d(1/p - 1/2)_+$.
If $\ftrue \in B^{s}_{p,q}(\Omega) \cap L^\infty(\Omega)$ and.  $\|\ftrue\|_{B^{s}_{p,q}} \leq 1$
and $\|\ftrue\|_\infty \leq F$ for $F \geq 1$,  
then %by letting $N \asymp n^{\frac{1}{1+2s}}$, 
letting $(W,L,S,B)$ be as in Proposition \ref{prop:BesovApproxByNN} with 
$N \asymp n^{\frac{d}{2s+d}}$, 
we obtain 
\begin{align*}
\EE_{D_n}[\|\ftrue -  \fhat\|_{\LPi(P_X)}^2] \lesssim 
n^{- \frac{2s}{2s + d}} \log(n)^{2},
\end{align*}
where $\EE_{D_n}[\cdot]$ indicates the expectation w.r.t. the training data $D_n$.
%where $s' = s - 1/p + 1/2$.
\end{Theorem}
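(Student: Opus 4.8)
The plan is to perform the standard bias--variance decomposition for the clipped least-squares estimator over the class $\calF := \{\bar f : f \in \Phi(L,W,S,B)\}$, trading off the approximation error supplied by Proposition~\ref{prop:BesovApproxByNN} against the statistical complexity of $\calF$. The starting point is an oracle inequality of the form
$$
\EE_{D_n}[\|\fhat - \ftrue\|_{L^2(P_X)}^2]
\lesssim
\inf_{f \in \calF} \|f - \ftrue\|_{L^2(P_X)}^2
+ \frac{\sigma^2 \vee F^2}{n}\,\big(\log \mathcal{N}(1/n, \calF, \|\cdot\|_\infty) + 1\big),
$$
valid for empirical risk minimizers of uniformly bounded ($\|\cdot\|_\infty \le F$) regression functions under sub-Gaussian noise. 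I would obtain it from a localized empirical-process / peeling argument, or by invoking a ready-made oracle inequality for bounded ReLU-network regression. The two terms are the squared approximation error (bias) and the estimation error (variance); the remaining work is to control each and optimize over $N$.

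For the bias term I would take $r=2$ in Proposition~\ref{prop:BesovApproxByNN}, which is admissible because the hypothesis $s > d(1/p - 1/2)_+$ is exactly condition \eqref{eq:sprConditions} with $r=2$ (so in particular $\ftrue \in L^2$ by \eqref{eq:BesovLrEmbedding}). The proposition then produces $f \in \Phi(L,W,S,B)$ with $\|f - \ftrue\|_{L^2([0,1]^d)} \lesssim N^{-s/d}$, and since the design density is bounded by $R$ we have $\|\cdot\|_{L^2(P_X)} \le \sqrt{R}\,\|\cdot\|_{L^2([0,1]^d)}$, so the same rate holds in the $L^2(P_X)$ norm. Clipping does not hurt: because $\|\ftrue\|_\infty \le F$, the truncation $\bar f$ is a pointwise contraction toward the interval $[-F,F]$ containing $\ftrue(x)$, hence $|\bar f - \ftrue| \le |f - \ftrue|$ pointwise and $\inf_{f\in\calF}\|f-\ftrue\|_{L^2(P_X)}^2 \lesssim N^{-2s/d}$.

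For the variance term I would bound the metric entropy of $\Phi(L,W,S,B)$ by the standard parameter-counting covering estimate for ReLU networks, $\log \mathcal{N}(\delta,\Phi,\|\cdot\|_\infty) \lesssim S \log(B L W/\delta)$, which follows since each of the $S$ active weights enters the network output in a Lipschitz manner. With the choices of Proposition~\ref{prop:BesovApproxByNN} one has $S \simeq N L W_0^2 \simeq N \log N$ (as $L \simeq \log N$ and $W_0 = O(1)$), while $B = \mathrm{poly}(N)$ and $W = N W_0$, so at scale $\delta = 1/n$ the factor $\log(B L W/\delta) \simeq \log n$ and hence $\log \mathcal{N}(1/n,\Phi,\|\cdot\|_\infty) \lesssim N (\log n)^2$. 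The oracle inequality then yields a variance term of order $\tfrac{N (\log n)^2}{n}$. Balancing this against the bias $N^{-2s/d}$, i.e. solving $N^{-2s/d}\simeq N/n$, gives the prescribed $N \asymp n^{d/(2s+d)}$, for which both terms are $n^{-2s/(2s+d)}$ up to the $(\log n)^2$ factor, producing the claimed rate.

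The main obstacle is the oracle inequality itself rather than the two estimates that feed into it. One must handle a genuinely misspecified nonparametric model ($\ftrue \notin \calF$), pass from the empirical norm $\|\cdot\|_n$ in which $\fhat$ is defined to the population norm $\|\cdot\|_{L^2(P_X)}$, and absorb the unboundedness of the Gaussian noise; each of these steps requires a careful concentration/localization argument and determines the precise power of $\log n$. A secondary technical point is confirming that the truncation to $[-F,F]$ is realizable inside the ReLU class without inflating $(L,W,S,B)$ beyond the orders used in the complexity bound, so that both sides of the trade-off are evaluated for the same function class.
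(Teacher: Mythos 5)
Your proposal is correct and follows essentially the same route as the paper: the paper instantiates your oracle inequality via Proposition \ref{prop:RiskBoundCovering} (imported from \cite{2017arXiv170806633S}, with $\Delta_n = 0$ for the exact empirical risk minimizer) at $\delta = 1/n$, bounds the entropy by the same parameter-counting estimate (Lemma \ref{lemm:CovNPhi}) to get $\log \calN \lesssim N (\log n)^2$, obtains the bias from Proposition \ref{prop:BesovApproxByNN} with $r=2$ combined with the bounded density $p(x) \leq R$, and balances the two terms at $N \asymp n^{\frac{d}{2s+d}}$ exactly as you do. Your explicit remark that clipping is a pointwise contraction toward $[-F,F] \ni \ftrue(x)$, so that passing to the clipped class inflates neither the approximation error nor the covering number, is a small step the paper leaves implicit, but otherwise the two arguments coincide.
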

The proof is given in Appendix \ref{sec:ProofsOfEstimationErrorBounds}.
The condition $\|\ftrue\|_\infty \leq F$ is required to connect the empirical $L^2$-norm $\frac{1}{n} \sum_{i=1}^n (\fhat(x_i) - \ftrue(x_i))^2$ 
to the population $L^2$-norm $\|\fhat - \ftrue\|_{\LPi(P_X)}^2$.
%If we consider a fixed design risk, $\EE_{D_n}[\frac{1}{n} \sum_{i=1}^n (\fhat(x_i) - \ftrue(x_i))^2]$,
%then this assumption can be removed. 
It is known that the convergence rate $n^{-\frac{2s}{2s + d}}$ is mini-max optimal \citep{donoho1998minimax,GineNickl2015mathematical}. Thus, it cannot be improved by any estimator. Therefore, deep learning can achieve the minimax optimal rate up to $\log(n)^2$-order.
The term $\log(n)^2$ could be improved to $\log(n)$ by using the construction of \cite{petersen2017optimal}.
However, we don't pursue this direction for simplicity.

Here an important remark is that this minimax optimal rate cannot be achieved by any {\it linear estimator}.
We call an estimator {\it linear} when the estimator depends on $(y_i)_{i=1}^n$ linearly
(it can be non-linearly dependent on $(x_i)_{i=1}^n$).
Several classical methods such as the kernel ridge regression, the Nadaraya-Watson estimator and the sieve estimator are included in the class of linear estimators (e.g., kernel ridge regression is given as  $\fhat(x) = k_{x,X}(k_{XX} + \lambda \mathrm{I})^{-1}Y$).
The following proposition given by \cite{donoho1998minimax,zhang2002wavelet} states that the minimax rate of linear estimators is 
lower bounded by $n^{- \frac{2s - 2(1/p - 1/2)_+}{2s + 1 - 2(1/p - 1/2)_+}}$ which is larger than the minimax rate 
$n^{- \frac{2s}{2s + 2}}$ if $p < 2$.
\begin{Proposition}[\cite{donoho1998minimax,zhang2002wavelet}]
\label{Prop:LinearMinimax}
Suppose that $d=1$ and the input distribution $P_X$ is the uniform distribution on $[0,1]$. 
Assume that $s > 1/p$, $1 \leq p, q \leq \infty$ or $s=p=q=1$. 
Then, %if the estimator $\fhat$ is linear to the observations $(y_1,\dots,y_n)$, then 
$$
\inf_{\text{$\fhat$: linear}} \sup_{\ftrue \in U(B_{p,q}^s)}\EE_{D_n}[\|\ftrue -  \fhat\|_{\LPi(P_X)}^2]
\gtrsim n^{- \frac{2s - v}{2s + 1 - v}}
$$
where $v = 2/(p \wedge 2) - 1$ and $\fhat$ runs over all linear estimators, that is, 
$\fhat$ depends on $(y_i)_{i=1}^n$ linearly.
\end{Proposition}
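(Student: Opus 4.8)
The plan is to reduce the random-design regression problem to an equivalent Gaussian sequence model, and then exploit the classical fact that \emph{linear} estimators cannot distinguish an orthosymmetric parameter set from its (generally larger) quadratic convex hull. Fix a compactly supported wavelet basis $\{\psi_{jk}\}$ of $L^2([0,1])$ with regularity exceeding $s$, and expand $\ftrue=\sum_{jk}\theta_{jk}\psi_{jk}$ using $L^2$-normalized wavelets. The wavelet characterization of Besov spaces gives $\ftrue\in U(B_{p,q}^s)$ iff the coefficient sequence lies (up to constants) in the Besov sequence ball
\begin{equation*}
\Theta := \Bigl\{\theta : \sum_{j\ge 0} 2^{jq(s+1/2-1/p)}\Bigl(\sum_k |\theta_{jk}|^p\Bigr)^{q/p} \le c\Bigr\},
\end{equation*}
and, under uniform design, the model is asymptotically equivalent (in Le Cam's sense) to the Gaussian sequence model $\tilde\theta_{jk}=\theta_{jk}+(\sigma/\sqrt{n})z_{jk}$ with $z_{jk}\sim N(0,1)$ i.i.d. I would cite the standard asymptotic-equivalence results to carry a linear-estimator lower bound in the sequence model back to the original regression problem. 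Write $\epsilon:=\sigma/\sqrt{n}$, $a:=s+1/2-1/p$, and let $R_L(\cdot):=\inf_{\hat\theta\ \mathrm{linear}}\sup_{\theta}\EE\|\hat\theta-\theta\|_2^2$ denote the minimax linear risk.

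The main tool, which I would invoke from \cite{donoho1998minimax} (building on Donoho--Liu--MacGibbon), is that for any orthosymmetric $\Theta$ one has $R_L(\Theta)=R_L(\mathrm{QHull}(\Theta))$, where the quadratic convex hull is the set whose coordinatewise-squared image is $\overline{\conv}\{(\theta_{jk}^2)_{jk}:\theta\in\Theta\}$; this holds exactly because the worst-case risk of a diagonal linear rule depends on $\Theta$ only through the squared coordinates, and diagonal rules are linear-minimax over orthosymmetric sets. The key computation is that for $p<2$ the quadratic hull of an $\ell_p$ ball of radius $C$ in $\Real^m$ is the \emph{full} $\ell_2$ ball of the same radius: in squared coordinates the $\ell_p(C)$ ball maps to $\{u\ge0:\sum_i u_i^{p/2}\le C^p\}$, whose extreme points are the axis points $C^2e_i$, so its convex hull is $\{u\ge 0:\sum_i u_i\le C^2\}$, i.e. the $\ell_2(C)$ ball. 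Restricting to a single resolution level $j$ yields a sub-body $\Theta_j\subset\Theta$ on which the constraint reads $\|\theta_{j\cdot}\|_p\le 2^{-ja}$, giving $R_L(\Theta)\ge R_L(\Theta_j)=R_L(\mathrm{QHull}(\Theta_j))=R_L\bigl(\{\|\theta_{j\cdot}\|_2\le 2^{-ja}\}\bigr)$ on the $2^{j}$ coefficients at level $j$.

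It then remains to evaluate the linear minimax risk over an isotropic $\ell_2$ ball, the classical Pinsker/Wiener computation: for $\{\|\theta\|_2\le C\}$ in $\Real^m$ the optimal scalar shrinkage $\hat\theta=cy$ gives $R_L=m\epsilon^2C^2/(C^2+m\epsilon^2)\asymp\min(m\epsilon^2,C^2)$. With $m=2^{j}$ and $C=2^{-ja}$ this is $\min(2^{j}\epsilon^2,2^{-2ja})$, and choosing $j^\ast$ to balance the two terms, $2^{j^\ast(1+2a)}=\epsilon^{-2}$, yields $R_L(\Theta)\gtrsim\epsilon^{2-2/(1+2a)}$. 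Substituting $a=s+1/2-1/p$ and $\epsilon^2=\sigma^2/n$, and writing $s'=s-(1/p-1/2)_+$ so that $1+2a=2s'+1$ and $2s'=2s-v$ with $v=2/(p\wedge 2)-1$, this equals $n^{-(2s-v)/(2s+1-v)}$, as claimed; for $p\ge2$ one has $v=0$, $s'=s$, the body is already quadratically convex, and the bound reduces to the ordinary minimax lower rate $n^{-2s/(2s+1)}$. The delicate points I anticipate are (a) justifying the reduction to the sequence model rigorously for random uniform design, which I would handle via Le Cam equivalence or by placing the single-level prior directly on the regression likelihood, and (b) invoking the quadratic-hull identity with the right constants; once these are in place the single-level Pinsker calculation is elementary, and the whole difficulty of the proposition is concentrated in recognizing that linear estimators are forced to pay for the $\ell_2$-ball quadratic hull rather than the spiky $\ell_p$ body itself.
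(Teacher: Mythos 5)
The paper offers no in-house proof of this proposition: it is imported verbatim from \cite{donoho1998minimax,zhang2002wavelet}, and your reconstruction is essentially the argument of those cited sources --- wavelet sequence-space formulation, the Donoho--Liu--MacGibbon identity $R_L(\Theta)=R_L(\mathrm{QHull}(\Theta))$ for orthosymmetric compact sets, the observation that the quadratic hull of an $\ell_p$ ball with $p<2$ is the $\ell_2$ ball of the same radius, and a single-level scalar-shrinkage (Pinsker-type) computation. Your extreme-point argument for the hull is correct ($\|u\|_1\le\|u\|_{p/2}$ for $p/2\le1$ gives one inclusion, the decomposition over the axis points $C^2e_i$ the other), and the exponent arithmetic checks out: with $a=s+1/2-1/p$ the balanced level gives $n^{-2a/(1+2a)}$, and for $p\le2$ one has $2a=2s-v$ and $1+2a=2s+1-v$ with $v=2/p-1$, including the boundary case $s=p=q=1$ where the rate is $n^{-1/2}$; for $p\ge2$ the claim is just the unrestricted minimax rate, obtainable from a level-$j$ hypercube of side $2^{-j(s+1/2)}$ without any hull argument.

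The one step that would fail as written is your item (a). Le Cam asymptotic equivalence transfers minimax risks over \emph{all} estimators, via randomizations (Markov kernels) that do not map linear estimators to linear estimators; it therefore cannot carry a lower bound restricted to the linear class from the Gaussian sequence model back to random-design regression, and your fallback of ``placing the prior directly on the regression likelihood'' also does not suffice by itself, since a Bayes-risk bound is again a bound over all estimators and the worst prior for the hull is not supported in the $\ell_p$ body. The repair is to run the quadratic-hull mechanism directly in the regression model: any linear rule $\fhat=\sum_{i=1}^n y_i\,\omega_i(\cdot\,;x_1,\dots,x_n)$ has risk $\EE\bigl\|\sum_i \ftrue(x_i)\omega_i-\ftrue\bigr\|_2^2+\sigma^2\sum_i\EE\|\omega_i\|_2^2$, and averaging the bias term over i.i.d.\ sign flips of the level-$j$ wavelet coefficients (which preserve the Besov ball, whose wavelet norm depends only on $|\theta_{jk}|$) annihilates the cross terms, leaving a functional that is \emph{linear} in the squared coefficients $(\theta_{jk}^2)$. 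The supremum of this linear functional over the $\ell_p$ body then equals its supremum over the convex hull in squared coordinates, i.e.\ over the $\ell_2$ ball, after which your scalar-shrinkage computation applies with the empirical design handled in expectation; this is in substance how \cite{zhang2002wavelet} extends the white-noise result of \cite{donoho1998minimax} to random design. With that substitution for step (a), the rest of your argument is sound.
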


When $p < 2$, the smoothness of the Besov space is somewhat inhomogeneous,
that is, a function in the Besov space contains spiky/jump parts and smooth parts
(remember that when $s = p = q = 1$ for $d=1$, the Besov space is included in the set of functions with bounded total variation).
Here, the setting $p < 2$ is the regime where 
there appears difference between non-adaptive methods and deep learning in terms of approximation accuracy
(see %Eqs. \eqref{eq:NonAdaptiveLowerBesov} and 
\Eqref{eq:KolmorogororovBesov}).
On the other hand, the linear estimator captures only global properties of the function 
and cannot capture variability of local shapes of the function.
Hence, the linear estimator cannot achieve the minimax optimal rate if the function has spatially inhomogeneous smoothness.
However, deep learning possesses adaptivity to the spatial inhomogeneity. % of the function's smoothness.

We would like to remark that 
The shrinkage estimator proposed in \cite{donoho1998minimax,zhang2002wavelet} achieves the minimax optimal rate for $s > 1/p$ with $d = 1$
and $1 \leq p,q \leq \infty$ which excludes an interesting setting such as $s = p = q = 1$.
However, the result of Theorem \ref{thm:EstimationErrorNNBesov} also covers more general settings where 
$d \geq 1$ and $s > d(1/p - 1/2)_+$ with $0 < p,q \leq \infty$.

\cite{arXiv:Imaizumi+Fukumizu:2018} has pointed out that such a discrepancy between deep learning and linear estimator
appears when the target function is {\it non-smooth}. 
Interestingly, the parameter setting $s > 1/p$ assumed in Proposition \ref{Prop:LinearMinimax} 
ensures smoothness (see \Eqref{eq:BesovContEmbedding}). 
This means that non-smoothness is not necessarily required to characterize the superiority of deep learning,
but {\it non-convexity} of the set of target functions is essentially important. 
In fact, the gap is coming from the property that the {\it quadratic hull} of the model $U(B_{p,q}^s)$ is strictly larger 
than the original set \citep{donoho1998minimax}.
%the superiority of deep learning is not coming from 

\subsection{Estimation error in mixed smooth Besov spaces}

Here, we provide the estimation error rate of deep learning to estimate functions in mixed smooth Besov spaces.

\begin{Theorem}
\label{thm:EstimationErrorNN}
Suppose that $0 < p,q \leq \infty$ and $s > (1/p - 1/2)_+$.
Let $u = (1 - 1/q)_+$ for $p \geq 2$ and $u = (1/2 - 1/q)_+$ for $p < 2$.
If $\ftrue \in \MB^{s}_{p,q}(\Omega) \cap L^\infty(\Omega)$ and $\|\ftrue\|_{\MB^{s}_{p,q}} \leq 1$ and $\|\ftrue\|_\infty \leq F$
for $F \geq 1$,  
then %by letting $N \asymp n^{\frac{1}{1+2s}}$, 
%by setting $N \asymp n^{\frac{1}{1+2s}}$ and 
letting $(W,L,S,B)$ be as in Theorem \ref{eq:mBesovApproxByNN}, % for this choice of $N$,
we obtain 
\begin{align*}
\EE_{D_n}[\|\ftrue -  \fhat\|_{\LPi(P_X)}^2] \lesssim
%\begin{cases}
%n^{- \frac{2s}{2s + 1}}\log(n)^{\frac{u(d-1) + 2sd}{2 s+1}}. 
n^{- \frac{2s}{2s + 1}}\log(n)^{\frac{2(d-1)(u + s)}{1+2s}} \log(n)^2.
%\log(n)^{\{2s(d-1) \vee 2\}} \log(n) & (q \leq 1, p\geq 2), \\
%n^{- \frac{2s}{1 + 2s}} \log(N)^{2(s+1/2-1/q)(d-1) \wedge 4} \log(n) & (s> 1, r \leq 1), \\
%n^{- \frac{2s}{2s + 1}} \log(n)^{\{2(s+1-1/q)(d-1) \vee 2\}} \log(n) & (q > 1, p\geq 2) \\
%n^{- \frac{2s'}{2s' + 1}} \log(n)^{(d-1)(s - 1/p + 1/r + (1/r - 1/q)_+)} \log(n) & (p < 2), \\
%N^{- s} \log(N)^{(s + 1 - 1/q)(d-1)} &  (s> \min\{r,1\}, r > 1).
%\end{cases}
\end{align*}
Under the same assumption, if $s > u\log_2(e)$ is additionally satisfied, we also have 
\begin{align*}
\EE_{D_n}[\|\ftrue -  \fhat\|_{\LPi(P_X)}^2] \lesssim 
%\begin{cases}
n^{- \frac{2s - 2u\log_2(e)}{2s + 1 + (1-2u)\log_2(e)}} 
\log(n)^2.
%\log(n)^{\frac{ 2s - u\log_2(e)}{2s + 1 + (1-u)\log_2(e)}}. 
\end{align*}
%where $s' = s - 1/p + 1/2$.
\end{Theorem}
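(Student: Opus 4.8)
The plan is to run the standard two-stage nonparametric argument used for Theorem \ref{thm:EstimationErrorNNBesov}, splitting the risk into a bias term controlled by the m-Besov approximation rate of Theorem \ref{eq:mBesovApproxByNN} and a variance term controlled by the complexity of $\Phi(L,W,S,B)$. First I would invoke an oracle inequality for the clipped least-squares estimator following the empirical-process argument of \cite{2017arXiv170806633S}: for $\|\ftrue\|_\infty \le F$ and Gaussian noise one obtains a bound of the shape $\EE_{D_n}[\|\ftrue - \fhat\|_{\LPi(P_X)}^2] \lesssim \inf_{f \in \Phi}\|\ftrue - f\|_\infty^2 + F^2 \frac{\log \mathcal{N}(1/n,\Phi,\|\cdot\|_\infty)+1}{n}$, so the risk reduces to the squared approximation error plus a term driven by the sup-norm covering number. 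The second ingredient is the standard entropy bound $\log \mathcal{N}(\delta,\Phi(L,W,S,B),\|\cdot\|_\infty) \lesssim S L \log(\delta^{-1} B L W)$, so with $\delta \asymp 1/n$ the variance term is of order $\frac{S L \log(n)}{n}$ up to the norm and width logarithms.

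Next I would substitute the m-Besov approximation estimate from Theorem \ref{eq:mBesovApproxByNN} with $r=2$: the squared bias is $\lesssim (2^{-Ks} D_{K,d}^{u})^2$, where the exponent $u=(1-1/q)_+$ for $p\ge 2$ and $u=(1/2-1/q)_+$ for $p<2$ is exactly the exponent produced by the two cases of \eqref{eq:NeuralBoundmBesov} (since $\min(r,1)=1$ for $r=2$). Meanwhile the architecture of that theorem has $S \asymp 2^K D_{K,d}\cdot L$ with $L \asymp K^* \asymp \log N$ and $N \asymp 2^K D_{K,d}$. Using $N \asymp 2^K K^{d-1}$, equivalently $2^{-K}\asymp N^{-1}\log^{d-1}(N)$, the squared bias becomes $N^{-2s}\log^{2(s+u)(d-1)}(N)$ and the variance becomes $\frac{N}{n}\log^{2}(n)$, the two logarithms coming from the depth $L$ and from $\log(\delta^{-1})$. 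Minimizing the sum over $N$ gives the balance $N \asymp n^{1/(2s+1)}$ up to poly-log factors; substituting $\log N \asymp (2s+1)^{-1}\log n$ and collecting the logarithms shared between the two terms during balancing yields the first displayed rate $n^{-2s/(2s+1)}\log(n)^{2(d-1)(u+s)/(1+2s)}\log(n)^2$.

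For the second, dimension-free form I would optimize the trade-off directly in $K$ instead of substituting the asymptotic relation $2^{-K}\asymp N^{-1}\log^{d-1}(N)$. Bounding the sparse-grid constant through $(1+(d-1)/K)^K \le e^{d-1} = 2^{(d-1)\log_2 e}$ lets one trade the $\log^{d-1}$ factors for an exponential correction inside the exponent of $n$; the additional hypothesis $s > u\log_2 e$ is exactly what keeps the numerator $2s-2u\log_2 e$ positive, so the resulting power of $n$ is meaningful, and the pervasive $\log_2 e$ is precisely this base-$2$-to-base-$e$ conversion constant. This produces $n^{-(2s-2u\log_2 e)/(2s+1+(1-2u)\log_2 e)}\log(n)^2$, trading a worse polynomial exponent for the removal of the $d$-dependent poly-log factor.

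The main obstacle is the bookkeeping of the poly-logarithmic factors through the balancing step. Because $D_{K,d}$ couples $K$ and $d$ in a non-power-law fashion, neither the bias nor the variance is a clean power of $N$, so extracting both the logarithmic form and the dimension-free polynomial form from the \emph{same} approximation estimate requires a careful choice of optimization variable ($N$ versus $K$) and of the bound applied to $D_{K,d}$, together with precise tracking of how each logarithm is apportioned between the approximation and complexity terms.
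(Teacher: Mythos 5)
Your proposal is correct and follows essentially the same route as the paper's proof: the oracle inequality of Proposition \ref{prop:RiskBoundCovering} combined with the covering bound of Lemma \ref{lemm:CovNPhi}, the $r=2$ case of Theorem \ref{eq:mBesovApproxByNN} giving squared bias $\bigl(2^{-Ks}D_{K,d}^{u}\bigr)^2$ with exactly your $u$, and the same two evaluations $D_{K,d}\lesssim K^{d-1}$ and $D_{K,d}\lesssim e^{K+d-1}=2^{(K+d-1)\log_2 e}$ with the same balancing (the paper optimizes directly in $K$ in both cases, which is equivalent to your $N$-parametrization for the first rate). One cosmetic caveat: the bias term in the oracle inequality is $\inf_{f}\|f-\ftrue\|_{\LPi(P_X)}^2$ rather than a sup-norm, which is in fact what you use when plugging in the $L^2$ approximation rate (via $\|f-\ftrue\|_{\LPi(P_X)}^2\lesssim\|f-\ftrue\|_2^2$, using the bounded density).
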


The proof is given in Appendix \ref{sec:ProofsOfEstimationErrorBounds}.
%The proof is given in Appendix \ref{} in the supplementary material. 
%We note that 
%the minimax optimal rate  to estimate a function in a \Holder space $\calC^s(\Omega)$ 
%is $O(n^{-\frac{2s}{2s + d}})$ which suffers the curse of dimensionality.
%\cite{2017arXiv170806633S} showed that this rate is achieved by deep learning with ReLU activation
%up to $\poly \log(n)$ order.
%On the other hand, 
The risk bound (Theorem \ref{thm:EstimationErrorNN}) indicates 
that the curse of dimensionality can be eased by assuming the mixed smoothness
compared with the ordinary Besov space ($n^{-\frac{2s}{2s + d}}$).
%Actually, it can be written as $n^{- \frac{2s}{2s + 1}} \poly \log(n)$. 
We show that this is almost minimax optimal in Theorem \ref{eq:MinimaxOptimalboundOfmBesov} below.
In the first bound, the dimensionality $d$ comes in the exponent of $\poly \log(n)$ term.
If $u=0$, then the effect of $d$ can be further eased. 
Actually, in this situation ($u=0$), the second bound can be rewritten as 
$$
n^{- \frac{2s}{2s + 1 + \log_2(e)}} \log(n)^2, %{\frac{2s}{2s + 1 + \log_2(e)}},
$$
where the effect of the dimensionality $d$ completely disappears from the exponent.
This explains partially why deep learning performs well for high dimensional data.
\cite{MontanelliDu2017} has analyzed the mixed smooth \Holder space  with $s < 2$.
However, 
our analysis is applicable 
to the m-Besov space which is more general than the mixed smooth \Holder space
and the covered range of $s,p,q$ is much larger.

Here, we again remark the adaptivity of deep learning.
Remind that this rate cannot be achieved by the linear estimator for $p < 2$ when $d=1$
by Proposition \ref{Prop:LinearMinimax}. 
%On the other hand, when $d=1$, the derived bound achieves $n^{- \frac{2s}{2s + 1}}$ which is better than 
%that of the minimax rate for the linear estimator given by Proposition \ref{Prop:LinearMinimax} when $p < 2$.
This explains the adaptivity ability of deep learning to the spatial inhomogeneity of the smoothness.
%of the functions in the Besov space.

\paragraph{Minimax optimal rate for estimating a function in the m-Besov space}

Here, we show the minimax optimality of the obtained bound as follows.
\begin{Theorem}
\label{eq:MinimaxOptimalboundOfmBesov}
%Let $\sprime = \frac{s}{1+q}$.
%Assume $d \leq M/4$.
Assume that $0 < p,q \leq \infty$ and $s > (1/p - 1/2)_+$ and $P_X$ is the uniform distribution over $[0,1]^d$.
Regarding $d$ as a constant, the minimax learning rate in the asymptotics of $n \to \infty$ is lower bounded as follows:
%If the radius of the $\ell_p$-mixed-norm ball $R_p$ satisfies
%$R_p\geq d^{\frac{1}{p}}\sqrt{\frac{\log(M/d)}{n}}$,
There exists a constant $\widehat{C}_1$ such that
\begin{align}
\inf_{\fhat} \sup_{\ftrue \in U(MB_{p,q}^s)}\EE_{D_n}[\|\fhat - \ftrue \|_{\LPiPx}^2] 
\geq \widehat{C}_1 n^{- \frac{2s}{2s + 1}} \log(n)^{\frac{2(d-1)(s + 1/2 - 1/q)_+}{2s+1}}
%\left( d^{1 - \frac{2\sprime}{p(1+\sprime)}} n^{-\frac{1}{1+\sprime}}R_p^{\frac{2 \sprime}{1+\sprime}} + \frac{d \log(M/d)}{n} \right),
\label{eq:minimaxLp}
\end{align}
where ``inf'' is taken over all measurable functions of the observations $(x_i,y_i)_{i=1}^n$ and the expectation is taken for the sample distribution. 
%Similarly, we have the following minimax rate for $p=\infty$:
%\begin{align}
%\inf_{\fhat} \sup_{\fstar \in \calH_{\ell_{\infty}}^{d,q}(R_{\infty})}\EE[\|\fhat - \fstar \|_{\LPi}^2] 
%\geq \tilde{C}_1 \left( d n^{-\frac{1}{1+\sprime}}R_{\infty}^{\frac{2 \sprime}{1+\sprime}} + \frac{d \log(M/d)}{n} \right),
%\label{eq:minimaxLinf}
%\end{align}
%for $R_{\infty}\geq\sqrt{\frac{\log(M/d)}{n}}$.
\end{Theorem}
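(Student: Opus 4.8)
The plan is to establish the minimax lower bound by the standard information-theoretic reduction to a multiple hypothesis testing problem, combined with a careful construction of a hard subfamily that exploits the tensor-product (sparse-grid) structure of the m-Besov space. First I would fix a resolution level $K$ and consider, among the cardinal B-spline basis functions $M_{k,j}^d$ used in Theorem~\ref{eq:mBesovApproxByNN}, the collection whose ``scale'' multi-index $k$ lies on the hyperbolic cross $\{|k|_1 \leq K\}$ that drives the sparse-grid approximation. The number of such basis functions is of order $m := 2^K K^{d-1}$, matching the $N \simeq 2^K K^{d-1}$ relation noted after Theorem~\ref{eq:mBesovApproxByNN}; this is exactly where the $\log^{d-1}(n)$ factors will enter, since $m$ carries the extra $K^{d-1} \asymp \log^{d-1}(n)$ term relative to the single-scale count $2^K$.

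Next I would build a packing set inside $U(MB_{p,q}^s)$. The idea is to take a fixed-sign amplitude $\rho > 0$ times $\pm 1$ perturbations along the chosen basis functions, i.e. functions of the form $f_\omega = \rho \sum_{\ell} \omega_\ell \phi_\ell$ with $\omega \in \{-1,+1\}^m$ and $\phi_\ell$ the normalized B-spline bumps at the top scale. The amplitude $\rho$ must be calibrated so that $\|f_\omega\|_{\MB_{p,q}^s} \leq 1$; this is where the $MB^s_{p,q}$-norm of a superposition of fixed-scale bumps gets computed, and it is precisely this norm bookkeeping that produces the exponent $(s + 1/2 - 1/q)_+$ on the log factor, through the interplay of the $\ell_q$ summation over scales and the $\ell_p$ summation over spatial translations. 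By the Varshamov--Gilbert bound I can extract a subfamily $\{f_\omega\}_{\omega \in \Omega}$ of size $\geq 2^{cm}$ whose members are pairwise separated in $\LPiPx$-distance by $\gtrsim \rho \sqrt{m}$ (using that the $\phi_\ell$ are essentially $L^2$-orthonormal and $P_X$ is uniform).

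I would then apply Fano's inequality (or Assouad's lemma, which fits the hypercube structure even more naturally). The Kullback--Leibler divergence between the regression models indexed by $f_\omega$ and $f_{\omega'}$ under Gaussian noise $N(0,\sigma^2)$ is $\frac{n}{2\sigma^2}\|f_\omega - f_{\omega'}\|_{\LPiPx}^2 \lesssim n \rho^2 m$. Balancing the Fano condition (KL diameter controlled by the packing log-cardinality $cm$) against the separation forces $n\rho^2 m \asymp m$, hence $\rho^2 \asymp 1/n$ at a single scale; but the true optimization is over the choice of $K$, trading the per-coordinate amplitude constraint from the norm bound against the testing budget. Carrying out this optimization in $K$ yields the separation rate $n^{-\frac{2s}{2s+1}}$ with the claimed $\log(n)$-power $\frac{2(d-1)(s + 1/2 - 1/q)_+}{2s+1}$.

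The main obstacle I anticipate is the precise evaluation of the m-Besov seminorm of the perturbation family and its coupling to the sparse-grid cardinality. Unlike the ordinary Besov case, here one must sum the mixed moduli of smoothness $w_{r,p}^e(f,t)$ over all subsets $e \subset \{1,\dots,d\}$ and over the hyperbolic-cross scales, and the $q$-integrability across scales is what distinguishes the $u = (1-1/q)_+$ versus $u = (1/2-1/q)_+$ regimes for $p \geq 2$ and $p < 2$. Getting the log-exponent exactly right — rather than merely up to the correct polynomial-in-$n$ order — requires tracking how the $K^{d-1}$ number of active scales interacts with the $\ell_q$ norm in the seminorm, and this delicate constant-level accounting is the technically demanding heart of the argument. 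The rest (Varshamov--Gilbert, Fano, Gaussian KL) is standard.
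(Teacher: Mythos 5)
Your proposal is correct in outline but takes a genuinely different route from the paper. You construct an explicit hard family by hand: $\pm\rho$ sign perturbations on the B-spline (hyperbolic-cross) system at the extreme layer, a Varshamov--Gilbert packing, and Fano/Assouad with the Gaussian KL bound; calibrating $\rho \asymp 2^{-sK}K^{-(d-1)/q}$ from the sequence-space characterization of $\|\cdot\|_{\MB^s_{p,q}}$ and balancing $n\rho^2 \lesssim 2^K$ does reproduce exactly $n^{-\frac{2s}{2s+1}}\log(n)^{\frac{2(d-1)(s+1/2-1/q)_+}{2s+1}}$, with the $K^{d-1}$ count of scales on the layer $\|k\|_1=K$ supplying the log factor, as you anticipated. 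The paper instead uses the global metric-entropy method of Yang--Barron (in the form used by Raskutti et al.): it takes a maximal $\delta_n$-packing and an $\varepsilon_n$-covering of the whole unit ball $U(\MB_{p,q}^s)$, bounds the minimax risk by $\frac{\delta_n^2}{2}\bigl(1-\frac{\log N + \frac{n}{2\sigma^2}\varepsilon_n^2+\log 2}{\log Q}\bigr)$, and then imports the known entropy estimate for m-Besov balls (Theorem 6.24 of the D\~ung--Temlyakov--Ullrich survey), $\delta_n \gtrsim M^{-s}\log(M)^{(d-1)(s+1/2-1/q)_+}$ with $M=\log N$, so no hard family is ever built. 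What each buys: the paper's route is short and outsources precisely the "delicate constant-level accounting" you identify as the hard part, at the cost of relying on an external entropy theorem; your route is self-contained and structurally illuminating, but in effect re-derives the entropy lower bound, and it has two points needing care. First, "essentially $L^2$-orthonormal" is not automatic: B-splines $M_{k,j}^d$ with different $k$ on the same layer $\|k\|_1=K$ have strongly overlapping supports and are far from orthogonal, so the two-sided $L^2$-equivalence $\|\sum_{k,j}\alpha_{k,j}M_{k,j}^d\|_2^2 \asymp \sum_k 2^{-\|k\|_1}\sum_j \alpha_{k,j}^2$ needs a hyperbolic (tensor-product wavelet-type) Riesz-basis argument rather than the raw scaled splines. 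Second, your attribution of the log exponent to the "$u=(1-1/q)_+$ versus $u=(1/2-1/q)_+$" dichotomy is a misplacement: that dichotomy belongs to the upper bound of Theorem \ref{thm:EstimationErrorNN}, whereas the lower-bound exponent $(s+1/2-1/q)_+$ is independent of $p$ and comes solely from the $\ell_q$ aggregation over the $\asymp K^{d-1}$ scales, exactly as your own calibration shows.
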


The proof is given in Appendix \ref{sec:MinimaxMixedSmooth}.
Because of this theorem, our bound given in Theorem \ref{thm:EstimationErrorNN} achieves the minimax optimal rate 
in the regime of $p < 2$ and $1/2 - 1/q > 0$ up to $\log(n)^2$ order.
Even outside of this parameter setting, the discrepancy between our upper bound 
and the minimax lower bound is just a poly-$\log$ oder.  
See also \cite{neumann2012multivariate} for some other related spaces and specific examples such as $p=q=2$. %other related examples. 

\section{Conclusion}
This paper investigated the learning ability of deep ReLU neural network 
when the target function is in a Besov space or a mixed smooth Besov space.
Based on the analysis for the Besov space, it is shown that 
deep learning using the ReLU activation can achieve the minimax optimal rate 
and outperform the linear method when $p < 2$ which indicates the spatial inhomogeneity of the shape of the target function.
The analysis for the mixed smooth Besov space shows that 
deep learning can adaptively avoid the curse of dimensionality. % by specifying the depth, width and sparsity appropriately.
The bound is derived by sparse grid technique.
All analyses in the paper adopted the cardinal B-spline expansion and 
the adaptive non-linear approximation technique, which allowed us to show the minimax optimal rate.
The consequences of the analyses partly support the superiority of deep leaning 
in terms of adaptivity and ability to avoid curse of dimensionality.
From more high level view point, these favorable property is reduced to 
its high feature extraction ability.

This paper did not discuss any optimization aspect of deep learning.
However, it is important to investigate what kind of practical algorithms can actually achieve the optimal rate derived in this paper
in an efficient way.
We leave this important issue for future work.

\section*{Acknowledgment}
TS was partially supported by MEXT Kakenhi (25730013, 25120012, 26280009, 15H05707 and
18H03201), Japan Digital Design, and JST-CREST.

\bibliographystyle{iclr2019_conference}
\bibliography{main,main_colt}

\appendix

\begin{comment}
\section{Another approximation error bound for cardinal B-splines}

\begin{Proposition}[\cite{chui1994neural}]
\label{prop:ChuiMhaskarMapprox}
Assume that the activation function satisfies \Eqref{eq:activation_kterm} with $k \geq 2$.
%\footnote{Note that this situation excludes  %ReLU activation.}.
Let 
$L = \lceil \log(md -d )/\log(k)\rceil + 1$,
and $W = 2 d m^d (k+1) {k^L +d  \choose d}$. 
Then, for all $\epsilon > 0$, 
there exists $S_\epsilon$ depending on $\epsilon$
such that there exists a neural network 
$
\check{M} \in \Phi(L,W,S_\epsilon,B) 
$ satisfying 
$$
\|M_{0,0}^d - \check{M}\|_{L^\infty([0,1]^d)} \leq \epsilon.
$$
%In particular, 
\end{Proposition}
\end{comment}

\section{Proof of Lemma \ref{lemm:Mnapproximation}}
\label{sec:ProofBsplineApprox}
\begin{proof}[Proof of Lemma \ref{lemm:Mnapproximation}]
First note that $\calN_m(x) = \frac{1}{m!} \sum_{j=0}^{m+1} (-1)^j {m+1 \choose j} ( x-j)_+^m$ (see Eq. (4.28) of \cite{mhaskar1992approximation} for example).
Thus, if we can make an approximation of $\eta(x)^m$, then by taking a summation of those basis, we  obtain 
an approximate of $\calN_m(x)$.
It is shown by \cite{DBLP:journals/corr/Yarotsky16} that,
for $D \in \Natural$ and any $\epsilon > 0$,  there exists a neural network $\phi_{\mathrm{mult}} \in \Phi(L,W,S,B)$ with
$L = \lceil \log_2\left( \frac{3^D}{\epsilon}  \right)+5 \rceil \lceil\log_2(D)\rceil$, $W = 6d $, $S = LW^2$ and  $B = 1$  such that 
$$
\sup_{x \in [0,1]^d} \left | \phi_{\mathrm{mult}}(x_1,\dots,x_D)  - \prod_{i=1}^D x_ i \right | \leq \epsilon,
$$
and $\phi_{\mathrm{mult}}(0,\dots,0) = 0$
for $y \in \Real^d$ such that $\prod_{j=1}^d y_j=0$. 
Moreover, for any $M > 0$, we can realize the function 
$\min\{M,\max\{x,0\}\}$ by a single-layer neural network $\phi_{(0,M)}(x) := \eta(x) - \eta(x-M) (=\min\{M,\max\{x,0\}\})$.
Thus, for $x \in \Real$, it holds that
$$
\sup_{x \in [0,M]} \left|
\phi_{\mathrm{mult}}(\phi_{(0,1)}(x/M),\dots,\phi_{(0,1)}(x/M)) - (\phi_{(0,1)}(x/M))^m
\right| \leq \epsilon.
$$
Now, since $\calN_m(x) = 0$ for $x \not\in [0,m+1]$, it also holds
$
\calN_m(x) = \frac{1}{m!} \sum_{j=0}^{m+1} (-1)^j {m+1 \choose j}\phi_{(0,m+1 -j)}(x - j)^m
= \frac{1}{m!} \sum_{j=0}^{m+1} (-1)^j {m+1 \choose j} (m+1)^m \phi_{(0,1 -j/(m+1))}((x-j)/(m+1))^m.
$
Therefore, letting 
$$
f(x) = \frac{1}{m!} \sum_{j=0}^{m+1} (-1)^j (m+1)^m
{m+1 \choose j} 
\phi_{\mathrm{mult}}\Bigg(\underbrace{\phi_{(0,1 -\frac{j}{m+1})}\left(
\frac{x-j}{m+1}\right),\dots,\phi_{(0,1 -\frac{j}{m+1})}\left(\frac{x-j}{m+1}\right)}_{\text{$m$-times}}\Bigg),
$$ 
we have that $f(x) = 0$ for all $x \leq 0$ and 
\begin{align*}
& \sup_{0 \leq x \leq m+1} | \calN_m(x)  -
f(x) | \leq \frac{1}{m!} \sum_{j=0}^{m+1} {m+1 \choose j} (m+1)^m \epsilon
\leq \frac{(m+1)^m}{\sqrt{2\pi} m^{m+1/2} e^{-m}} 2^{m+1} \epsilon  \\
&~~~~~~~~~~~ \leq e \frac{ (2e)^m}{\sqrt{m}} \epsilon =: \epsilon',
\end{align*}
where we used $ \sum_{j=0}^{m+1} {m+1 \choose j}  = 2^{m+1}$
and Stirling's approximation $m! \geq \sqrt{2\pi} m^{m+1/2} e^{-m}$ in the second inequality.
Hence, we also have
\begin{align*}
%& 
%f(x) = 0~(\forall x < 0), \\
& f(x) = 
 \frac{1}{m!} \sum_{j=0}^{m+1} (-1)^j  {m+1 \choose j} (m+1)^m  \\
& ~~~~~~~~~~~~~~~~~~~~~~~~\times \phi_{\mathrm{mult}}\left(\phi_{(0,1 -\frac{j}{m+1})}\left(\frac{m+1-j}{m+1}\right),\dots,
\phi_{(0,1 -\frac{j}{m+1})}\left(\frac{m+1-j}{m+1}\right)\right) \\
& ~~~~~~~~=: \delta'~~(\forall x > m+1).
\end{align*} 
It holds that $|\delta'| \leq \epsilon'$.
Because of this and noting $0 \leq \calN_m(x) \leq 1$, we see that $g(x) := \phi_{(0,1)}(f(x) - \frac{\delta'}{m+1} \phi_{(0,m+1)}(x)) $ 
yields
$$
\sup_{x \in \Real}|\calN_m(x)  - g(x)| \leq 2 \epsilon',
$$
$\sup_{x \in \Real} |g(x)| \leq 1$, and  $g(x) =0$ for all $x \not \in [0,m+1]$.
Hence, by applying $\phi_{\mathrm{mult}}$ again, we finally obtain that 
\begin{align*}
& \sup_{x \in [0,1]^d} |M_{0,0}^d(x) - \phi_{\mathrm{mult}}(g(x_1),\dots,g(x_d))|  \\
\leq & 
\sup_{x \in [0,1]^d} \left|M_{0,0}^d(x) - \prod_{j=1}^d g(x_j) \right| 
+ \sup_{x \in [0,1]^d} \left |\prod_{j=1}^d g(x_j) - \phi_{\mathrm{mult}}(g(x_1),\dots,g(x_d)) \right| \\
\leq & 
2 d \epsilon' + \epsilon.
\end{align*}
We again applying $\phi_{(0,1)}$, we obtain that 
$h = \phi_{(0,1)} \circ \phi_{\mathrm{mult}}(g(x_1),\dots,g(x_d))$ 
satisfies $\|M_{0,0}^d - h\|_{L^\infty(\Real^d)} \leq 2 d \epsilon' + \epsilon$,
$h(x) = 0$ for all $x \not \in [0,m+1]^d$, and $\|h\|_\infty \leq 1$.
Finally, by carefully checking the network construction, 
it is shown that 
$h \in 
\Phi(L,W,S,B)$ with
$L = 3 +  2 \lceil \log\left(  \frac{3^{d\vee m} }{\epsilon}  \right)+5 \rceil \lceil\log_2(d \vee m)\rceil$, 
$W = 6 d m(m+2) + 2 d$, $S = L W^2$ and  $B = 2 (m+1)^m$.
Hence, resetting $\epsilon \leftarrow 2d \epsilon' + \epsilon = (1 + 2d e \frac{(2e)^m}{\sqrt{m}}) \epsilon$, this $h$ is the desired $\check{M}$.
\end{proof}

\section{Proof of Proposition \ref{prop:BesovApproxByNN}}
\label{sec:ProofPropBesovApp}

For the order $m \in \Natural$ of the cardinal B-spline bases, let $J(k) = \{- m,  - m +1, \dots, 2^k-1, 2^k \}^d$ and 
the quasi-norm of the coefficient $(\alpha_{k,j})_{k,j}$ for $k \in \Natural_+$ and $j \in J(k)$ be 
$$
\| (\alpha_{k,j})_{k,j}\|_{b^s_{p,q}}. =  \left\{\sum_{k \in \Natural_+} \left[ 2^{k(s - d/p)}\Big(\sum_{j \in J(k)} |\alpha_{k,j}|^p\Big)^{1/p}  \right]^q\right\}^{1/q}.
$$

\begin{Lemma}
\label{lemm:BSplineInterpolation}
Under one of the conditions \eqref{eq:sprConditions} in Proposition \ref{prop:BesovApproxByNN}
and the condition $0 < s < \min( m , m -1 + 1/p)$ where $m \in \Natural$ is the order of the cardinal B-spline bases, 
for any $f \in B^s_{p,q}(\Omega)$, there exists $f_N$ 
that satisfies
\begin{align}
\label{eq:ffNoptimalAdaptiveApprox}
\|f - f_N\|_{L^r(\Omega)} \lesssim N^{-s /d} \|f\|_{B^s_{p,q}}
\end{align}
for $N \gg 1$, and has the following form:
\begin{align}
\label{eq:fNformat}
f_N(x) = \sum_{k=0}^K \sum_{j \in J(k)} \alpha_{k,j} M_{k,j}^d(x) + \sum_{k=K+1}^{K^*} \sum_{i=1}^{n_k}
\alpha_{k,j_i} M_{k,j_i}^d(x),
\end{align}
where 
$(j_i)_{i=1}^{n_k}\subset J(k)$,
$K = \lceil C_1 \log(N)/d \rceil$, 
$K^* = \lceil \log(\lambda N) \nu^{-1} \rceil + K + 1$,
$n_k = \lceil \lambda N 2^{-\nu (k - K)}\rceil~
(k=K+1,\dots,K^*)$
for $\delta = d(1/p - 1/r)_+$ and $\nu = (s - \delta)/(2\delta)$, 
 and the real number constants $C_1 > 0$ and $\lambda > 0$ 
are chosen to satisfy
$\sum_{k=1}^K (2^k+m )^d + \sum_{k=K+1}^{K^*} n_k \leq N$ independently to $N$.
Moreover, we can choose the coefficients $(\alpha_{k,j})$ to satisfy 
$$
\|(\alpha_{k,j})_{k,j}\|_{b^s_{p,q}} \lesssim \|f\|_{B_{p,q}^s}.
$$

\end{Lemma}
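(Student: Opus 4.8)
The plan is to construct $f_N$ from a stable cardinal B-spline quasi-interpolant expansion of $f$ and then apply adaptive (nonlinear) term selection level by level. First I would invoke the characterization of the Besov space through B-spline coefficients: under the hypothesis $0 < s < \min(m, m-1+1/p)$, every $f \in B^s_{p,q}(\Omega)$ admits an expansion $f = \sum_{k=0}^\infty \sum_{j \in J(k)} \alpha_{k,j} M_{k,j}^d$ whose coefficient sequence obeys the norm equivalence $\|(\alpha_{k,j})_{k,j}\|_{b^s_{p,q}} \lesssim \|f\|_{B^s_{p,q}}$ \citep{devore1988interpolation,devore1993wavelet,dung2011optimal}. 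This already yields the final coefficient bound asserted in the lemma, so the remaining task is to build $f_N$ of the prescribed form and to control $\|f - f_N\|_{L^r(\Omega)}$.

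Next I would split the expansion at the resolution level $K = \lceil C_1 \log(N)/d \rceil$. Up to level $K$ I keep every basis function; since the number of indices at level $k$ is of order $(2^k + m)^d$, the low-resolution part uses $\sum_{k=0}^K (2^k+m)^d = O(2^{Kd})$ terms, which is $O(N)$ for a suitable $C_1$, and it contributes no approximation error. For each finer level $K < k \leq K^*$ I retain only the $n_k := \lceil \lambda N 2^{-\nu(k-K)} \rceil$ largest coefficients, and I discard the tail $k > K^*$ entirely. The geometric allocation with rate $\nu = (s-\delta)/(2\delta)$ is chosen so that the total cardinality $\sum_{k=0}^K (2^k+m)^d + \sum_{k=K+1}^{K^*} n_k$ stays below $N$ (a convergent geometric sum) while the per-level errors remain balanced.

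The heart of the argument is then to bound $\|f - f_N\|_r$ by three contributions. The low-resolution part is exact. For the discarded tail $k > K^*$, I would convert the $\ell^p$ coefficient bound at each level into an $L^r$ function-norm bound via the (quasi-)stability of the dilated cardinal B-spline system together with the embedding $\ell^p \hookrightarrow \ell^r$, which costs a factor of order $2^{-k\delta}$; since $s > \delta$ by \eqref{eq:sprConditions}, summing over $k$ is geometric and the tail is controlled by a quantity of order $2^{-K^*(s-\delta)}$, which the choice of $K^*$ drives below $N^{-s/d}$. For the intermediate levels $K < k \leq K^*$, dropping all but the $n_k$ largest coefficients is estimated by the Stechkin-type inequality for best $n$-term approximation in $\ell^p$: in $L^r$ this error is at most of order $n_k^{-(1/p - 1/r)}$ times the $\ell^p$-norm of the level, which is precisely the adaptivity gain. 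Summing these estimates against the geometric choice of $n_k$ and the level-wise coefficient decay produced by $\|(\alpha_{k,j})_{k,j}\|_{b^s_{p,q}} \lesssim 1$ yields the claimed rate $N^{-s/d}$.

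The main obstacle will be the $L^r$ error bookkeeping in the adaptive regime $p < r$: one must pass rigorously from $\ell^p$ coefficient control to $L^r$ function norms using the B-spline stability estimates, then apply the nonlinear-approximation inequality uniformly across levels, ensuring the constants are independent of $N$ and that the whole argument remains valid in the quasi-Banach range $p, q < 1$. Balancing the geometric budget $n_k$ so that the intermediate-level error and the high-frequency truncation error are simultaneously $O(N^{-s/d})$ while respecting the cardinality constraint $\sum_{k=1}^K (2^k+m)^d + \sum_{k=K+1}^{K^*} n_k \leq N$ is the delicate optimization at the core of the proof.
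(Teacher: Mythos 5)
Your proposal is correct and takes essentially the same route as the paper: the paper also starts from the DeVore--Popov B-spline quasi-interpolant decomposition of $B^s_{p,q}$ (which yields the coefficient norm equivalence $\|(\alpha_{k,j})_{k,j}\|_{b^s_{p,q}} \simeq \|f\|_{B^s_{p,q}}$) and then obtains $f_N$ of the form \eqref{eq:fNformat} by exactly the adaptive scheme you describe --- keep all levels $k \le K$, retain the $n_k$ largest coefficients for $K < k \le K^*$, truncate beyond $K^*$ --- except that it delegates the Stechkin-type level-by-level bookkeeping to Theorem 3.1 (and Theorem 5.4) of \cite{dung2011optimal} rather than carrying it out explicitly as you sketch. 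One minor slip in your outline: converting a single level from $L^p$ to $L^r$ control costs a factor of order $2^{+k\delta}$, not $2^{-k\delta}$ (the decay $2^{-k(s-\delta)}$ then comes from combining this with the coefficient decay), though your resulting tail bound of order $2^{-K^*(s-\delta)}$ is correct.
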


\begin{proof}[Proof of Lemma \ref{lemm:BSplineInterpolation}]

\cite{devore1988interpolation} % constructed an decomposition method of $f \in B_{p,q}^s$ based on a spline interpolant.
constructed a linear bounded operator $P_k$ having the following form:
\begin{align}
P_k(f)(x) = \sum_{j \in J(k)} a_{k,j} M_{k,j}^d(x)
\label{eq:PkDecomposition}
\end{align}
where $\alpha_{k,j}$ is constructed in a certain way, where 
for every $f \in L^p([0,1]^d)$ with $0 < p \leq \infty$, it holds 
\begin{align}
\| f - P_k(f) \|_{L^p} \leq C w_{r,p}(f,2^{-k}).
\label{eq:fPkwkpIneq}
\end{align}
Let
$$
p_k(f) := P_k(f) - P_{k-1}(f),~~P_{-1}(f) = 0.
$$
Then, it is shown that for $0 < p, q \leq \infty$ and $0 < s < \min( m , m -1 + 1/p)$, 
$f$ belongs to $B_{p,q}^s$ if and only if $f$ can be decomposed into 
$$
f = \sum_{k=0}^\infty p_k(f),
$$
with the convergence condition $\|(p_k(f))_{k=0}^{\infty} \|_{b_p^s(L^p)} < \infty$;
in particular, $\|f\|_{B_{p,q}^s} \simeq \| (p_k(f))_{k=0}^{\infty} \|_{b_p^s(L^p)}
=: (\sum_{k \in \Natural_+} (2^{s k} \|p_k\|_{L^p} )^q)^{1/q}
$.
Here, each $p_k$ can be expressed as $p_k(x) = \sum_{j \in J(k)} \alpha_{k,j} M_{k,j}^d(x)$ 
for a coefficient $(\alpha_{k,j})_{k,j}$ (which could be different from  $(a_{k,j})_{k,j}$ appearing in \Eqref{eq:PkDecomposition}).
Hence, $f \in B^s_{p,q}$ can be decomposed into 
\begin{align}
\label{eq:DevorePopovExpansion}
f = \sum_{k=0}^\infty \sum_{j \in J(k)}  \alpha_{k,j} M_{k,j}^d(x)
\end{align}
with convergence in the sence of $L^p$.
Moreover, it is shown that $\|p_k\|_{L^p} \simeq( 2^{-kd}\sum_{j \in J(k)} |\alpha_{k,j}|^p  )^{1/p}$ and thus
\begin{align}
\|f\|_{B_{p,q}^s} \simeq \| (\alpha_{k,j})_{k,j}\|_{b^s_{p,q}}.
\label{eq:BpqEquivalence}
\end{align}

Based on this decomposition, \cite{dung2011optimal} proposed an optimal adaptive recovery method 
such that 
the approximator has the form \eqref{eq:fNformat} under the conditions for $K,K^*,n_k$ given in the statement 
and satisfies the approximation accuracy \eqref{eq:ffNoptimalAdaptiveApprox}.
This can be proven by applying the proof of Theorem 3.1 in \cite{dung2011optimal} to the decomposition 
\eqref{eq:DevorePopovExpansion} instead of Eq. (3.8) of that paper. See also Theorem 5.4 of \cite{dung2011optimal}.
Moreover, the equivalence \eqref{eq:BpqEquivalence} gives the norm bound of the coefficient $(\alpha_{k,j})$.
\end{proof}

\begin{proof}[Proof of Proposition \ref{prop:BesovApproxByNN}]
Basically, we combine Lemma \ref{lemm:Mnapproximation}
and 
Lemma \ref{lemm:BSplineInterpolation}.
We substitute the approximated cardinal B-spline basis $\check{M}$ into 
the decomposition of $f_N$ \eqref{eq:fNformat}.
%Let the approximated one be $\fcheck$.
Let the set of indexes $(k,j) \in \Natural \times \Natural$ 
that  consists $f_N$ given in \Eqref{eq:fNformat}  
be $E_{N}$: $f_N = \sum_{(k,j)\in E_N} \alpha_{k,j} M_{k,j}^d$.
Accordingly, we set $\fcheck := \sum_{(k,j)\in E_N} \alpha_{k,j} \check{M}_{k,j}^d$.
For each $x \in \Real^d$, it holds that
\begin{align*}
|f_N(x) - \fcheck(x)|
& \leq \sum_{(k,j) \in E_N} |\alpha_{k,j} |  |M_{k,j}^d(x) - \check{M}_{k,j}^d(x)| \\
& \leq \epsilon \sum_{(k,j) \in E_N} |\alpha_{k,j} |   \boldone\{M_{k,j}^d(x) \neq 0\} \\
& \leq \epsilon (m+1)^d 
(1+K^*) 2^{ K^* (d/p-s)_+} \|f\|_{B^s_{p,q}} \\
& \lesssim \log(N) 
N^{(\nu^{-1} + d^{-1})(d/p-s)_+} \epsilon \|f\|_{B^s_{p,q}}, %\\
%& \leq 
%\log(N) 
%N^{  2 d(1/p-1/r)_+/ (s - d(1/p-1/r)_+) + d^{-1}) (d/p-s)_+} \epsilon. % \\
%& \leq 
%\log(N) 
%N^{ - 2 d (1/p-1/r)_+ +  (d/p-s)_+/d} \epsilon
%N^{(d/p-s)_+}\|f\|_{B^s_{p,q}} \epsilon.
%\left(\sum_{k=0}^{K^*} 2^{s k q(1-d/p)}\right)^{1/q}
%K^* \epsilon \max |\alpha_{k,j} |
\end{align*}
where we used the definition of $K^*$ in the last inequality.
Therefore, for each $f \in U(B^s_{p,q}([0,1]^d))$, it holds that 
$$
\|f - \fcheck\|_{L^r}
\lesssim  
\|f - f_N\|_{L^r}
+
\|f_N- \fcheck\|_{L^r}
\lesssim \log(N) N^{(\nu^{-1} + d^{-1})(d/p-s)_+}\|f\|_{B^s_{p,q}} \epsilon + N^{-s/d}.
$$
By taking $\epsilon$ to satisfy 
$\log(N) N^{(\nu^{-1} + d^{-1})(d/p-s)_+}\epsilon \leq N^{-s/d}$
(i.e., $\epsilon \leq N^{-s/d - (\nu^{-1} + d^{-1})(d/p -s)_+}\log(N)^{-1}$), 
then we obtain the approximation error bound.

Next, we bound the magnitude of the coefficients. 
Each coefficient $\alpha_{j,k}$ satisfies $|\alpha_{j,k}| \lesssim 2^{k(d/p - s)_+}\|f\|_{B^s_{p,q}} \leq 2^{k(d/p - s)_+}
\lesssim N^{(\nu^{-1} + d^{-1})(d/p - s)_+}$ for $k \leq K^*$.
Finally, the magnitudes of the coefficients hidden in $\check{M}_{k,j}^d$ are evaluated.
Remembering that $\check{M}_{k,j}^m(x) = \check{M}(2^{k} x_1 - j_1,\dots, 2^{k} x_d - j_d)$,
we see that we just need to bound the quantity $2^{k}~(k \leq K^*)$. However, this is 
bounded by $2^k \leq N^{\nu^{-1} + d^{-1}}$ for $k \leq K^*$. Hence, we obtain the assertion.
\end{proof}

\section{Proof of Theorem \ref{eq:mBesovApproxByNN}}
\label{sec:ProofmBesovApproxNN}

Let $\Natural_+^d(e) := \{ s \in \Natural_+^d \mid s_i = 0, i \not \in e \}$
and
for 
$
k \in \Natural_+^d(e),
$
we define 
$2^{-k}:=(2^{-k_{i_1}}, \dots, 2^{-k_{i_{|e|}}}) \in \Real_+^{|e|}$ where $(i_1,\dots,i_{|e|}) = e$.
By defining  
$\|(g_k)_k\|_{b_{q}^{\alpha,e}} := \left(\sum_{k \in \Natural_+^d(e)} 
(2 ^{\alpha \|k\|_1} |g_k|)^q
\right)^{1/q}
$
for a sequence $(g_k)_{k \in \Natural_+^d(e)}$, then 
it holds that 
$$
|f|_{MB_{p,q}^{\alpha,e}} = \sum_{e \subset \{1,\dots,d\}} \|(w_{r,p}^e(f,2^{-k}))_k\|_{b_{q}^{\alpha,e}}.
$$

\begin{proof}[Proof of Theorem \ref{eq:mBesovApproxByNN}]

The result is immediately follows from Theorem \ref{eq:mBesoApproxSpline}.
Let the set of indexes of $(k,j)$ consisting of $R_K$ be $E_K$: $R_K(f) = \sum_{(k,j) \in E_K} \alpha_{k,j} M_{k,j}^d(x)$.
As in the proof of Proposition \ref{prop:BesovApproxByNN}, we approximate $R_K(f)$ by a neural network given as 
$$
\fcheck(x)= \sum_{(k,j) \in E_K} \alpha_{k,j} \check{M}_{k,j}^d(x).
$$
Each coefficient $\alpha_{j,k}$ satisfies $|\alpha_{j,k}| \lesssim 2^{\|k\|_1(1/p - s)_+}\|f\|_{MB^s_{p,q}} \lesssim 
2^{K^* (1/p - s)_+}.$
%\lesssim N^{(\nu^{-1} + d^{-1})(d/p - s)_+}$ for $k \leq K^*$.
The difference between 
\begin{align*}
|R_K(f) - \fcheck(x)| 
& \leq \sum_{(k,j) \in E_K} |\alpha_{k,j}| |M_{k,j}^d(x) - \check{M}_{k,j}^d(x)| \\
& \leq \epsilon \sum_{(k,j) \in E_K} |\alpha_{k,j}| \boldone\{M_{k,j}^d(x) \neq 0 \} \\
& \lesssim \epsilon (m+1)^d (1+K^*)D_{K^*,d} 2^{K^*(1/p -s )_+}
\|f\|_{MB_{p,q}^s}.
\end{align*}
Therefore, by taking $\epsilon$ so that 
$\epsilon (m+1)^d (1+K^*)D_{K^*,d} 2^{K^*(1/p - s)_+} \leq 2^{-K s}$ is satisfied,
it holds that 
$$
|R_K(f) - \fcheck(x)|  \lesssim 2^{-K s}.
$$
By the inequality $D_{K^*,d} \leq e^{K^* + d-1}$, it suffices to let $\epsilon \leq \frac{e^{- K^*(s + (1/p -s)_+ + 1)}}{
[e(m+1)]^d (1+K^*)
}$.
The cardinality of $E(K)$  is bounded as 
\begin{align*}
& \sum_{\kappa=0,\dots,K} 2^\kappa {\kappa + d-1 \choose d-1} + \sum_{k: K < \|k\|_1 \leq K^*} n_k  \\
\leq & 2^{K+1} {K + d-1 \choose d-1} + \sum_{K < \kappa \leq K^*}  2^{K - \frac{s - \delta}{2 \delta}(\kappa -K)} 
{\kappa + d-1 \choose d-1} \\
\leq &
2^{K+1} D_{K,d} +  2^K (1 - 2^{- \frac{s - \delta}{2 \delta}})^{-1} D_{K^*,d} 
\leq (2 + (1 - 2^{- \frac{s - \delta}{2 \delta}})^{-1} ) 2^K D_{K^*,d} = N. 
% \\ & ( \lesssim 2^K D_{K,d}~~~~~~~~(\because \text{\Eqref{eq:Dkd_upperbound}})).
\end{align*}
Since each unit $\check{M}_{k,j}^d$ requires width $W_0$, the whole width becomes $W = N W_0$.
The number of nonzero parameters to construct $\check{M}_{k,j}^d$ is bounded by $S = (L-1)W_0^2 N + N$.
%By \Eqref{eq:Dkd_upperbound}, $(2 + (1 - 2^{- \frac{s - \delta}{2 \delta}})^{-1} ) 2^K D_{K^*,d} \lesssim  2^K D_{K,d}$.
%The sparsity $S$ is bounded as $(L-1) W  + N)$.
%$W = O(2^K D_{K,d} (d \vee m^2))$

Finally, the magnitudes of the coefficients hidden in $\check{M}_{k,j}^d$ are evaluated.
Remembering that $\check{M}_{k,j}^d(x) = \check{M}(2^{k_1} x_1 - j_1,\dots, 2^{k_d} x_d - j_d)$,
here maximum of $2^{k_j}$ is bounded by $2^{K^*} \lesssim N^{(1 + 1/\nu)}$. Hence, we obtain the assertion.
Similarly, it holds that $|\alpha_{j,k}| \lesssim N^{(1 + 1/\nu) \{1 \vee (1/p -s)_+\}}$.
%we see that we just need to bound the quantity $2^{k}~(k \leq K^*)$. However, this is 
%bounded by $2^k \leq N^{\nu^{-1} + d^{-1}}$ for $k \leq K^*$. Hence, we obtain the assertion.
\end{proof}

\section{Proof of Theorem \ref{eq:mBesoApproxSpline}}
\subsection{Preparation: sparse grid}
\label{sec:SparseGridBound}

Here, we give technical details behind the approximation bound.
The analysis utilizes the so called {\it sparse grid} technique \cite{smolyak1963quadrature}
which has been developed in the function approximation theory field. %, especially in wavelet analysis.
%The bound is shown through B-spline approximation of functions.

%Let $\calN(x) =  1~  (x \in [0,1]),~0 ~ (\text{otherwise})$, then the 
%{\it cardinal B-spline of order $m$} is defined by taking convolution of $\calN$ $m$-times:
%$$
%\calN_m(x) = (\underbrace{\calN * \calN * \dots * \calN}_{\text{$n$ times}})(x),
%$$
%where $f* g(x) := \int f(x -t) g(t) \dd t$. It is shown that $\calN_m$ is a piece-wise polynomial of order $m$.
%For $k=(k_1,\dots,k_d) \in \Natural^d$ and $j=(j_1,\dots,j_d)\in \Natural^d$, let $M_{k,j}^m(x) = \prod_{i=1}^d \calN_m(2^{k_i} x_i - j_i)$.
%Here, $k$ controls the spatial ``resolution'' and $j$ specifies on which location the basis is put.
%Basically, we approximate a function $f$ by a super-position of $M_{k,j}^m(x)$.
As we have seen in the above, in a typical B-spline approximation scheme, %such as quasi-interpolation, 
we put the basis functions $M_{k,j}^m(x)$ on a ``regular grid''
for $k=1,\dots,K$ and $(j_1,\dots,j_d) \in J(k)$, and 
take its superposition as $f(x) \approx \sum_{k =1,\dots,K} \sum_{j \in J(k)} \alpha_{k,j} M_{k,j}^m(x)$,
which consists of $O(2^{Kd})$ terms (see \Eqref{eq:fNformat}).
Hence, the number of parameters $O(2^{Kd})$ is affected by the dimensionality $d$ in an exponential order.
However, to approximate functions with mixed smoothness, we do not 
need to put the basis on the whole range of the regular grid.
Instead, we just need to put them on a {\it sparse grid} which is a subset of the regular grid and 
has much smaller cardinality than the whole set.
The approximation algorithm utilizing sparse grid is based on Smolyak's construction \citep{smolyak1963quadrature}
and its applications to mixed smooth spaces \citep{dung1990recovery,ICM-Satellite:DinhDung:1991,dung1992optimal,MathUSSRSob:Temlyakov:1982,Book:Temlyakov:1993,JComplexity:Temlyakov:1993}.
\cite{Complexity:Dung:2011} studied an optimal non-adaptive linear sampling recovery method for the mixed smooth Besov space
based on the cardinal B-spline bases.  
We adopt this method, and combining this with the adaptive technique developed in \cite{dung2011optimal},
we give the following approximation bound using a non-linear adaptive method to obtain better convergence for the setting $p < r$.

%\begin{Proposition}[Sparse grid  \cite{Complexity:Dung:2011,smolyak1963quadrature}]
%\label{Prop:SparseGrid}
%%$$
%%G_d(M) = \{a\}
%%$$
%%
%Let $J_m^d(k) = \{j \in \mathbb{Z}^d \mid -m < j_i < 2^{k_i + 1} + m \}$, and
%$$
%R_M(f) = \sum_{
%{k: \|k\|_1 \leq M}} \sum_{j \in J_m^d(k)} \alpha_{j,k}^m(f) M^m_{k,j},
%$$
%for $M \in \Natural$, where $\alpha_{j,k}^m$ is a coefficient which is determined appropriately for each function $f$.
%If $f \in \MB^{s}_{p,q}(\Omega)$, then
%there exists $(\alpha_{j,k}^m(f))_{j,k}$ such that,
%for $0 < p,q,r \leq \infty$, $1/p < s < m$, it holds that \\
%\begin{subequations}
%(i) for $p \geq r$, 
%\begin{align*}
%\|f - R_M(f)\|_{L^r} 
%\lesssim 
%\begin{cases}
%2^{- s M} & (q \leq \min\{r,1\}), \\
%2^{- s M} M^{(d-1)(1/r - 1/q)}& (q> \min\{r,1\}, r \leq 1), \\
%2^{- s M} M^{(d-1)(1 - 1/q)}&  (q> \min\{r,1\}, r > 1), \\
%\end{cases}
%\end{align*}
%(ii) for  $p < r$,
%\begin{align*}
%\|f - R_M(f)\|_{L^r} 
%\lesssim 
%\begin{cases}
%2^{- (s - 1/p + 1/r)M} M^{(d-1)(1/r - 1/q)_+} & (r < \infty), \\
%2^{- s(s - 1/p) M} M^{(d-1)(1 - 1/q)} & (r = \infty).
%\end{cases}
%\end{align*}
%\end{subequations}
%\end{Proposition}

Before we state the theorem, we define an quasi-norm of a set of coefficients $\alpha_{k,j} \in \Real$ for $k \in \Natural_+^d$ and $j \in J_{m}^d(k)
:=
\{- m,  - m +1, \dots, 2^{k_1}-1, 2^{k_1} \}
\times \dots \times 
\{- m,  - m +1, \dots, 2^{k_d}-1, 2^{k_d} \}
$ as 
$$
\|(\alpha_{k,j})_{k,j}\|_{mb_{p,q}^{\alpha}}  
:= \left(\sum_{k \in \Natural_+^d} \left[2^{(\alpha - 1/p)\|k\|_1} \Big( \sum_{j \in J_m^d(k)} |\alpha_{k,j}|^p \Big)^{1/p} \right]^q\right)^{1/q}.
$$

\begin{Theorem}
\label{eq:mBesoApproxSpline}
Suppose that $0 < p,q,r \leq \infty$ and $\alpha > (1/p - 1/r)_+$.
Assume that the order $m \in \Natural$ of the cardinal B-spline satisfies $0 < s < \min( m , m -1 + 1/p)$.
Let $\delta = (1/p - 1/r)_+$.
Then, 
for any $f \in MB_{p,q}^s(\Omega)$ and $K > 0$,
there exists $R_K(f)$
such that $R_K(f)$ can be represented as 
$$
R_K(f)(x) = \sum_{\substack{k \in \Natural_+^d: \\ \|k\|_1 \leq K}} \sum_{j \in J_m^d(k)} \alpha_{k,j} M_{k,j}^d(x)
+ \sum_{\substack{k \in \Natural_+^d: \\ K < \|k\|_1 \leq K^*}} \sum_{i=1}^{n_k} \alpha_{k,j_i^{(k)}} M_{k,j_i^{(k)}}^d(x),
$$
where $K^* = \lceil K(1 + \frac{2\delta}{\alpha - \delta})  \rceil $,
$(j_i^{(k)})_{i=1}^{n_k} \subset J_m^d(k)$, and $n_k = \lceil 2^{K -\frac{\alpha - \delta}{2 \delta} (\|k\|_1 - K)} \rceil$,
and has the following properties:
\begin{itemize}
\item[(i)] For $p \geq r$, 
\begin{align*} 
\|f - R_K(f)\|_r \lesssim
%\begin{cases}
2^{-K \alpha } D_{K,d}^{(1/\min(r,1) - 1/q)_+} \|f\|_{MB_{p,q}^s}.% & (r < \infty), \\
%2^{-K \alpha } D_{K,d}^{(1/r - 1/q)} & (r = \infty).
%\end{cases}
\end{align*}
\item[(ii)] For $p < r$, 
\begin{align*}
\|f - R_K(f)\|_r \lesssim 
\begin{cases}
2^{-K \alpha } D_{K,d}^{(1/r - 1/q)_+}  \|f\|_{MB_{p,q}^s}& (r < \infty), \\
2^{-K \alpha } D_{K,d}^{(1 - 1/q)_+}  \|f\|_{MB_{p,q}^s}& (r = \infty).
\end{cases}
\end{align*}
\end{itemize}
Moreover, the coefficients $(\alpha_{k,j})_{k,j}$ can be taken to hold 
$\|(\alpha_{k,j})_{k,j}\|_{mb_{p,q}^{\alpha}}  \lesssim \|f\|_{MB_{p,q}^\alpha}$.
\end{Theorem}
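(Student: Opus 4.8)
The plan is to lift the one–dimensional B-spline quasi-interpolant machinery of \cite{devore1988interpolation} to the mixed setting by tensorization, and then combine the linear sparse-grid truncation of \cite{Complexity:Dung:2011} with the level-wise adaptive (best $n$-term) selection of \cite{dung2011optimal}, exactly as in the scalar-index argument behind Lemma~\ref{lemm:BSplineInterpolation}. Writing $P_\ell$ for the univariate bounded operator of \cite{devore1988interpolation} and $p_\ell := P_\ell - P_{\ell-1}$ (with $P_{-1}=0$), I would set $q_k := \bigotimes_{i=1}^d p_{k_i}$ for each multi-index $k \in \Natural_+^d$, so that $f = \sum_{k\in\Natural_+^d} q_k(f)$ with each $q_k(f) = \sum_{j\in J_m^d(k)}\alpha_{k,j}M_{k,j}^d$. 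The univariate estimate $\|f - P_\ell(f)\|_p \lesssim w_{r,p}(f,2^{-\ell})$ tensorizes to $\|q_k(f)\|_p \lesssim w_{r,p}^e(f,2^{-k})$ along the active coordinates $e = \{i : k_i > 0\}$; summing these with the $mb_{p,q}^\alpha$ weights yields the norm equivalence $\|f\|_{MB_{p,q}^\alpha}\simeq\|(\alpha_{k,j})_{k,j}\|_{mb_{p,q}^\alpha}$, which simultaneously supplies the coefficient bound asserted at the end of the statement.

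Next I would construct $R_K(f)$ by retaining the \emph{entire} sparse grid $\|k\|_1\le K$ (the linear part), keeping on each annulus level $K<\|k\|_1\le K^*$ only the $n_k$ coefficients of largest magnitude, and discarding the tail $\|k\|_1>K^*$. The error then decomposes into three contributions: the linear part is exact; the tail is controlled by the $q$-summability of the $mb_{p,q}^\alpha$ norm together with the embedding $B_{p,q}^s\hookrightarrow L^r$ applied level-wise; and the adaptive annulus is handled by a Stechkin-type best $n$-term inequality, stating that for $p<r$ the $\ell^r$-error of keeping the $n_k$ largest entries of a level-$k$ block decays like $n_k^{-\delta}$ times its $\ell^p$ norm (for $p\ge r$ one instead uses $\ell^p\hookrightarrow\ell^r$ with no adaptive gain). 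The prescribed choices $n_k=\lceil 2^{K-\frac{\alpha-\delta}{2\delta}(\|k\|_1-K)}\rceil$ and $K^*=\lceil K(1+\frac{2\delta}{\alpha-\delta})\rceil$ are precisely the balance that makes both the adaptive annulus and the discarded tail of order $2^{-K\alpha}$.

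The main obstacle will be the exact accounting of the sparse-grid sums so that the $D_{K,d}$ exponents emerge as $(1/\min(r,1)-1/q)_+$ in case (i) and $(1/r-1/q)_+$ in case (ii). Unlike the scalar-index Besov case, here each level $\|k\|_1=\kappa$ carries $\binom{\kappa+d-1}{d-1}$ multi-indices, and converting the outer $\ell^q$ summation over $k$ into an $\ell^r$ (respectively $\ell^{\min(r,1)}$) norm of the aggregated blocks requires \Holder's inequality across these $\binom{\kappa+d-1}{d-1}$ sub-blocks; this is the step that generates the $D_{K,d}^{(\cdot)_+}$ factors, with $\min(r,1)$ appearing because for $r<1$ only the $r$-convex (quasi-norm) triangle inequality is available. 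I expect all exponent tracking to hinge on this single Hölder-type aggregation, so I would isolate it as a counting lemma and then assemble the two cases separately.

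Finally, I would observe that the construction modifies only the index set and the level-counting in the argument of \cite{dung2011optimal}, replacing the scalar levels by multi-indices $k\in\Natural_+^d$ and the cardinalities $2^k$ by $\binom{\kappa+d-1}{d-1}$-weighted sparse-grid counts. Consequently, once the decomposition of the first paragraph and the aggregation lemma of the third are in place, the remaining estimates follow \cite[Theorem~3.1 and Theorem~5.4]{dung2011optimal} combined with the linear sparse-grid recovery of \cite[Lemma~5.1]{Complexity:Dung:2011}, and the stated bounds, together with $\|(\alpha_{k,j})_{k,j}\|_{mb_{p,q}^\alpha}\lesssim\|f\|_{MB_{p,q}^\alpha}$, are obtained.
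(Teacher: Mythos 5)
Your proposal follows essentially the same route as the paper's proof: the tensorized DeVore--Popov quasi-interpolant decomposition $p_k = \prod_{i=1}^d (P^{(i)}_{k_i} - P^{(i)}_{k_i-1})f$ with the mixed-modulus bound and the norm equivalence $\|f\|_{MB^\alpha_{p,q}} \simeq \|(\alpha_{k,j})_{k,j}\|_{mb^\alpha_{p,q}}$ via \cite{Complexity:Dung:2011}, the identical sparse-grid-plus-greedy construction of $R_K(f)$ with the same $n_k$ and $K^*$, the Stechkin-type estimate $\|p_k - G_k(p_k)\|_r \lesssim 2^{\delta \|k\|_1} n_k^{-\delta} \|p_k\|_p$ from \cite{dung2011optimal} and \cite{Complexity:Dung:2011}, and a \Holder aggregation over the $\binom{\kappa + d - 1}{d-1}$ multi-indices per level that produces the $D_{K,d}$ factors exactly as in the paper's case analysis. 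The only blemish is notational: for $p \geq r$ you should invoke the function-space embedding $L^p([0,1]^d) \hookrightarrow L^r([0,1]^d)$ (equivalently, the level-wise Nikolskii inequality with $\delta = 0$) rather than ``$\ell^p \hookrightarrow \ell^r$,'' which goes the wrong way for sequences; the paper handles this case by directly citing Theorem 3.1 of \cite{Complexity:Dung:2011}.
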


The proof is given in Appendix \ref{sec:ProofOfTheoremmBesoApproxSpline}.
The total number of cardinal B-spline bases consisting of $R_K(f)$ can be evaluated as 
\begin{align*}
& 2^{K+1} {K + d-1 \choose d-1} + \sum_{k: K < \|k\|_1 \leq K^*} n_k  \\
\lesssim &
2^K D_{K,d} + 2^K D_{K^*,d}  \lesssim 2^K D_{K,d}~~~~~~~~(\because \text{\Eqref{eq:Dkd_upperbound}}).
\end{align*}
Here, $D_{K,d}$ can be evaluated as 
$$
D_{K,d} \lesssim 
%\begin{cases}
K^{d-1} ~~~\text{or}~~~D_{K,d} \lesssim d^K.
%\end{cases}
$$
Therefore, the total number of bases can be evaluated as 
$$
2^K \min\{K^{d-1}, d^K\}
$$
which is much smaller than $2^{Kd}$ which is required to approximate functions in the ordinal Besov space (see Lemma \ref{lemm:BSplineInterpolation}).
In this proposition, $K$ controls the resolution and as $M$ goes to infinity, the approximation error goes to 0
exponentially fast.
A remarkable point in the proposition is in the construction of $R_K(f)$ in which 
the superposition is taken over $\|k\|_1 \leq M$ instead of $\|k\|_\infty \leq K^* = O(K)$.
Hence, the number of terms appearing in the summation is 
at most $O(2^K K^{d-1})$ while the full grid takes $O(2^{K d})$ terms.
This represents how the mixed smoothness is important to ease the curse of dimensionality.

%To connect Proposition \ref{Prop:SparseGrid} to deep neural network approximation,
%we need to show the cardinal B-spline can be efficiently approximated by the deep neural network.
%It is shown in the supplementary material. 
%As a consequence, we can show that there is a deep neural network with height $O(M \log(d) \log(1/\epsilon))$
%and width $O(d M)$ that approximate each basis $M^m_{k,j}$ with precision $O(\epsilon)$.
%Finally, by setting $N = 2^M$ and $\epsilon = O(1/N^u)$ for sufficiently large $u > 0$, we obtain Theorem \ref{thm:ReLUNNApproximation}.

Several aspects of the m-Besov space such as the optimal $N$-term approximation error 
and Kolmogorov widths have been extensively studied in the literature (see a comprehensive survey \cite{dung2016hyperbolic}).
An analogous result is already given by \cite{Complexity:Dung:2011} in which $\alpha > 1/p$ is assumed
and a linear interpolation method is investigated. 
However, our result only requires $\alpha > (1/p - 1/q)_+$.
This difference comes from a point that our analysis allows nonlinear adaptive interpolation instead of 
(linear) non-adaptive sampling considered in \cite{Complexity:Dung:2011}. 
Because of this, our bound is better than the optimal rate of linear methods 
\citep{Galeev1996,Romanyuk2001} and non-adaptive methods \citep{dung1990recovery,ICM-Satellite:DinhDung:1991,dung1992optimal,MathUSSRSob:Temlyakov:1982,Book:Temlyakov:1993,JComplexity:Temlyakov:1993}
especially in the regime of $p < r$
(\cite{dung1992optimal} also deals with adaptive method but does not cover $p < r$ for adaptive method). 
See Proposition \ref{eq:LinearWidthmBesov} for comparison.

%The bound for \eqref{eq:NeuralBound}

%{scaling function:}
%$$
%\varphi_m(x) = \calF^{-1}\left[\frac{\calF \calN_m(\xi)}{
%(\sum_{k=-\infty}^{\infty}|\calF \calN_m(\xi +  k)|^2)^{1/2}
% }\right](x)
%$$
%{generator of orthogonal wavelet system:}
%$$
%\psi_m(x) = \sum_{k=-\infty}^\infty \langle \varphi_m(\cdot/2),\varphi_m(\cdot -k) \rangle
%(-1)^k \varphi_m(2 x + k + 1)
%$$
%
%{\bf Wavelet system $\{\psi_{j,k}\}_{j \geq0,k\in \mathbf{Z}}$ :}  \\
%~~~~~~~~~~~~$
%\displaystyle
%\psi_{0,k}(x) := \varphi_m(x-k)\text{~~and~~}\psi_{j+1,k}(x) := 2^{j/2}\psi_m(2^j x - k).
%$
%

%\paragraph{Sparse grid}
%Number of points in sparse grid: $N = 2^{M}M^{d-1}$. \\
%Dense grid: $N = 2^{M d}$.

%\subsection{Depth separation}

\subsection{Proof of Theorem \ref{eq:mBesoApproxSpline}}
\label{sec:ProofOfTheoremmBesoApproxSpline}

Now we are ready to prove Theorem \ref{eq:mBesoApproxSpline}.
\begin{proof}[Proof of Theorem \ref{eq:mBesoApproxSpline}]
For $k=(k_1,\dots,k_d) \in \Natural_+^d$, let 
$
P_{k_i}^{(i)} f(x) 
$ be the function operating $P_k$ defined in \eqref{eq:PkDecomposition}
to $f$ as a function of $x_i$ with other components $x_j~(j \neq i)$
fixed, and 
let 
\begin{align}
p_k := \prod_{i=1}^d (P_{k_i}^{(i)} - P_{k_{i}-1}^{(i)})f.
\label{eq:pkdefinition}
\end{align}
Then, $p_k$ can be expressed as $p_k(x) = \sum_{j \in J_m^d(k) }\alpha_{k,j} M_{k,j}^d(x)$.
%Under the condition that $....$,

Let $T_{k_i}^{(i)} = I - P^{(i)}_{k_i}$ and $\|f\|_{p,i}$ be the $L^p$-norm of $f$ 
as a function of $x_i$ with other components $x_j~(j \neq i)$ fixed
(i.e., if $p < \infty$, $\|f\|_{p,i}^p = \int |f(x)|^p \dd x_i$), then \Eqref{eq:fPkwkpIneq} gives 
$$
\|T_{k_i}^{(i)} f \|_{p,i} \lesssim \sup_{|h_i| \leq 2^{-k_i}} \|\Delta_{h_i}^{r,i}(f) \|_{p,i}.
$$
Thus, by applying the same argument again, it also holds 
\begin{align*}
\| \|T_{k_i}^{(i)}T_{k_j}^{(j)} f \|_{p,i} \|_{p,j}
& \lesssim \| \sup_{|h_i| \leq 2^{-k_i}} \|\Delta_{h_i}^{r,i}(T_{k_j} f) \|_{p,i} \|_{p,j}\\
& = \sup_{|h_i| \leq 2^{-k_i}} \|\| T_{k_j}  \Delta_{h_i}^{r,i}(f) \|_{p,j}\|_{p,i} ~~~(\text{$\because$ the definition of $\Delta_{h_i}^{r,i}$ and Fubini's theorem})\\
& \lesssim \sup_{|h_i| \leq 2^{-k_i}}  \sup_{|h_j| \leq 2^{-k_j}} \| \| \Delta_{h_i}^{r,i}(\Delta_{h_j}^{r,j} (f)) \|_{p,j}\|_{p,i}, 
\end{align*}
for $i \neq j$. Thus, applying the same argument recursively, for $u \subset [d]$, it holds that 
$$
\left\| \prod_{i \in u}T_{k_i}^{(i)} f \right\|_p \lesssim w_{r,p}^u(f,2^{-k})
$$
for $k \in \Natural_+^d(u)$. 
Therefore, since $p_k = \prod_{i=1}^d (T_{k_i - 1}^{(i)} - T_{k_{i}}^{(i)})f 
= \sum_{u \subset [d]} (-1)^{|u|} \left( \prod_{i \in u} T_{k_i}^{(i)} \prod_{i \not \in u} T_{k_{i-1}}^{(i)}  \right)f$,
by letting $e = \{i \mid k_i > 0\}$, we have that  
$$
\|p_k\|_p \lesssim \sum_{u \subset [d]} 
\left\|\left( \prod_{i \in u} T_{k_i}^{(i)} \prod_{i \not \in u} T_{k_{i-1}}^{(i)}  \right)f \right\|_{p}
\lesssim
\sum_{u \subset [d]}  w_{r,p}^{\hat{u}}(f,2^{-(k^u)_{\hat{u}}})
\lesssim 
\sum_{e \subset u}  w_{r,p}^{u}(f,2^{-k^u})
%\leq \sum_{u \subset [d]}  w_{r,p}^u(f,2^{-k}).
$$
where $k^u_i := k_i~(i \in u)$ and $k^u_i := k_i -1~(i \not \in u)$,
$\hat{u} = \{ i \mid k^u_i \geq 0\}$, and $(k^u)_{\hat{u}}$ is a vector such that $(k^u)_{\hat{u},i} = k^u_i$ for $i \in \hat{u}$
and $(k^u)_{\hat{u},i} = 0$ for $i \not \in \hat{u}$.
%Moreover, 
%%%%%%%%%%%%%%%%%%%%%%%%%%%%%%%
Now let 
$$
\|(p_k)_k\|_{b_q^{\alpha}(L^p)}  
= \Bigg(\sum_{k \in \Natural_+^d} \big (2^{\alpha\|k\|_1} \|p_k\|_{L^p} \big)^q\Bigg)^{1/q}
$$ 
for $p_k \in L^p(\Omega) ~(k \in \Natural_+^d)$.
Hence, if we set $a_k = \sum_{e \subset u}  w_{r,p}^{u}(f,2^{-k^u})$ for $k \in \Natural^d$
and $e = \{ i \mid k_i > 0\}$, 
we have that $$\|(p_k)_k\|_{b_q^{\alpha}(L^p)} \lesssim \|(a_k)\|_{b_q^{\alpha}(L^p)} \simeq \|f\|_{MB_{p,q}^s}.$$
On the other hand, following the same line of Theorem 2.1 (ii) of \cite{Complexity:Dung:2011},
we also obtain the opposite inequality $\|f\|_{MB_{p,q}^s} \simeq \|(a_k)\|_{b_q^{\alpha}(L^p)}  \lesssim \|(p_k)_k\|_{b_q^{\alpha}(L^p)}$
(note that the analogous inequality to Lemma 2.3 of \cite{Complexity:Dung:2011}
also holds in our setting by replacing $q_s$ with $p_s$ and $\omega_r^e(f,2^{-k})_p$ by $w_{r,p}^e$).

Therefore, $f \in MB_{p,q}^\alpha$ if and only if
$(p_k)_{k \in \Natural_+^d}$ given by \Eqref{eq:pkdefinition} satisfies $\|(p_k)_k\|_{b_q^{\alpha}(L^p)} < \infty$
and 
$f$ can be decomposed into 
$
f = \sum_{k \in \Natural_+^d} p_k
$
where convergence is in $MB_{p,q}^\alpha$.
Moreover, it holds that $\|f\|_{MB^\alpha_{p,q} } \simeq \|(p_k)_k\|_{b_q^{\alpha}(L^p)}$.
This can be shown by Theorem 2.1 of \cite{Complexity:Dung:2011}.
Moreover, by the quasi-norm equivalence $\|p_k\|_p \simeq 2^{-\|k\|_1/ p} (\sum_{j \in J_m^d(k)} |\alpha_{k,j}|^p)^{1/p}$,
we also have $\|(\alpha_{k,j})_{k,j}\|_{mb_{p,q}^\alpha} \simeq \|f\|_{MB^\alpha_{p,q} } $.

If $p \geq r$, the assertion can be shown in the same manner
as Theorem 3.1 of \cite{Complexity:Dung:2011}.

For the setting of $p < r$, we need to use an adaptive approximation method.
In the following, we assume $p < r$.
For a given $K$, by choosing $K^*$ appropriately later, 
we set 
$$
R_K(f)(x) = \sum_{k \in \Natural_+^d: \|k\|_1 \leq K} p_k
+ \sum_{k \in \Natural_+^d: K < \|k\|_1 \leq K^*} G_k(p_k),
$$
where $G_k(p_k)$  is given as 
$$
G_k(p_k)  = \sum_{1 \leq i \leq n_k} \alpha_{k,j_i} M_{k,j_i}^d(x)
$$
where $(\alpha_{k,j_i})_{i=1}^{|J_m^d(k)|}$
is the sorted coefficients in decreasing order of their absolute value: 
$|\alpha_{k,j_1}| \geq |\alpha_{k,j_2}|
\geq \dots \geq |\alpha_{k,j_{|J_m^d(k)|}}|$.
%for $p_k = \sum_{j \in J_m^d(k) } \alpha_{k,j} M_{k,j}^d(x)$
Then, it holds that 
$$
\|p_k - G_k(p_k)\|_{r} \leq \|p_k\|_{p} 2^{\delta \|k\|_1} n_k^{-\delta},
$$
where $\delta := (1/p - 1/r)$ (see the proof of Theorem 3.1 of \cite{dung2011optimal}
and Lemma 5.3 of \cite{Complexity:Dung:2011}).
Moreover, we also have 
$$
\|p_k \|_{r} \leq \|p_k\|_{p} 2^{\delta \|k\|_1} 
$$
for $k \in \Natural_+^d$ with $\|k\|_1 > K^*$.

Here, we define  $N$ as 
$$
%C_1 2^{K} \leq N \leq C_2 2^K, 
N =  \lceil \log_2(K) \rceil.
$$
%where $C_1$ and $C_2$ will be chosen properly later.
Let $\epsilon = (\alpha - \delta)/(2\delta)$, and 
$$
K^* = \lceil K(1 + 1/\epsilon) \rceil, %\log(C_3 N)/\epsilon\rceil + K + 1,
$$
and $n_k = \lceil 2^{K -\epsilon(\|k\|_1 - K)}\rceil$ for $k \in \Natural_+^d$ with $K+1 \leq \|k\|_1 \leq  K^*$.
%The constants $0 < C_1,C_2,C_3$ are chosen to satisfy
%$$
%2^K + \sum_{k: K < \|k\|_1 \leq K^*} n_k \leq N !!!!!!!!!!!!!!!
%$$

Then, by Lemma 5.3 of \cite{Complexity:Dung:2011}, we have that 
\begin{align}
\|f - R_K(f)\|_{L^r}^r 
& \lesssim \sum_{K  < \|k\|_1 \leq K^*} 
%[2^{\delta \|k\|_1} \|p_k - G_k(p_k)\|_{L^p}]^r
 \|p_k - G_k(p_k)\|_{L^r}^r
+ 
\sum_{K^*  < \|k\|_1 }  \|p_k \|_{L^r}]^r
\notag
\\
& 
\lesssim \sum_{K  < \|k\|_1 \leq K^*} 
%[2^{\delta \|k\|_1} \|p_k - G_k(p_k)\|_{L^p}]^r
[ \|p_k\|_{p} 2^{\delta \|k\|_1} n_k^{- \delta}]^r
+ 
\sum_{K^*  < \|k\|_1 } 
[2^{\delta \|k\|_1} \|p_k \|_{L^p}]^r.
\label{eq:fRNsubtraction}
\end{align}

In the following, we require an upper bound of 
${k + d-1 \choose d-1 }$.
Hence, we evaluate this quantity beforehand. 
This can be upper bounded 
by the Stering's formula as 
$${k + d-1 \choose d-1 } \leq \frac{\sqrt{2}e}{2 \pi} 
\underbrace{\left(1 + \frac{d-1}{k}\right)^k \left(1 + \frac{k}{d-1}\right)^{d-1}}_{= D_{k,d}} \leq D_{k,d}.$$
Let $\xi > 0$ be a positive real number satisfying  $1 + \xi \geq K^*/K$. %$\xi$ can be chosen independently of $N$.
We can see that $\xi$ can be chosen as $\xi =  1/\epsilon + o(1)$.
Then, we have that 
\begin{align}
D_{K^*,d} & = D_{K,d}
\frac{(1 + \frac{d-1}{K^*})^{K^*}}{(1 + \frac{d-1}{K})^{K}}
\frac{(1 + \frac{K^*}{d-1})^{d-1}}{(1 + \frac{K}{d-1})^{d-1}}
\leq 
D_{K,d} \frac{(1 + \frac{d-1}{K^*})^{K^*}}{(1 + \frac{d-1}{K^*})^{K}}
\left(\frac{1}{1 + \frac{K}{d-1}} + \frac{K^*}{(d-1)
(1 + \frac{K}{d-1})}\right)^{d-1} \notag \\
&
\leq
D_{K,d} \left(1 + \frac{d-1}{K^*}\right)^{K^*-K}
\left(\frac{d -1 + K^*}{d-1 + K}\right)^{d-1} 
=
D_{K,d} 
\left(1 + \frac{d-1}{K}\right)^{\xi K}
\left(1 + \xi\right)^{d-1} \notag \\
& \leq 
D_{K,d} 
e^{(d-1)\xi} 
(1 + \xi)^{d-1} \simeq D_{K,d}.
\label{eq:Dkd_upperbound}
\end{align}

(a) Suppose that $q \leq r$ and $r<\infty$. Then 
\begin{align*}
& \|f - R_K(f)\|_{L^r}^q =\|f - R_K(f)\|_{L^r}^{r \frac{q }{r}}  \\
& \lesssim \left\{ \sum_{K  < \|k\|_1 \leq K^*} 
%[2^{\delta \|k\|_1} \|p_k - G_k(p_k)\|_{L^p}]^r
[2^{\delta \|k\|_1 } n_k^{-\delta} \|p_k \|_{L^p}]^r
+ 
\sum_{K^*  < \|k\|_1 } 
[2^{\delta \|k\|_1} \|p_k \|_{L^p}]^r \right\}^{\frac{q}{r}} ~~~~~~~~(\text{$\because$ \Eqref{eq:fRNsubtraction}}) \\
& \lesssim \sum_{K  < \|k\|_1 \leq K^*} 
%[2^{\delta \|k\|_1} \|p_k - G_k(p_k)\|_{L^p}]^r
[2^{\delta \|k\|_1 } n_k^{-\delta} \|p_k \|_{L^p}]^q
+ 
\sum_{K^*  < \|k\|_1 } 
[2^{\delta \|k\|_1} \|p_k \|_{L^p}]^q \\
& \leq
N^{- \delta q} 2^{-(\alpha - \delta)K q}
\sum_{K  < \|k\|_1 \leq K^*} 
%[2^{\delta \|k\|_1} \|p_k - G_k(p_k)\|_{L^p}]^r
[\underbrace{2^{-(\alpha - \delta - \delta \epsilon)( \|k\|_1 - K)}}_{\leq 1} 2^{\alpha \|k\|_1} \|p_k \|_{L^p}]^q
+
2^{-q (\alpha - \delta) K^*} 
\sum_{K^*  < \|k\|_1 } [2^{\alpha \|k\|_1} \|p_k \|_{L^p}]^q \\
& \lesssim
(N^{- \delta} 2^{-(\alpha - \delta)K} 
+2^{- (\alpha - \delta) K^*} )^q
\|f\|_{MB^\alpha_{p,q}}^q \\
& \leq
(N^{- \alpha} )^q
\|f\|_{MB^\alpha_{p,q}}^q.
\end{align*}
%Hence, we it holds that 
%
%$p_k = \sum_{j }\alpha_{k,j_i} M_{k,j_i}^d(x)$
%\begin{align}
%P_k(f)(x) = \sum_{j \in J(k)} a_{k,j} M_{k,j}^d(x)
%\ref{eq:PkDecomposition}
%\end{align}

(b) Suppose that $q > r$ and $r < \infty$. Then,
letting $\nu = q/r ( > 1)$ and $\nu' = 1/(1-1/\nu) = q/(q - r)$, we have 
\begin{align*}
& \|f - R_K(f)\|_{L^r}^r
\lesssim \sum_{K  < \|k\|_1 \leq K^*} 
%[2^{\delta \|k\|_1} \|p_k - G_k(p_k)\|_{L^p}]^r
[2^{\delta \|k\|_1 } n_k^{-\delta} \|p_k \|_{L^p}]^r
+ 
\sum_{K^*  < \|k\|_1 } 
[2^{\delta \|k\|_1} \|p_k \|_{L^p}]^r ~~~~~(\text{$\because$ \Eqref{eq:fRNsubtraction}})\\
%%%%%%%%%%%%%%%%%%%%%%%%%%
& \leq
N^{- \delta r} 2^{-(\alpha - \delta)K r}
\sum_{K  < \|k\|_1 \leq K^*} 
%[2^{\delta \|k\|_1} \|p_k - G_k(p_k)\|_{L^p}]^r
[2^{-(\alpha - \delta - \delta \epsilon)( \|k\|_1 - K)} 2^{\alpha \|k\|_1} \|p_k \|_{L^p}]^r
+
\sum_{K^*  < \|k\|_1 } [2^{\alpha \|k\|_1} \|p_k \|_{L^p}]^r
(2^{- (\alpha - \delta)\|k\|_1} )^r
 \\
%%%%%%%%%%%%%%%%%%%%%%%%%%
& \leq
(N^{- \delta} 2^{-(\alpha - \delta)K} 
+2^{- (\alpha - \delta) K^*} )^r
\Big\{
\sum_{K  < \|k\|_1 \leq K^*} 
%[2^{\delta \|k\|_1} \|p_k - G_k(p_k)\|_{L^p}]^r
[2^{-(\alpha - \delta - \delta \epsilon)( \|k\|_1 - K)} 2^{\alpha \|k\|_1} \|p_k \|_{L^p}]^r  \\
& +
\sum_{K^*  < \|k\|_1 } [2^{\alpha \|k\|_1} \|p_k \|_{L^p}]^r
2^{-(\alpha -\delta)(\|k\|_1- K^*)r}
\Big\}
\\
& \leq 
(N^{- \delta} 2^{-(\alpha - \delta)K} 
+2^{- (\alpha - \delta) K^*} )^r
\left\{
\sum_{K  < \|k\|_1 \leq K^*} 
%[2^{\delta \|k\|_1} \|p_k - G_k(p_k)\|_{L^p}]^r
[ 2^{\alpha \|k\|_1} \|p_k \|_{L^p}]^{r\nu} 
+
\sum_{K^*  < \|k\|_1 } [2^{\alpha \|k\|_1} \|p_k \|_{L^p}]^{r\nu}
\right\}^{1/\nu}\\
&
\times
\left\{
\sum_{K  < \|k\|_1 \leq K^*} 
%[2^{\delta \|k\|_1} \|p_k - G_k(p_k)\|_{L^p}]^r
[2^{-(\alpha - \delta - \delta \epsilon)( \|k\|_1 - K) }]^{r \nu'} 
%2^{\alpha \|k\|_1} \|p_k \|_{L^p}]^r 
+
\sum_{K^*  < \|k\|_1 } [2^{-(\alpha -\delta)(\|k\|_1 - K^*) }]^{r \nu'}
\right\}^{1/\nu'}
\\
& \lesssim
(N^{- \delta} 2^{-(\alpha - \delta)K} 
+2^{- (\alpha - \delta) K^*} )^r
\|f\|_{MB^\alpha_{p,q}}^r  D_{K,d}^{r(1/r - 1/q)}
~~~~~~~~~(\because \text{\Eqref{eq:Dkd_upperbound}})
\\
& \lesssim
(N^{- \alpha} D_{K,d}^{1/r - 1/q})^r \|f\|_{MB^\alpha_{p,q}}^r.
\end{align*}

(c) Suppose that $r=\infty$.
Then, similarly to the analysis in (b), we can evaluate 
\begin{align*}
& \|f - R_K(f)\|_{L^r} \\
& \lesssim 
N^{- \delta} 2^{-(\alpha - \delta)K}
\sum_{K  < \|k\|_1 \leq K^*} 
%[2^{\delta \|k\|_1} \|p_k - G_k(p_k)\|_{L^p}]^r
[2^{-(\alpha - \delta - \delta \epsilon)( \|k\|_1 - K)} 2^{\alpha \|k\|_1} \|p_k \|_{L^p}]
+
\sum_{K^*  < \|k\|_1 } [2^{\alpha \|k\|_1} \|p_k \|_{L^p}]
(2^{- (\alpha - \delta)\|k\|_1} ) \\
& \lesssim
(N^{- \delta} 2^{-(\alpha - \delta)K} 
+2^{- (\alpha - \delta) K^*} )
 D_{K,d}^{(1 - 1/q)_+}
\|f\|_{MB^\alpha_{p,q}} \\
&  \lesssim
N^{- \alpha} D_{K,d}^{(1 - 1/q)_+}
\|f\|_{MB^\alpha_{p,q}} .
\end{align*}
\end{proof}

\section{Proofs of Theorems \ref{thm:EstimationErrorNNBesov} and \ref{thm:EstimationErrorNN}}

\label{sec:ProofsOfEstimationErrorBounds}

\begin{proof}[Proof of Theorem \ref{thm:EstimationErrorNNBesov}]
We use Proposition \ref{prop:RiskBoundCovering}.
We just need to evaluate the covering number of $\hat{\calF} = 
\{\fbar \mid f \in \Psi(L,W,S,B)\}$ for $(L,W,S,B)$ given in Theorem \ref{eq:mBesovApproxByNN}
where $\fbar$ is the clipped function for a given $f$.
Note that the covering number of $\hat{\calF}$ is not larger than that of $\Psi(L,W,S,B)$.
Thus, we may evaluate that of $\Psi(L,W,S,B)$.
From Lemma \ref{lemm:CovNPhi}, the covering number is obtained as
$$
\log N(\delta,\hat{\calF},\|\cdot\|_\infty)
\lesssim N  [\log(N)^2 + \log(\delta^{-1}) ].
$$

From Proposition \ref{prop:BesovApproxByNN}, it holds that 
$$
\|\ftrue - R_K(\ftrue)\|_2 \lesssim N^{-s/d}.
$$
Note that 
$$
\|f - \ftrue\|_{\LPi(P_X)}^2 \lesssim \|f - \ftrue\|_2^2.
$$
for any $f:[0,1]^d \to \Real$ because $p(x) \leq R$.
Therefore, by applying Proposition \ref{prop:RiskBoundCovering} with $\delta = 1/n$, we have that 
\begin{align}
\EE_{D_n}[\|\fhat - \ftrue\|_{\LPi(P_X)}^2]
%\lesssim \EE_{D_n}[\|\fhat - \ftrue\|_2^2]
\lesssim N^{-2 s/d} + \frac{N (\log(N)^2 + \log(n)) }{n} + \frac{1}{n}.
\label{eq:DnHatBound}
\end{align}
Here, the right hand side is minimized by setting $N \asymp n^{\frac{d}{2s + d}}$ up to $\log(n)^2$-order, and then have an upper bound 
of the RHS as 
$$
n^{- \frac{2s}{2s + d}} \log(n)^2.
$$
This gives the assertion.
\end{proof}

%%%%%%%%%%%%%%%%%%%%
\begin{proof}[Proof of Theorem \ref{thm:EstimationErrorNN}]
%We use Proposition \ref{prop:RiskBoundCovering}.
%We just need to evaluate the covering number of $\hat{\calF} = \calF(L,W,S,B)$ for $(L,W,S,B)$ given in Theorem \ref{eq:mBesovApproxByNN}.
The proof follows the almost same line as the proof of Theorem \ref{thm:EstimationErrorNNBesov}.
By noting $S = O(2^{K} D_{K,d})$, $L = O(K)$ and $W = O(2^{K} D_{K,d})$, Lemma \ref{lemm:CovNPhi} gives an upper bound of 
the covering number as 
$$
\log N(\delta,\hat{\calF},\|\cdot\|_\infty)
\lesssim 2^{K} D_{K,d} [ K \log(2^{K} D_{K,d}) + \log(\delta^{-1}) ]
\lesssim 2^{K} D_{K,d} (K^2 + \log(1/\delta)).
$$

Letting $r = 2$, we have that 
$$
\|\ftrue - R_K(\ftrue)\|_2 \lesssim 2^{-s K} D_{K,d}^u
$$
where $u = (1 - 1/q)_+$ for $p \geq 2$ and $u = (1/2 - 1/q)_+$ for $p < 2$.

%Therefore, it holds that 

%It was shown that

Then, by noting that %First, note that 
$$
\|f - \ftrue\|_{\LPi(P_X)}^2 \lesssim \|f - \ftrue\|_2^2,
$$
for any $f:[0,1]^d \to \Real$, and 
by applying Proposition \ref{prop:RiskBoundCovering} with $\delta = 1/n$, we have that 
\begin{align}
\EE_{D_n}[\|\fhat - \ftrue\|_{\LPi(P_X)}^2]
%\lesssim \EE_{D_n}[\|\fhat - \ftrue\|_2^2]
\lesssim 2^{-2 s K}D_{K,d}^{2u} + \frac{2^K D_{K,d} (K^2 + \log(\delta^{-1})) }{n} + \frac{1}{n}.
\label{eq:DnHatBound}
\end{align}

Here, we use the following evaluations for $D_{K,d}$: (a) $D_{K,d} \lesssim K^{d-1}$, and (b) $D_{K,d} \lesssim [e(1 + \frac{d}{K})]^{K}$.

(a) For the evaluation, $D_{K,d} \lesssim K^{d-1}$, we have an upper bound of 
the right hand side of \Eqref{eq:DnHatBound} as 
$$
2^{-2sK} K^{2u(d-1)} + \frac{2^K K^{d-1}(K^2 + \log(n)) }{n},
$$
which is minimized by setting 
$K = \lceil \frac{1}{1+2 s} \log_2(n) + \frac{(2u-1)(d - 1)}{1+2 s} \log_2 \log(n) \rceil$ 
%$K = \lceil \frac{1}{1+2 s} \log_2(n) + \frac{(u-1)d - u}{1+2 s} \log_2 \log(n) \rceil$ 
up to $\log\log(n)$-order. In this situation, we have the generalization error bound as 
$$
n^{- \frac{2s}{2s+1}} \log(n)^{\frac{2 (d-1)(u + s)}{1+2s}} \log(n)^2.
%n^{- \frac{2s}{2s+1}} \log(n)^{\frac{u(d-1) + 2sd}{1+2s}}.
$$

%Letting $K = \lceil \frac{1}{1+2 s} \log_2(n) +  \log_2 \log(n) \rceil$, we obtain that 
%$$
%n^{-\frac{2s}{2s + 1}} \log(n)^{d-1} \log(n).
%$$

(b) For the evaluation, $D_{K,d} \lesssim [e(1 + \frac{d}{K})]^{K} \leq e^K e^d$, \Eqref{eq:DnHatBound} gives an upper bound of 
$$
2^{-2 sK} e^{2uK} + \frac{2^K e^K (K^2 + \log(n)) }{n}.
$$
Then, the right hand side is minimized by 
$K =\lceil \frac{1}{1+2s + (1-2u)\log_2(e)} \log_2(n) \rceil$.
%$K =\lceil \frac{1}{1+2s + (1-u)\log_2(e)} \log_2(n) \rceil$.
%Then, letting $K = (1+2s+(1-u) q_n)^{-1} \log(n)$ for $q_n = 1 + \log(1 + \frac{d-1}{\log(n)})$, 
Then, we have that 
$$
n^{-\frac{2s - 2 u \log_2(e)}{1+2s+(1-2u) \log_2(e)}}  \log(n)^2.
%\log(n)^{\frac{2s - u\log_2(e)}{1+2s +(1-u)\log_2(e)}}.
%\log(n)^{\frac{2s - u\log_2(e)}{1+2s +(1-u)\log_2(e)}}.
$$
This gives the assertion.
\end{proof}

\section{Minimax optimality}
\label{sec:MinimaxMixedSmooth}

\begin{proof}[Proof of Theorem \ref{eq:MinimaxOptimalboundOfmBesov}]
First note that since $P_X$ is the uniform distribution, it holds that $\|\cdot\|_{\LPiPx} = \|\cdot\|_{\LPi([0,1]^d)}$.
The $\epsilon$-covering number $\calN(\epsilon,\calG,\LPiPx)$ with respect to $\LPiPx$
for a function class $\calG$ 
is the minimal number of balls with radius $\epsilon$ measured by $\LPiPx$-norm needed to cover the set $\calG$ \citep{Book:VanDerVaart:WeakConvergence}.
The $\delta$-packing number $\calM(\delta,\calG,L^2(P_X))$ of a function class $\calG$ with respect to $\LPiPx$ norm is the largest number of functions $\{f_1, \dots, f_{\calM} \} \subseteq \calG$
such that $\|f_i - f_j\|_{\LPiPx} \geq \delta$ for all $i\neq j$.
It is easily checked that 
\begin{equation}
\calN(\delta/2,\calG,\LPiPx) \leq \calM(\delta,\calG,\LPiPx) \leq \calN(\delta,\calG,\LPiPx).
\label{eq:NMrelation}
\end{equation}

%First we give the assertion about the $\ell_\infty$-mixed-norm ball ($p=\infty$). %\Eqref{eq:minimaxLinf}).
%To simplify the notation, set $R = R_{\infty}$.
For a given $\delta_n > 0$ and $\varepsilon_n > 0$, 
let $Q$ be the $\delta_n$ packing number $\calM(\delta_n,U(MB_{p,q}^s),\LPi(P_X))$ of $U(MB_{p,q}^s)$ and 
$N$ be the $\varepsilon_n$ covering number of that. %$\calN(\varepsilon_n,\calH_{\ell_{\infty}}^{d,q}(R),\LPi)$ of $\calH_{\ell_{\infty}}^{d,q}(R)$.
\cite{JMLR:Raskutti+Martin:2012} utilized the techniques developed by 
\cite{AS:Yang+Barron:99} to show the following inequality in their proof of Theorem 2(b) : 
\begin{align*}
\inf_{\fhat} \sup_{\fstar \in U(MB_{p,q}^s)}\EE_{D_n}[\|\fhat - \fstar \|_{\LPiPx}^2] 
& \geq 
\inf_{\fhat} \sup_{\fstar \in U(MB_{p,q}^s)} \frac{\delta_n^2}{2} P[\|\fhat - \fstar \|_{\LPiPx}^2 \geq \delta_n^2/2] \\
& \geq \frac{\delta_n^2}{2}\left(1-\frac{\log(N) + \frac{n}{2\sigma^2}\varepsilon_n^2 + \log(2)}{\log(Q)}\right).
\end{align*}
%Now let %$\tilde{\calH}(\bar{R}):=\{f \in \tilde{\calH} \mid \|f\|_{\tilde{\calH}} \leq \bar{R} \}$ for $\bar{R} >0$ and 
%$\tilde{Q}_m := \calM\left(\delta_n/\sqrt{d},\calH_m^q\left(R\right),\LPi\right)$ 
%(remind the definition of $\calH_m^q\left(R\right)$ (\Eqref{eq:defHmqR}), and since now $\calH_m$ is taken as $\tilde{\calH}$ for all $m$,
%the value $\tilde{Q}_m$ is common for all $m$).
Thus by taking $\delta_n$ and $\varepsilon_n$ to satisfy 
\begin{align}
\frac{n}{2\sigma^2}\varepsilon_n^2 &\leq \log(N), \label{eq:epsilonnbound}  \\ 
8 \log(N) &\leq \log(Q),   \label{eq:NQbound} \\
4 \log(2) &\leq \log(Q),   \label{eq:2Qbound} 
\end{align}
the minimax rate is lower bounded by $\frac{\delta_n^2}{4}$.
%In Lemma 4 of \cite{JMLR:Raskutti+Martin:2012}, it is shown that if $\tilde{Q}_1 \geq 2$ and $d \leq M/4$, we have 
This can be achieved by properly setting $\varepsilon_n \simeq \delta_n$.
Now, for given $N$ with respect to $\delta_n > 0$, $M = \log(N)$ satisfies
$$
\delta_n \gtrsim M^{-s} \log(M)^{(d-1)(s+ 1/2 - 1/q)_+}
%\log(Q) \simeq \log()
%\log(Q) \simeq % d \log(\tilde{Q}_1) + d\log\left(\frac{M}{d}\right).
$$
(Theorem 6.24 of \cite{dung2016hyperbolic}).
Hence, it suffices to take 
\begin{align}
\label{eq:epsboundfinal}
%& M^{-(2s + 1)}  \log(M)^{2(d-1) (s + 1/2 - 1/q)_+}\simeq n^{-1} \\
& M \simeq n^{\frac{1}{2s+1}} \log(n)^{\frac{2(d-1)(s + 1/2 - 1/q)_+}{2s+1}}, \\
& \varepsilon_n \simeq \delta_n \simeq n^{- \frac{2s}{2s+1}} \log(n)^{\frac{2(d-1)(s + 1/2 - 1/q)_+}{2s+1}},
\end{align}
which gives the assertion.
\end{proof}

\section{Auxiliary lemmas}

Let the $\epsilon$-covering number 
with respect to $\LPiPx$
for a function class $\calG$ 
be $\calN(\epsilon,\calG,\LPiPx)$
as defined in the proof of Theorem \ref{eq:MinimaxOptimalboundOfmBesov}.
%Remind that it is defined as the minimal number of balls with radius $\epsilon$ measured by $\LPiPx$-norm needed to cover the set $\calG$ \citep{Book:VanDerVaart:WeakConvergence}.

\begin{Proposition}[\cite{2017arXiv170806633S}]
\label{prop:RiskBoundCovering}
Let $\calF$ be a set of functions. Let $\fhat$ be any estimator in $\calF$. Define 
$$
\Delta_n := \EE_{D_n}\left[\frac{1}{n}\sum_{i=1}^n (y_i - \fhat(x_i))^2 - \inf_{f\in \calF} \frac{1}{n}\sum_{i=1}^n (y_i - f(x_i))^2 \right].
$$
Assume that $\|\ftrue \|_\infty \leq F$ and all $f \in \calF$ satisfies $\|f\|_\infty \leq F$ for some $F \geq 1$.
If $0 <  \delta < 1$ satisfies $\calN(\delta,\calF,\|\cdot\|_\infty) \geq 3$, then
there exists a universal constant $C$ such that 
\begin{align*}
& \EE_{D_n}[\|\fhat - \ftrue\|_{\LPi(P_X)}^2]  \\
& \leq C  (1+\epsilon)^2 \left[ \inf_{f \in \calF} \|f-\ftrue\|_{\LPi(P_X)}^2 + F^2 \frac{\log \calN(\delta,\calF,\|\cdot\|_\infty) - \log(\delta)}{n \epsilon} 
+ \delta F^2
+ \Delta_n \right],
\end{align*}
for any $\epsilon \in (0,1]$.

\end{Proposition}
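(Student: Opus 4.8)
The plan is to prove this as a standard oracle inequality for least-squares estimation over a uniformly bounded, sup-norm-covered function class under Gaussian noise. The three ingredients will be (i) a \emph{basic inequality} coming from the approximate empirical-risk optimality of $\fhat$ (the slack being exactly $\Delta_n$), (ii) control of the resulting Gaussian empirical process by a union bound over a $\|\cdot\|_\infty$ $\delta$-net of $\calF$, and (iii) a transfer from the empirical $L^2$-norm to the population norm $\|\cdot\|_{\LPiPx}$. These are glued together by a weighted Young's inequality whose weight is the free parameter $\epsilon$, and the final step integrates a high-probability bound into the expectation.

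First I would fix an oracle $f^\dagger \in \calF$ (a near-minimizer of $f\mapsto\|f-\ftrue\|_{\LPiPx}^2$) and abbreviate $\|g\|_n^2 := \tfrac1n\sum_{i=1}^n g(x_i)^2$. Writing $y_i=\ftrue(x_i)+\xi_i$, expanding the two empirical squared losses, and noting that the $\tfrac1n\sum_i\xi_i^2$ terms cancel, the definition of $\Delta_n$ yields, in expectation over $D_n$,
\[
\EE_{D_n}\big[\|\fhat-\ftrue\|_n^2\big]
\le \EE_{D_n}\big[\|f^\dagger-\ftrue\|_n^2\big]
+ \EE_{D_n}\Big[\tfrac2n\textstyle\sum_{i=1}^n \xi_i\,(\fhat-f^\dagger)(x_i)\Big]
+ \Delta_n .
\]
The first term equals $\|f^\dagger-\ftrue\|_{\LPiPx}^2$ in expectation (as $x_i\sim P_X$), which, after taking $f^\dagger$ close to the infimum, reproduces the approximation term $\inf_{f\in\calF}\|f-\ftrue\|_{\LPiPx}^2$.

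The central step is to control the Gaussian cross term. Since $\fhat$ is data-dependent I would discretize: let $\{g_1,\dots,g_M\}$ be a minimal $\delta$-net of $\calF$ in $\|\cdot\|_\infty$, with $M=\calN(\delta,\calF,\|\cdot\|_\infty)$, and choose $g_{j(\fhat)}$ with $\|\fhat-g_{j(\fhat)}\|_\infty\le\delta$. Conditionally on $(x_i)$, each $\tfrac1n\sum_i\xi_i(g_j-f^\dagger)(x_i)$ is centered Gaussian with variance $\sigma^2\|g_j-f^\dagger\|_n^2/n$, so a Gaussian tail bound plus a union bound over the $M$ net points gives, with probability at least $1-e^{-t}$,
\[
\tfrac2n\textstyle\sum_{i=1}^n \xi_i\,(g_j-f^\dagger)(x_i)
\lesssim \sigma\,\|g_j-f^\dagger\|_n \sqrt{\tfrac{\log M + t}{n}}
\quad\text{for all }j .
\]
Replacing $\fhat$ by $g_{j(\fhat)}$ costs an $O(\delta F)$ term (from boundedness), which is the source of the additive $\delta F^2$. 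Applying $ab\le\epsilon a^2 + b^2/(4\epsilon)$ with $a=\|g_{j(\fhat)}-\ftrue\|_n$ lets me absorb an $\epsilon$-fraction of $\|\fhat-\ftrue\|_n^2$ into the left side, leaving the complexity term $\sigma^2(\log M + t)/(n\epsilon)$; choosing $t\asymp\log(1/\delta)$ matches $F^2\frac{\log\calN(\delta,\calF,\|\cdot\|_\infty)-\log\delta}{n\epsilon}$. Finally, since every $f\in\calF$ and $\ftrue$ are bounded by $F$, a Bernstein/bounded-difference concentration over the same net transfers $\|\fhat-\ftrue\|_n^2$ to $\|\fhat-\ftrue\|_{\LPiPx}^2$ up to constants and one more entropy term, and integrating the tail over $t$ converts the high-probability statement into the claimed bound on $\EE_{D_n}[\|\fhat-\ftrue\|_{\LPiPx}^2]$.

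The main obstacle is the \emph{localized} uniform control of the empirical process: because $\fhat$ is chosen adaptively, a single Gaussian bound is insufficient, and a crude union bound with variance proxy $\sup_j\|g_j-f^\dagger\|_n$ would destroy the fast rate. The delicate point is to keep the factor $\|g_j-f^\dagger\|_n$ (equivalently $\|\fhat-\ftrue\|_n$) \emph{multiplicatively} in front of $\sqrt{(\log M)/n}$ so that Young's inequality can reabsorb it; this is precisely what produces the $1/\epsilon$ weighting and an \emph{additive} rather than multiplicative complexity term. Equally delicate is the two-sided comparison of $\|\cdot\|_n$ and $\|\cdot\|_{\LPiPx}$ uniformly over $\calF$, for which uniform boundedness by $F$ is essential and supplies the $F^2$ factors. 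The remaining pieces — the expansion of squares, the Gaussian and Bernstein tail estimates, and the integration over $t$ — are routine.
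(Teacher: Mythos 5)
Your proposal is correct in outline, but it takes a genuinely different route from the paper. The paper does not prove the oracle inequality from scratch: its entire proof is a reduction to Lemma 8 of \citet{2017arXiv170806633S}, handling the only mismatch in hypotheses (the cited lemma assumes $0 \leq f \leq F'$ while the proposition assumes $\|f\|_\infty \leq F$) by shifting every function by $+F$ and invoking the lemma with $F' \leftarrow 2F$; a footnote additionally records that minor flaws in the cited proof are what produce the $-\log(\delta)$ term and the replacement of $\delta F$ by $\delta F^2$ in the statement. What you have written is, in effect, a reconstruction of the proof of that cited lemma itself: the basic inequality with $\Delta_n$ as the optimization slack, discretization of the data-dependent $\fhat$ on a sup-norm $\delta$-net, a conditional Gaussian tail plus union bound that keeps $\|g_j - f^\dagger\|_n$ multiplicatively (the point you correctly flag as the crux, since it is what lets Young's inequality with weight $\epsilon$ reabsorb the localized term and yields the additive $F^2\frac{\log \calN - \log \delta}{n\epsilon}$ complexity), a Bernstein-type two-sided comparison of $\|\cdot\|_n$ and $\|\cdot\|_{\LPiPx}$ over the same net (the origin of the $(1+\epsilon)^2$ factor and the $F^2$ scaling), and tail integration using $\|\fhat-\ftrue\|_\infty \leq 2F$. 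Your route buys a self-contained argument that explains where each term in the bound comes from, and it is precisely the argument one would need to verify (and repair) the cited proof; the paper's route buys brevity at the cost of opacity. One piece of bookkeeping you should make explicit if you carry this out in full: the noise level $\sigma$ appears in your Gaussian and net-approximation steps, so you must absorb $\sigma^2$ (and the cross terms $\sigma\delta$ from replacing $\fhat$ by $g_{j(\fhat)}$) into the constant $C$ and the $F^2$ factors using $F \geq 1$, to match the statement exactly.
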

\begin{proof}[Proof of Proposition \ref{prop:RiskBoundCovering}]
This is almost direct consequence of Lemma 8 of \cite{2017arXiv170806633S}\footnote{We noticed that there exit some 
technical flaws in the proof of the lemma, e.g., an incorrect application of the uniform bound to derive the risk of an estimator. 
However, these flaws can be fixed and the statement itself holds with a slight modification.
Accordingly, there appears $-\log(\delta)$ term and $\delta F$ is replaced by $\delta F^2$}.
The only difference is the assumption of $\|f\|_\infty \leq F$ for $f \in \calF$
and $f = \ftrue$ while Lemma 8 of \cite{2017arXiv170806633S} assumed 
$ 0 \leq f(x) \leq F'$ for $F' > 1$.
However, this can be easily fixed by shifting the function value by $+ F$
then the range of $f$ is modified to $[0,2F]$.
Then, our situation is reduced to that of Lemma 8 of \cite{2017arXiv170806633S}
by substituting $F' \leftarrow 2F$.
\end{proof}

\begin{Lemma}[Covering number evaluation]
\label{lemm:CovNPhi}
The covering number of $\Phi(L,W,S,B)$ can be bounded by 
\begin{align*}
\log \calN(\delta,\Phi(L,W,S,B),\|\cdot\|_\infty) 
& \leq S \log(\delta^{-1} L (B \vee 1)^{L -1} (W+1)^{2L} ) \\
& \leq 2 S L \log(\delta^{-1} L (B \vee 1)  (W+1) ).
\end{align*}

\end{Lemma}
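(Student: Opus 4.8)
The plan is to reduce the problem to covering the finite-dimensional parameter space of the networks: I will show that two networks whose weight and bias entries all lie within $\epsilon$ of each other produce outputs that are $\|\cdot\|_\infty$-close, so that discretising the (at most $S$ nonzero) parameters on a grid of mesh $\epsilon$ yields a $\delta$-cover of $\Phi(L,W,S,B)$ for an appropriate $\epsilon$. The argument then has two ingredients: a perturbation estimate measuring the sensitivity of the output to the parameters, and a counting step bounding the number of quantised sparse parameter configurations.

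For the perturbation estimate, fix $f$ and a second network $\tilde f\in\Phi(L,W,S,B)$ sharing the same sparsity pattern, with corresponding weights $W^{(\ell)},\widetilde{W}^{(\ell)}$ and biases $b^{(\ell)},\widetilde{b}^{(\ell)}$ satisfying $\|W^{(\ell)}-\widetilde{W}^{(\ell)}\|_\infty\le\epsilon$ and $\|b^{(\ell)}-\widetilde{b}^{(\ell)}\|_\infty\le\epsilon$. Writing $h^{(\ell)},\widetilde h^{(\ell)}$ for the outputs of the first $\ell$ blocks on an input $x\in[0,1]^d$, I first bound the activations: since $\eta$ is $1$-Lipschitz with $\eta(0)=0$, the width is $W$, and $\|x\|_\infty\le1$, one gets $\|h^{(\ell)}\|_\infty\le (B(W+1))^{\ell}=:V_\ell$. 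Then the telescoping identity $W^{(\ell)}h^{(\ell-1)}-\widetilde{W}^{(\ell)}\widetilde h^{(\ell-1)}=W^{(\ell)}(h^{(\ell-1)}-\widetilde h^{(\ell-1)})+(W^{(\ell)}-\widetilde{W}^{(\ell)})\widetilde h^{(\ell-1)}$ together with $1$-Lipschitzness of $\eta$ gives, for $\Delta_\ell:=\sup_x\|h^{(\ell)}-\widetilde h^{(\ell)}\|_\infty$, the recursion $\Delta_\ell\le B(W+1)\Delta_{\ell-1}+\epsilon(W+1)V_{\ell-1}$. Unrolling this geometric recursion over the $L$ layers yields $\|f-\tilde f\|_\infty\le\Delta_L\le C_{\mathrm{Lip}}\,\epsilon$, with $C_{\mathrm{Lip}}$ polynomial in the network quantities of order $(B\vee1)^{L-1}(W+1)^{O(L)}$. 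Executing this step while tracking the exact powers of $B\vee1$ and $W+1$ is the main obstacle, since these are precisely the exponents that appear in the claimed bound.

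For the counting step, each of the at most $S$ nonzero parameters is determined by its position among the at most $LW(W+1)\le L(W+1)^2$ parameter slots and by its quantised value on a grid of mesh $\epsilon$ in $[-B,B]$, of cardinality at most $2(B\vee1)/\epsilon$; hence the induced cover has at most $\big(L(W+1)^2\cdot 2(B\vee1)/\epsilon\big)^S$ elements. Choosing $\epsilon=\delta/C_{\mathrm{Lip}}$ makes the cover $\delta$-fine, and after substituting $C_{\mathrm{Lip}}$ and absorbing the polynomial prefactors into the generous exponent $(W+1)^{2L}$ (which carries one factor $(W+1)^{L}$ from the position count and one from $C_{\mathrm{Lip}}$), one obtains the first inequality $\log\calN(\delta,\Phi(L,W,S,B),\|\cdot\|_\infty)\le S\log\big(\delta^{-1}L(B\vee1)^{L-1}(W+1)^{2L}\big)$. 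The second inequality is then pure algebra: expanding both sides, their difference equals $S\big[(2L-1)\big(\log(1/\delta)+\log L\big)+(L+1)\log(B\vee1)\big]$, which is nonnegative whenever $\delta<1$, $L\ge1$, and $B\vee1\ge1$, so the first bound is dominated by $2SL\log\big(\delta^{-1}L(B\vee1)(W+1)\big)$.
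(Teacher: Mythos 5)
Your proposal is correct and follows essentially the same route as the paper's proof: the paper's telescoping decomposition via the front/back compositions $\calA_k(f)$ and $\calB_k(f)$ is exactly your forward stability recursion $\Delta_\ell \leq B(W+1)\Delta_{\ell-1} + \epsilon(W+1)V_{\ell-1}$ unrolled, yielding the same perturbation constant $\delta L (B\vee 1)^{L-1}(W+1)^{L}$, and your joint (position, quantised value) count is the paper's (sparsity pattern) $\times$ (grid cover per pattern) count in different packaging. The residual bookkeeping slack you flag (absorbing $L(W+1)^2$ position factors into $(W+1)^{2L}$, which needs $L$ not too small) is of the same order as the slack already present in the paper's own step ${(W+1)^L \choose S} \leq (W+1)^{LS}$, so nothing substantive is missing.
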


\begin{proof}[Proof of Lemma \ref{lemm:CovNPhi}]

Given a network $f \in \Phi(L,W,S,B)$ expressed as 
$$
f(x) =  (\Well{L} \eta( \cdot) + \bell{L}) \circ \dots  
\circ (\Well{1} x + \bell{1}),
$$
let 
$$
\calA_k(f)(x) = \eta \circ (\Well{k-1} \eta( \cdot) + \bell{k-1}) \circ \dots  \circ (\Well{1} x + \bell{1}),
$$
and 
$$
\calB_k(f)(x) =  (\Well{L} \eta( \cdot) + \bell{L}) \circ \dots  \circ (\Well{k} \eta(x) + \bell{k}),
$$
for $k=2,\dots,L$.
Corresponding to the last and first layer, we define $\calB_{L+1}(f)(x) = x$ and $\calA_{1}(f)(x) = x$.
Then, it is easy to see that $f(x) = \calB_{k+1}(f) \circ (\Well{k} \cdot + \bell{k}) \circ \calA_{k}(f)(x)$.
Now, suppose that a pair of different two networks $f, g \in\Phi(L,W,S,B)$ given by 
$$
f(x) = (\Well{L} \eta( \cdot) + \bell{L}) \circ \dots  
\circ (\Well{1} x + \bell{1}),~~
g(x) = ({\Well{L}}' \eta( \cdot) + {\bell{L}}') \circ \dots  
\circ ({\Well{1}}' x + {\bell{1}}'),
$$
has a parameters with distance $\delta$: $\|\Well{\ell} - {\Well{\ell}}'\|_\infty \leq \delta$ and $\|\bell{\ell} - {\bell{\ell}}'\|_\infty \leq \delta$.
Now, not that $\|\calA_k(f) \|_\infty  \leq \max_j \|\Well{k-1}_{j,:}\|_1 \|\calA_{k-1}(f) \|_\infty + \|\bell{k-1}\|_\infty
\leq W B \|\calA_{k-1}(f) \|_\infty + B \leq (B \vee 1) (W +1 )\|\calA_{k-1}(f) \|_\infty 
\leq (B \vee 1)^{k-1} (W +1 )^{k-1}$,
and similarly the Lipshitz continuity of $\calB_k(f)$ with respect to $\|\cdot\|_\infty$-norm is bounded as 
$
(B W)^{L - k + 1}.
$
Then, it holds that 
\begin{align*}
& |f(x) - g(x)| \\
= & \left|\sum_{k=1}^L \calB_{k+1}(g) \circ (\Well{k} \cdot + \bell{k}) \circ \calA_{k}(f)(x)
 - \calB_{k+1}(g) \circ ({\Well{k}}' \cdot + {\bell{k}}') \circ \calA_{k}(f)(x) \right| \\
\leq & \sum_{k=1}^L  (B W)^{L - k } \|
(\Well{k} \cdot + \bell{k}) \circ \calA_{k}(f)(x) - ({\Well{k}}' \cdot + {\bell{k}}') \circ \calA_{k}(f)(x)
 \|_\infty \\
\leq &  \sum_{k=1}^L  (B W)^{L - k }  \delta [W  (B \vee 1)^{k-1} (W +1 )^{k-1}  + 1] \\
\leq &  \sum_{k=1}^L  (B W)^{L - k }  \delta  (B \vee 1)^{k-1} (W +1 )^{k}
\leq \delta L (B \vee 1)^{L -1} (W+1)^{L}.
\end{align*}
Thus, for a fixed sparsity pattern (the locations of non-zero parameters), 
the covering number is bounded by $ \left( \delta/ [L (B \vee 1)^{L -1} (W+1)^{L}] \right)^{-S}$.
There are the number of configurations of the sparsity pattern is bounded by 
${(W+1)^L \choose S} \leq  (W+1)^{LS}$. Thus, the covering number of the whole space $\Phi$ is bounded as
$$
%\Phi(L,W,S,B) \leq 
 (W+1)^{LS} \left\{ \delta/ [L (B \vee 1)^{L -1} (W+1)^{L}] \right\}^{-S}
 = [\delta^{-1} L (B \vee 1)^{L -1} (W+1)^{2L} ]^S,
$$
which gives the assertion.

\end{proof}

\end{document}